\newtheorem{theorem}{Theorem}
\newtheorem{remark}{Remark}
\newtheorem{remark_appendix}{Remark}
\newtheorem{definition}{Definition}
\newtheorem{corollary}{Corollary}
\newtheorem{lemma}{Lemma}
\title{Separable Operator Networks}
\author{\name Xinling Yu$^{*}$ \email xyu644@ucsb.edu \\
      \addr 
      University of California, Santa Barbara
      \AND
      \name Sean Hooten$^{*}$ \email sean.hooten@hpe.com \\
      \addr Hewlett Packard Labs, Hewlett Packard Enterprise
      \AND
      \name Ziyue Liu \email ziyueliu@ucsb.edu\\
      \addr 
      University of California, Santa Barbara
      \AND
      \name Yequan Zhao \email yequan\_zhao@ucsb.edu \\
      \addr 
      University of California, Santa Barbara
      \AND
      \name Marco Fiorentino \email marco.fiorentino@hpe.com \\
      \addr Hewlett Packard Labs, Hewlett Packard Enterprise
      \AND
      \name Thomas Van Vaerenbergh \email thomas.van-vaerenbergh@hpe.com \\
      \addr Hewlett Packard Labs, HPE Belgium
      \AND
      \name Zheng Zhang \email zhengzhang@ece.ucsb.edu \\
      \addr 
      University of California, Santa Barbara\\
      \\
      $^*$Contributed equally
      }
\begin{document}

\maketitle

\begin{abstract}
Operator learning has become a powerful tool in machine learning for modeling complex physical systems governed by partial differential equations (PDEs). Although Deep Operator Networks (DeepONet) show promise, they require extensive data acquisition. Physics-informed DeepONets (PI-DeepONet) mitigate data scarcity but suffer from inefficient training processes. We introduce Separable Operator Networks (SepONet), a novel framework that significantly enhances the efficiency of physics-informed operator learning. SepONet uses independent trunk networks to learn basis functions separately for different coordinate axes, enabling faster and more memory-efficient training via forward-mode automatic differentiation. We provide a universal approximation theorem for SepONet proving the existence of a separable approximation to any nonlinear continuous operator. Then, we comprehensively benchmark its representational capacity and computational performance against PI-DeepONet. Our results demonstrate SepONet's superior performance across various nonlinear and inseparable PDEs, with SepONet's advantages increasing with problem complexity, dimension, and scale. For 1D time-dependent PDEs, SepONet achieves up to 112× faster training and 82× reduction in GPU memory usage compared to PI-DeepONet, while maintaining comparable accuracy. For the 2D time-dependent nonlinear diffusion equation, SepONet efficiently handles the complexity, achieving a 6.44\% mean relative $\ell_{2}$ test error, while PI-DeepONet fails due to memory constraints. This work paves the way for extreme-scale learning of continuous mappings between infinite-dimensional function spaces. Open source code is available at \url{https://github.com/HewlettPackard/separable-operator-networks}.
\end{abstract}

\section{Introduction}
\begin{figure}
    \centering
    \includegraphics[width=\textwidth]{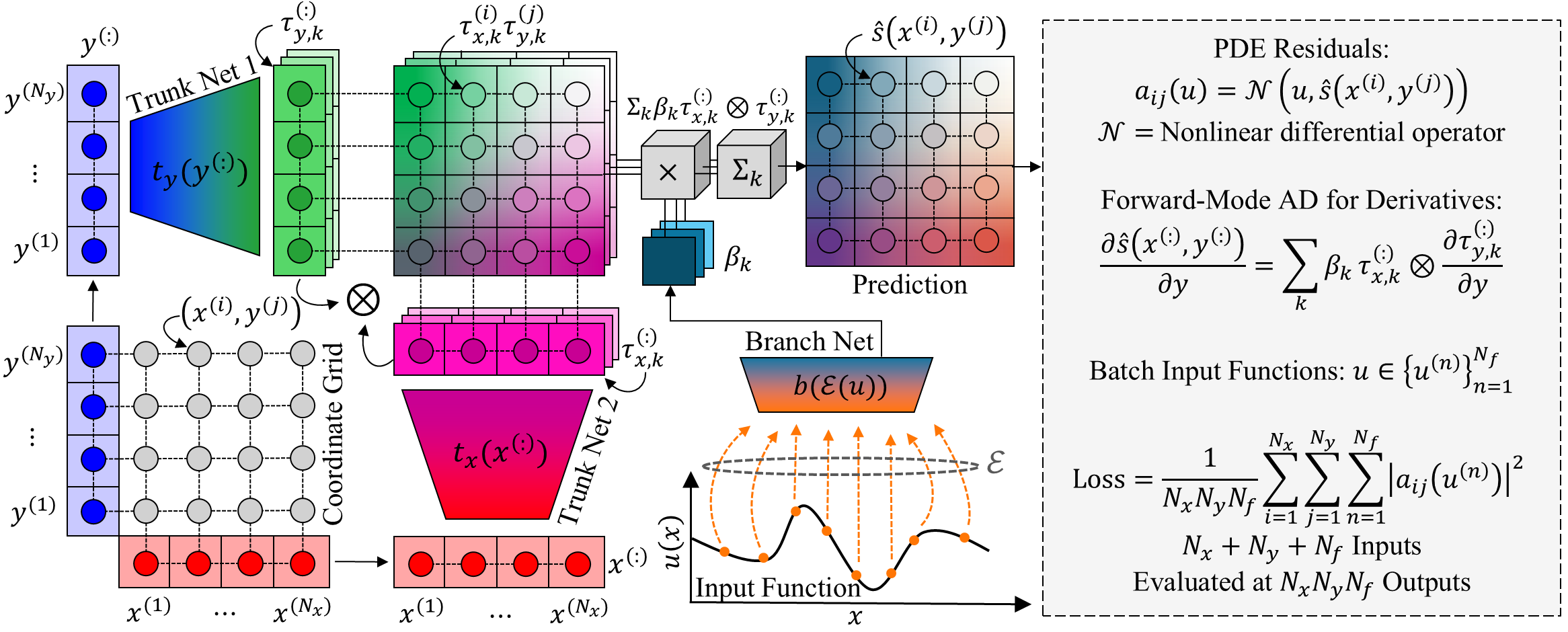}
    \caption{Separable operator network (SepONet) architecture for 2D problem instance. A coordinate grid of collocation points $(x^{(i)}, y^{(j)})$ can be evaluated efficiently by separating the coordinate axes, feeding them through independent trunk networks, and combining the outputs by outer product to obtain multiple basis function maps. Meanwhile, the branch network processes input functions and outputs coefficients, which are then used to scale and combine the trunk network basis functions by product and sum. Spatiotemporal derivatives of the output predictions are obtained efficiently by forward-mode automatic differentiation due to the independence of trunk networks along each coordinate axis.}
    \label{arch figure}
\end{figure}
\sloppy Operator learning, which aims to learn mappings between infinite-dimensional function spaces, has gained significant attention in scientific machine learning thanks to its ability to model complex dynamics in physics systems. This approach has been successfully applied to a wide range of applications, including climate modeling \citep{kashinath2021physics,pathak2022fourcastnet}, multiphysics simulation \citep{liu2023deepoheat,cai2021deepm, mao2021deepm, lin2021operator,kontolati2024learning}, inverse design \citep{lu2022multifidelity,gu2022neurolight} and more \citep{shukla2023deep,gupta2022towards}. Various operator learning algorithms \citep{lu2021learning, li2020neural, li2020fourier, ovadia2023vito,  wen2022u, ashiqur2022u} have been developed to address these applications, with Deep Operator Networks (DeepONets) \citep{lu2021learning} being particularly notable due to their universal approximation guarantee for operators \citep{chen1995universal,lanthaler2022error,gopalani4763746towards} and robustness \citep{lu2022comprehensive}. 

To approximate the function operator $G: \mathcal{U}\rightarrow\mathcal{S}$, DeepONets are trained in a supervised manner using a dataset of $N_f$ pair functions $\left\{u^{(i)}, s^{(i)}\right\}_{i=1}^{N_f}$, where in the context of parametric partial differential equations (PDEs), each $u^{(i)}$ represents a PDE configuration function and each $s^{(i)}$ represents the corresponding solution.  Unlike traditional numerical methods which require repeated simulations for each different PDE configuration, once well trained, a DeepONet allows for efficient parallel inference in abstract infinite-dimensional function spaces. Given new PDE configurations, it can immediately provide the corresponding solutions. However, this advantage comes with a significant challenge: to achieve satisfactory generalization error, the number of required input-output function pairs grows quadratically \citep{lu2021learning, lanthaler2022error, liu2022deep,gopalani4763746towards}. Generating enough function pairs can be computationally expensive or even impractical in some applications, creating a bottleneck in the effective deployment of DeepONets.

Physics-informed deep operator networks (PI-DeepONet) \citep{wang2021learning} have been introduced as a solution to the costly data acquisition problem. Inspired by physics-informed neural networks (PINNs) \citep{raissi2019physics}, PI-DeepONet learns operators by constraining the DeepONet outputs to approximately satisfy the underlying governing PDE system parameterized by the input function $u$. This is achieved by penalizing the physics loss (including PDE residual loss, initial loss and boundary loss), thus eliminating the need for ground-truth output functions $s$. However, PI-DeepONet shares the same disadvantage as PINNs in that the training process is both memory-intensive and time-consuming \citep{he2023learning,cho2024separable}. This inefficiency, common to both PI-DeepONet and PINNs, arises from the need to compute high-order derivatives of the PDE predictions with respect to numerous collocation points during physics loss optimization. This computation typically relies on reverse-mode automatic differentiation \citep{baydin2018automatic}, involving the backpropagation of physics loss through the computational graph to update model parameters. The inefficiency is even more pronounced for PI-DeepONet, as it requires evaluating the physics loss across multiple PDE configurations. While numerous studies \citep{ he2023learning, zhao2023tensor, hu2024tackling, liu2022tt, cho2024separable} have proposed methods to enhance PINN training efficiency, there has been limited research focused on improving the training efficiency of PI-DeepONet specifically.

To address the inefficiency in training PI-DeepONet, we propose Separable Operator Networks (SepONet), inspired by the separation of variables technique in solving PDEs and recent work on separable PINN \citep{cho2024separable}. Suppose we want to approximate a nonlinear continuous operator of $d$ variables ($d$-dimensional collocation points). When optimizing the physics loss at $M$ collocation points, PI-DeepONet always requires $M$ inputs to evaluate PDE predictions and their derivatives, regardless of whether the points are on a regular grid or randomly sampled. This approach becomes inefficient when $M$ is large. By contrast, SepONet uses $d$ independent trunk networks to learn univariate basis functions separately for each variable, which can then be combined to obtain predictions in $d$ dimensions. In particular, if $M$ can be decomposed as $N_1 \times N_2 \times \cdots \times N_d$, where $N_n$ is the number of points sampled along the $n$-th coordinate axis (e.g., $1024 = 16 \times 8 \times 8$ for $d=3$), SepONet requires only $N_1 + N_2 + \cdots + N_d$ inputs to evaluate all $M$ collocation points on a regular grid, resulting in a more efficient solution. The SepONet architecture for a $d=2$ problem instance is shown in Figure \ref{arch figure}. 
This simple yet effective modification enables fast training and memory-efficient implementation of SepONet by leveraging forward-mode automatic differentiation \citep{khan2015vector} to compute high-order derivatives of all $N_1 \times N_2 \times \cdots \times N_d$ collocation points. It’s important to note that factorizing functions in high-dimensional domains into multiple sub-functions defined over one-dimensional domains has been explored in supervised operator learning methods \citep{tran2021factorized, li2024scalable, kossaifi2023multi}. Working independently and concurrently, our work and that of \cite{mandl2025separable} extended these ideas to physics-informed operator learning frameworks. While both approaches leverage separable structures to enhance computational efficiency, our method uniquely draws from classical separation of variables techniques in PDEs, leading to a distinct low-rank functional decomposition of the trunk network. Our key contributions are:

\begin{enumerate}
    \item We introduce SepONet, a physics-informed operator learning framework that significantly enhances training time and GPU memory usage relative to PI-DeepONet, enabling extreme-scale learning of continuous mappings between infinite-dimensional function spaces.
    \item We provide a theoretical foundation for SepONet through the universal approximation theorem, proving its capability to approximate any nonlinear continuous operator with arbitrary accuracy.
    \item \sloppy We provide extensive benchmarks validating SepONet's representational capacity and computational performance relative to PI-DeepONet on a range of 1D and 2D time-dependent nonlinear PDEs. Our findings reveal that scaling up SepONet in both number of functions and collocation points consistently improves its accuracy, while typically outperforming PI-DeepONet. On 1D time-dependent PDEs, we achieve up to 112× training speed-up with minimal memory increase. Notably, at moderately large scales where training PI-DeepONet exhausts 80GB of GPU memory, SepONet trains and operates efficiently with less than 1GB. Furthermore, we observe efficient scaling of SepONet with problem dimension, enabling accurate prediction of 2D time-dependent PDEs at scales where PI-DeepONet fails.
\end{enumerate}

\section{Preliminaries} \label{prelim}
\subsection{Operator Learning for Solving Parametric Partial Differential Equations}
Let $\mathcal{X}$ and $\mathcal{Y}$ be Banach spaces, with $K \subseteq \mathcal{X}$ and $K_{1} \subseteq \mathcal{Y}$ being compact sets. Consider a nonlinear continuous operator $G: \mathcal{U} \rightarrow \mathcal{S}$, mapping functions from one infinite-dimensional space to another, where $\mathcal{U} \subseteq C(K)$ and $\mathcal{S} \subseteq C(K_{1})$. The goal of operator learning is to approximate the operator $G$ using a model parameterized by $\boldsymbol{\theta}$, denoted as $G_{\boldsymbol{\theta}}$. Here, $\mathcal{U}$ and $\mathcal{S}$ represent spaces of functions where the input and output functions have dimensions $d_u$ and $d_s$, 
respectively. We focus on the scalar case where $d_u = d_s = 1$ throughout most of this paper; however, it should be noted that the results apply to arbitrary $d_u$ and $d_s$.

In the context of solving parametric partial differential equations (PDEs), consider PDEs of the form:
\begin{equation}
\begin{aligned}
\mathcal{N}(u,s) &= 0,\;
\mathcal{I}(u,s) = 0,\;
\mathcal{B}(u,s) = 0,
\end{aligned}
\label{parametric pde}
\end{equation}
where $\mathcal{N}$ is a nonlinear differential operator, $\mathcal{I}$ and $\mathcal{B}$ represent the initial and boundary conditions, $u \in \mathcal{U}$ denotes the PDE configurations (source terms, coefficients, initial conditions, and etc.), and $s \in \mathcal{S}$ denotes the corresponding PDE solution. Assuming that for any $u \in \mathcal{U}$ there exists a unique solution $s \in \mathcal{S}$, we can define the solution operator $G: \mathcal{U} \rightarrow \mathcal{S}$ as $s=G(u)$.

A widely used framework for approximating such an operator $G$ involves constructing $G_{\boldsymbol{\theta}}$ through three maps \citep{lanthaler2022error}:
\begin{equation}
\label{operator approximation framework}
G_{\boldsymbol{\theta}} \approx G := \mathcal{D} \circ \mathcal{A} \circ \mathcal{E}.
\end{equation}
First, the encoder $\mathcal{E}: \mathcal{U} \rightarrow \mathbb{R}^{m}$ maps an input function $u \in \mathcal{U}$ to a finite-dimensional feature representation. 
Next, the approximator $\mathcal{A}: \mathbb{R}^{m} \rightarrow \mathbb{R}^{r}$ transforms this encoded data within the finite-dimensional space $\mathbb{R}^{m}$ to another finite-dimensional space $\mathbb{R}^{r}$. Finally, the decoder $\mathcal{D}: \mathbb{R}^{r} \rightarrow \mathcal{S}$ produces the output function $s(y) = G(u)(y)$ for $y \in K_{1}$.

\subsection{Deep Operator Networks (DeepONet)}
The original DeepONet formulation \citep{lu2021learning} can be analyzed through the 3-step approximation framework \eqref{operator approximation framework}. The encoder $\mathcal{E}:\mathcal{U}\rightarrow \mathbb{R}^m$ maps the input function $u$ to its point-wise evaluations at $m$ fixed sensors $x_{1}, x_{2}, \ldots, x_{m} \in K$, e.g., $(u(x_1),...,u(x_m))=\mathcal{E}(u)$. Two separate neural networks (usually multilayer perceptrons), the branch net and the trunk net, serve as the approximator and decoder, respectively. The branch net $b_{\psi}:\mathbb{R}^m\rightarrow\mathbb{R}^r$ parameterized by $\psi$ processes $\left(u(x_{1}), \ldots, u(x_{m})\right)$ to produce a feature embedding $\left(\beta_{1}, \beta_{2}, \ldots, \beta_{r}\right)$. The trunk net $t_\phi:\mathbb{R}^d\rightarrow \mathbb{R}^r$ with parameters $\phi$, takes a continuous coordinate $y=(y_1,...,y_d)$ as input and outputs a feature embedding $\left(\tau_{1}, \tau_{2}, \ldots, \tau_{r}\right)$. The final DeepONet prediction of a function $u$ for a query $y$ is:
\begin{equation}\label{eq:deeponet_prediction}
    G_{\boldsymbol{\theta}}(u)(y) = \sum_{k=1}^r \beta_k \tau_k = b_{\psi}(\mathcal{E}(u))\cdot t_{\phi}(y),
\end{equation}
where $\cdot$ is the vector dot product and $\boldsymbol{\theta}=(\psi,\phi)$ represents all the trainable parameters in the branch and trunk nets.

Despite DeepONet’s remarkable success across a range of applications in multiphysics simulation \citep{cai2021deepm, mao2021deepm, lin2021operator}, inverse design \citep{lu2022multifidelity}, and carbon storage \citep{jiang2023fourier}, its supervised training process is highly dependent on the availability of training data, which can be costly. Indeed, the generalization error of DeepONets scales quadratically with the number of training input-output function pairs \citep{lu2021learning,lanthaler2022error, liu2022deep, gopalani4763746towards}. Generating a large number of high-quality training data is expensive or even impractical in some applications. For example, in simulating high Reynolds number ($Re$) turbulent flow \citep{pope2001turbulent}, accurate numerical simulations require a fine mesh, leading to a computational cost scaling with $Re^{3}$ \citep{kochkov2021machine}, making the generation of sufficiently large and diverse training datasets prohibitively expensive.

To address the need for costly data acquisition, physics-informed deep operator networks (PI-DeepONet) \citep{wang2021learning}, inspired by physics-informed neural networks (PINNs) \citep{raissi2019physics}, have been proposed to learn operators without relying on observed input-output function pairs. Given a dataset of $N_f$ input training functions, $N_r$ residual points, $N_I$ initial points, and $N_b$ boundary points: $\mathcal{D}=\left\{\left\{u^{(i)}\right\}_{i=1}^{N_f}, \bigl\{y_{r}^{(j)}\bigl\}_{j=1}^{N_r}, \bigl\{y_{I}^{(j)}\bigl\}_{j=1}^{N_{I}},\bigl\{y_{b}^{(j)}\bigl\}_{j=1}^{N_b}\right\}$,
PI-DeepONets are trained by minimizing an unsupervised physics loss:
\begin{equation}
L_{physics}(\boldsymbol{\theta} | \mathcal{D}) = L_{residual}(\boldsymbol{\theta} | \mathcal{D}) + \lambda_{I}L_{initial}(\boldsymbol{\theta} | \mathcal{D}) + \lambda_{b}L_{boundary}(\boldsymbol{\theta} | \mathcal{D}),
\label{total physics loss}
\end{equation}
where
\begin{equation}
\begin{aligned}
L_{residual}(\boldsymbol{\theta} |\mathcal{D}) &= \frac{1}{N_{f} N_{r}}\sum_{i=1}^{N_f} \sum_{j=1}^{N_r} \Big| \mathcal{N} (u^{(i)},G_{\boldsymbol{\theta}}(u^{(i)})(y_{r}^{(j)})) \Big|^{2},\\
L_{initial}(\boldsymbol{\theta} | \mathcal{D}) &= \frac{1}{N_{f} N_{I}}\sum_{i=1}^{N_f} \sum_{j=1}^{N_{I}} \Big| \mathcal{I} (u^{(i)},G_{\boldsymbol{\theta}}(u^{(i)})(y_{I}^{(j)})) \Big|^{2},\\
L_{boundary}(\boldsymbol{\theta} | \mathcal{D}) &= \frac{1}{N_{f} N_{b}}\sum_{i=1}^{N_f} \sum_{j=1}^{N_{b}} \Big| \mathcal{B} (u^{(i)},G_{\boldsymbol{\theta}}(u^{(i)})(y_{b}^{(j)})) \Big|^{2}.
\end{aligned}
\label{physics loss}
\end{equation}
Here, $\lambda_{I}$ and $\lambda_{b}$ denote the weight coefficients for different loss terms. However, as noted in the original PI-DeepONet paper \citep{wang2021learning}, the training process can be both memory-intensive and time-consuming. Similar to PINNs \citep{raissi2019physics}, this inefficiency arises because optimizing the physics loss requires calculating high-order derivatives of the PDE solution with respect to numerous collocation points, typically achieved via reverse-mode automatic differentiation \citep{baydin2018automatic}. This process involves backpropagating the physics loss through the unrolled computational graph to update the model parameters. For PI-DeepONet, the inefficiency is even more pronounced, as the physics loss terms (equation \eqref{physics loss}) must be evaluated across multiple PDE configurations. Although various works \citep{chiu2022can, he2023learning, cho2024separable} have proposed different methods to improve the training efficiency of PINNs, little research has focused on enhancing the training efficiency of PI-DeepONet. We propose to address this inefficiency through a separation of input variables.

\subsection{Separation of Variables}\label{sec:sov}
The method of separation of variables seeks solutions to PDEs of the form $s(y)=T(t)Y_1(y_1)\cdots Y_d(y_d)$ for an input point $y=(t,y_1,\ldots,y_d)$ and univariate functions $T, Y_1, \ldots, Y_d$. Suppose we have a linear PDE system
\begin{equation}
    \mathcal{M}[t]s(y)=\mathcal{L}_1[y_1]s(y) + \cdots + \mathcal{L}_d[y_d] s(y),
\end{equation}
where $\mathcal{M}[t]=\frac{d}{dt}+h(t)$ is a first order differential operator of $t$, and $\mathcal{L}_1[y_1],...,\mathcal{L}_d[y_d]$ are linear second order ordinary differential operators of their respective variables $y_1,...,y_d$ only. Furthermore, assume we are provided Robin boundary conditions in each variable and separable initial condition $s(t=0,y_1,\ldots,y_d)=\prod_{n=1}^d\phi_n(y_n)$ for functions $\phi_n(y_n)$ that satisfy the boundary conditions. Then, leveraging Sturm-Liouville theory and some massaging, the solution to this problem can be written
\begin{align}\label{eq:sov}
    s(y) = s(t,y_1,\ldots, y_d) &= \sum_{k} A_k T^k(t)\prod_{n=1}^d Y_n^k(y_n),
\end{align}
where $k$ is a lumped index that counts over infinite eigenfunctions of each $\mathcal{L}_i$ operator (potentially with repeats). For example, given $n\in\{1,...,d\}$, $\mathcal{L}_nY_n^k(y_n)=\lambda_n^k Y_n^k$ for eigenvalue $\lambda_n^k\in\mathbb{R}$. $T^k(t)$ depends on all the eigenvalues $\lambda_n^k$ corresponding to index $k$. $A_k\in\mathbb{R}$ is a coefficient determined by the initial condition. More details can be found in Appendix \ref{app:sov}. The method of separation of variables applied to a linear heat equation example can be found in Appendix \ref{Linear_Heat_example}.

One may notice the resemblance between the form of the DeepONet prediction in \eqref{eq:deeponet_prediction} with \eqref{eq:sov}, provided $\beta_k=A_k$ and $\tau_k=T^k(t)\prod_{i=1}^d Y_i^k(y_i)$, with appropriately ordered $k$. We leverage this similarity explicitly in the construction of SepONet below.

\section{Separable Operator Networks (SepONet)} \label{methods}

Inspired by the method of separation of variables \eqref{eq:sov} and recent work on separable PINN \citep{cho2024separable}, we propose using separable operator networks (SepONet) to learn basis functions separately for different coordinate axes. SepONet approximates the solution operator of a PDE system by, for given point $y=(y_1,\ldots,y_d)$,
\begin{align}
     \begin{split}
     G_{\boldsymbol{\theta}}(u)(y_1,\ldots,y_d) &= \sum_{k=1}^{r} \beta_k \prod_{n=1}^d \tau_{n,k}\\
     &=b_\psi(\mathcal{E}(u))\cdot \left(t^1_{\phi_1}(y_1)\odot t^2_{\phi_2}(y_2)\odot \cdots \odot t_{\phi_d}^d(y_d)\right),
     \end{split}
\label{SepOnet form}
\end{align}
where $\odot$ is the Hadamard (element-wise) vector product and $\cdot$ is the vector dot product. Here, $\beta_k=b_{\psi}(\mathcal{E}(u))_k$ is the $k$-th output of the branch net, as in DeepONet. However, unlike DeepONet, which employs a single trunk net that processes each collocation point $y$ individually, SepONet uses $d$ independent trunk nets, $t^n_{\phi_n}:\mathbb{R}\rightarrow\mathbb{R}^r$ for $n=1,\ldots,d$. In particular, $\tau_{n,k}=t_{\phi_n}^n(y_n)_k$ denotes the $k$-th output of the $n$-th trunk net. Importantly, the parameters of the $n$-th trunk net $\phi_n$ are independent of all other trunk net parameter sets. Viewed through the 3-step approximation framework \eqref{operator approximation framework}, SepONet and DeepONet have identical encoder and approximator but different decoders. 

Equation \eqref{SepOnet form} can be understood as a low-rank approximation of the solution operator by truncating the basis function series (represented by the output shape of the trunk nets) at a maximal number of ranks $r$. SepONet not only enjoys the advantage that basis functions for different variables with potentially different scales can be learned more easily and efficiently, but also allows for fast and efficient training by leveraging forward-mode automatic differentiation, which we will discuss in Section \ref{Implementation details} and Section \ref{sec:tca}. Moreover, despite the resemblance between \eqref{SepOnet form} and the separation of variables method for linear PDEs \eqref{eq:sov} (discussed below in Section \ref{sec:inference}), we find that SepONet can effectively approximate the solutions to nonlinear parametric PDEs. Indeed, we provide a universal approximation theorem for separable operator networks in Section \ref{sec:uap} and extensive accuracy and performance scaling numerical experiments for nonlinear and inseparable PDEs in Section \ref{num_results}. 

Finally, it is worth noting that if one is only interested in solving deterministic PDEs under a certain configuration (i.e., $u$ is fixed), then the coefficients $\beta_{k}$ are also fixed and can be absorbed by the basis functions. In this case, SepONet will reduce to separable PINN \cite{cho2024separable}, which has been proven to be efficient and accurate in solving single-scenario PDE systems \citep{es2024separable,oh2024separable}.

\subsection{SepONet Architecture and Implementation Details} \label{Implementation details}
Suppose we are provided a computation domain $K_1=[0,1]^{d}$ of dimension $d$ and an input function $u$. To evaluate the residual loss term in equation \eqref{physics loss} on $N_r$ random collocation points, PI-DeepONet samples all $N_r$ points directly from $K_1$. However, if $N_r$ can be approximately factorized as $N_1 \times N_2 \times \cdots \times N_d$ (e.g., $1024 = 16 \times 8 \times 8$ for $d=3$), and we relax the Monte Carlo sampling requirement for $d$-dimensional collocation points, SepONet only needs to randomly sample $N_n$ points along the $n$-th coordinate axis,  resulting in a total of $N_{1} +N_{2} +\cdots + N_{d}$ samples. 

It is important to note that SepONet’s mapping from \( N_1 + N_2 + \dots + N_d \) inputs to \( N_1 \times N_2 \times \dots \times N_d \) outputs is most efficient when using regular grid sampling. However, we have empirically demonstrated (in Section \ref{num_results}) that this per-axis grid-based sampling strategy does not degrade SepONet’s accuracy compared to PI-DeepONet’s Monte Carlo random sampling over the entire domain $K_1$. Irregular domains may be sampled by (a) dividing the irregular domain into subdomains each approximated by a regular grid, or (b) applying a coordinate transformation to map the irregular domain onto a regular one (e.g., converting Cartesian to polar coordinates for a circular domain).

For shorthand and generality, we will denote the dataset of input points for SepONet as $\mathcal{D}=\{y_1^{(:)},\ldots,y_d^{(:)}\}$. Each $y_n^{(:)}=\{y_n^{(i)}\}_{i=1}^{N_n}$ represents an array of $N_n$ samples along the $n$-th coordinate axis for a total of $N_1+N_2+\cdots+N_d$ samples. The initial and boundary points may be separately sampled from $\partial K_1$; the number of samples ($N_I$ and $N_b$) and sampling strategy are equivalent for SepONet and PI-DeepONet.

\subsubsection{Forward Pass}
The forward pass of SepONet, illustrated for $d=2$ in Figure \ref{arch figure}, follows the formulation \eqref{SepOnet form} except generalized to the computationally advantageous setting where predictions along a grid of collocation points are processed in parallel. The formula can be expressed:
\begin{align}\label{eq:parallelseponet}
\begin{split}
    G_{\boldsymbol{\theta}}(u)(y_1^{(:)},\ldots,y_d^{(:)}) &= \sum_{k=1}^{r} \beta_k \bigotimes_{n=1}^d \tau_{n,k}^{(:)}\\
    &=\sum_{k=1}^r b_\psi(\mathcal{E}(u))_k \left(t^1_{\phi_1}(y_1^{(:)})_k\otimes t^2_{\phi_2}(y_2^{(:)})_k\otimes \cdots \otimes t_{\phi_d}^d(y_d^{(:)})_k\right),
\end{split}
\end{align}
where $\otimes$ is the (outer) tensor product, which produces an output predictive array along a meshgrid of $N_1\times N_2\times\cdots\times N_d$ collocation points. Notably, $\tau_{n,k}^{(:)}=t^n_{\phi_n}(y_n^{(:)})_k$ represents a vector of $N_n$ values produced by the $n$-th trunk net along the $k$-th output mode after feeding all $y_n^{(:)}$ points. After taking the outer product along each of $n=1,\ldots,d$ dimensions for all $r$ modes, the modes are sum-reduced with the predictions of the branch net $\beta_k=b_\psi(\mathcal{E}(u))_k$. While not shown here, our implementation also batches over input functions $\{u^{(i)}\}_{i=1}^{N_f}$ for $N_f$ functions. Thus, for only $N_f+N_1+\cdots+N_d$ inputs, SepONet produces a predictive array with shape $N_f \times N_1\times \cdots\times N_d$. 

\subsubsection{Model Update} In evaluation of the physics loss \eqref{total physics loss}, SepONet enables more efficient computation of high-order derivatives in terms of both time and memory use compared to PI-DeepONet by leveraging forward-mode automatic differentiation (AD) \citep{khan2015vector}. This is fairly evident by the form of \eqref{eq:parallelseponet}. For example, to compute derivatives of all SepONet outputs with respect to the $m$-th variable $y_m$:
\begin{align}
    \frac{\partial G_{\boldsymbol{\theta}}(u)(y_1^{(:)},...,y_m^{(:)},...,y_d^{(:)})}{\partial y_m} &= \sum_{k=1}^r \beta_k  \left(\bigotimes_{n\not=m} \tau_{n,k}^{(:)}\right)\otimes \frac{\partial \tau_{m,k}^{(:)}}{\partial y_m},\label{eq:derivs}
\end{align}
where $\frac{\partial \tau_{m,k}^{(:)}}{\partial y_m}$ is a vector of derivatives of the $m$-th trunk net's $k$-th basis function evaluated along all inputs to the $m$-th coordinate axis. One may notice that $\frac{\partial \tau_{m,k}^{(:)}}{\partial y_m}$ can be written as a Jacobian-vector product (JVP) of the Jacobian of the $m$-th trunk net's $r\times N_m$ outputs with respect to all $N_m$ inputs, and a length $N_m\times 1$ tangent vector of 1's: 
\begin{align}
    \frac{\partial \tau_{m,k}^{(:)}}{\partial y_m}&\coloneq e^{(k)}\mathbf{J}[t_{\phi_m}^m(y_m^{(:)})]\mathbf{1},\label{eq:seponetderivs}
\end{align}

where $e^{(k)}$ selects the $k$-th output mode from the resulting $r\times N_m$ JVP output. This is equivalent to forward-mode AD. Consequently, the derivatives along the $m$-th coordinate axis across the entire grid of predictions can be obtained by pushing forward derivatives of the $m$-th trunk net, and then 
reusing the outputs of all other $n\not=m$ trunk nets via outer product. By contrast, PI-DeepOnet must compute derivatives $\partial G_{\boldsymbol{\theta}}(y_1^{(i)},...,y_d^{(i)})/\partial y_m$ for each input-output pair individually, $y^{(i)}=(y_1^{(i)},...,y_d^{(i)})$ for $i=1,...,M$, where there is no such computational advantage and it is more prudent to use reverse-mode AD. Fundamentally, the advantage of SepONet for using forward-mode AD can be attributed to the significantly smaller input-output relationship when evaluating along coordinate grids $\mathbb{R}^{N_1+\cdots+N_d}\rightarrow \mathbb{R}^{N_1\times\cdots\times N_d}$ compared to PI-DeepONet $\mathbb{R}^{M\times d}\rightarrow \mathbb{R}^{M\times 1}$ when we choose  $M=N_1N_2\cdots N_d$. The time and space complexity analysis below in Section \ref{sec:tca} provides a more descriptive breakdown of computational scaling behavior. For a more detailed explanation of forward- and reverse-mode AD, we refer readers to \cite{cho2024separable,margossian2019review}. Once the physics loss is computed, often involving multiple evaluations of \eqref{eq:derivs}, reverse-mode AD is employed to update the model parameters $\boldsymbol{\theta}=(\psi, \phi_1, \ldots, \phi_d)$. 

\subsubsection{Inference}\label{sec:inference} Once trained, SepONet can efficiently map input functions $u$ to accurate output function predictions $\hat{s}=G_{\boldsymbol{\theta}}(u)$ along large spatiotemporal grids. This is achieved by combining the learned coefficients from the branch net with the basis functions learned by the trunk nets. Indeed, intriguing comparisons can be made between the results of the method of separation of variables \eqref{eq:sov} and trained SepONet predictions \eqref{SepOnet form}. For an initial value problem with linear, separable PDE operator treated in Section \ref{sec:sov}, we might expect SepONet to learn initial value function dependent coefficients $b_\psi(\mathcal{E}(u))_k=A_k$ and spatiotemporal basis functions $t^n_{\phi_n}(y_n)_k=Y_n^k(y_n)$ (provided we supply one additional trunk net $t^0_{\phi_0}(t)_k=T^k(t)$ for the temporal dimension) for appropriately ordered modes $k$ and sufficiently large $r$. Examples of the learned basis functions for a separable 1D time-dependent heat equation initial value problem example are provided in Appendix \ref{Linear_Heat_example} as a function of the number of the trunk net output shape $r$. For a small number of modes $r=1$ or $r=2$, SepONet learns nearly the exact spatiotemporal basis functions obtained by separation of variables. For larger $r$, the SepONet basis functions do not converge to the analytically expected basis functions. Nevertheless, approximation error is observed to improve with $r$, and near perfect accuracy is obtained at large $r$ by comparison to numerical estimates of the analytic solution.

While the form of SepONet predictions \eqref{SepOnet form} resemble separation of variables, we note that the method of separation of variables typically only applies to linear PDEs with restricted properties. In spite of this, SepONet is capable of accurately approximating arbitrary operator learning problems (including nonlinear PDEs) as guaranteed by a universal approximation property, provided below in Section \ref{sec:uap}.

\subsection{Complexity Analysis}\label{sec:tca}

Suppose we are provided a computational domain $K_1=[0,1]^d$ of dimension $d$. For PI-DeepONet, collocation points are sampled randomly from the entire $d$-dimensional domain, with a total of $M$ points. For SepONet, as described previously in Section \ref{Implementation details}, we randomly sample $N_1+N_2+\cdots+N_d$ inputs for $N_n$ points along the $n$-th axis, and construct a Cartesian product grid in $K_1$ via \eqref{eq:parallelseponet}. The resulting output of PI-DeepONet has shape $M \times 1$, and the output of SepONet has shape \( N_1 \times N_2 \times \dots \times N_d \). For simplicity, we assume all trunk nets are \( L \)-layer fully connected networks with hidden and output dimensions \( r \), and that \( N_1 = N_2 = \dots = N_d = N \), and $M=N^{d}$. The resulting time and space complexity to compute first-order derivatives of all SepONet and PI-DeepONet outputs is provided in Table \ref{tab:complexity}.


\begin{table}
\centering
\caption{Time and space complexities of first-order derivatives of SepONet and PI-DeepONet with respect to $N^d$ collocation points using forward-mode and reverse-mode AD, respectively.}
\begin{tabular}{lcc}
\toprule
\textbf{Method} & \textbf{Time Complexity}                         & \textbf{Space Complexity} \\
\midrule
SepONet (Forward AD)     & \( O(N \cdot d \cdot L r^2 + N^d \cdot r d) \)   & \( O(N \cdot r d + N^d) \) \\
PI-DeepONet (Reverse AD) & \( O(N^d \cdot L r^2) \)                          & \( O(N^d \cdot L r) \) \\
\bottomrule
\end{tabular}
\label{tab:complexity}
\end{table}

The first term in SepONet's time and space complexity is due to the forward-mode AD computation of each of the $d$ trunk networks derivatives with respect to $N$ inputs per axis. The second term, containing $N^d$, is from computing and storing the tensor product. On the other hand, PI-DeepONet individually backpropagates all $M=N^d$ outputs, resulting in complexity scaling with $N^d$. 

From this analysis, in the limiting case of $N^{d-1}\gg Lr$, we observe that both SepONet and PI-DeepONet have time and space complexities that include \( N^d \) due to evaluations over all points in a \( d \)-dimensional space. However, since typically \( d \ll L r \), SepONet is more efficient in practice due to smaller coefficients in the \( N^d \) term. Moreover, the tensor product in SepONet can be greatly accelerated by GPU parallelization, which is not taken into account in this analysis. In the limiting case $Lr\gg N^{d-1}$, the first term in SepONet's time complexity dominates $O(N\cdot d\cdot Lr^2)$, or in other words, it scales linearly with dimension and sub-linearly with the total number of collocation points $N^d$. This situation is not uncommon in many practical 2D and 3D operator learning problems.

Note that Table \ref{tab:complexity} only considered first-order derivatives. Higher-order derivatives are typically needed to evaluate physics loss functions \eqref{physics loss}. Fortunately, higher-order derivatives for SepONet are computed with similar complexity to Table \ref{tab:complexity}, since they amount to sequentially repeating the Jacobian-vector products (JVP) from \eqref{eq:derivs} and \eqref{eq:seponetderivs}. Lastly, we did not consider the parameter update for training with a physics loss in Table \ref{tab:complexity}, since it requires further assumptions about the composition of the branch network, and it is not typically the limiting computation for either PI-DeepONet or SepONet.

\subsection{Universal Approximation Property of SepONet} \label{sec:uap}
The universal approximation property of DeepONet has been discussed in \cite{chen1995universal, lu2021learning}. Here we present the universal approximation theorem to show that proposed separable operator networks can also approximate any nonlinear continuous operators that map infinite-dimensional function spaces to others.

\begin{theorem}[Universal Approximation Theorem for Separable Operator Networks]
\label{universal approximation theorem for operator}
   Suppose that $\sigma$ is a Tauber-Wiener function, $g$ is a sinusoidal function, $\mathcal{X}$ is a Banach space, $K \subseteq \mathcal{X}$, $K_1 \subseteq \mathbb{R}^{d_{1}}$ and $K_2 \subseteq \mathbb{R}^{d_{2}}$ are three compact sets in $\mathcal{X}$, $\mathbb{R}^{d_{1}}$ and $\mathbb{R}^{d_{2}}$, respectively, $\mathcal{U}$ is a compact set in $C\left(K \right)$, $G$ is a nonlinear continuous operator, which maps $\mathcal{U}$ into a compact set $\mathcal{S} \subseteq C\left(K_1 \times K_2 \right)$, then for any $\epsilon>0$, there are positive integers $n$, $r$, $m$, constants $c_i^k$, $\zeta_{k}^{1}$, $\zeta_{k}^{2}$, $\xi_{i j}^k$, $\theta_{i}^k\in \mathbb{R}$, points $\omega_{k}^{1} \in \mathbb{R}^{d_1}$, $\omega_{k}^{2} \in \mathbb{R}^{d_2}$, $x_j \in K$, $i=1, \ldots, n$, $k=1, \ldots, r$, $j=1, \ldots,m$, such that
\begin{equation}\label{approximation property for two trunk nets}
\left|G(u)(y)-\sum_{k=1}^r \underbrace{\sum_{i=1}^n c_i^k \sigma\left(\sum_{j=1}^m \xi_{i j}^k u\left(x_j\right)+\theta_i^k\right)}_{branch}\underbrace{g\left(w_{k}^{1} \cdot y_{1}+\zeta_{k}^{1}\right)}_{trunk_1}\underbrace{g\left(w_{k}^{2} \cdot y_{2}+\zeta_{k}^{2}\right)}_{trunk_2}\right|<\epsilon
\end{equation}
holds for all $u \in \mathcal{U}$, $y = (y_{1}, y_{2}) \in K_1 \times K_2$.
\end{theorem}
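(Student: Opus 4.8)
The plan is to build on the classical operator approximation argument of Chen and Chen \cite{chen1995universal} (which underlies DeepONet \cite{lu2021learning}) and to exploit the fact that products of sinusoids form a separable basis. I would split the target into two independent approximation problems: (i) a \emph{trunk/basis} problem, showing that every $s \in G(\mathcal{U})$ can be uniformly approximated on $K_1 \times K_2$ by a finite sum of separable sinusoidal products $\sum_{k=1}^r c_k\, g(w_k^1 \cdot y_1 + \zeta_k^1)\, g(w_k^2 \cdot y_2 + \zeta_k^2)$, with coefficients $c_k = c_k(u)$ that depend \emph{continuously} on $u$; and (ii) a \emph{branch/functional} problem, approximating each coefficient functional $u \mapsto c_k(u)$ by a Tauber--Wiener network $\sum_{i=1}^n c_i^k \sigma(\sum_{j=1}^m \xi_{ij}^k u(x_j) + \theta_i^k)$ via the standard functional-approximation lemma for Tauber--Wiener activations. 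A triangle inequality splitting $\epsilon$ between the two stages then yields \cref{approximation property for two trunk nets}.

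For step (i), I would first embed $K_1$ and $K_2$ into boxes and rescale so that multivariate trigonometric approximation applies. The key observation is that the real trigonometric polynomials in $(y_1, y_2)$ are spanned precisely by separable products of sinusoids: expanding each Fourier mode $e^{\mathrm{i}(\mathbf{m}\cdot y_1 + \mathbf{l}\cdot y_2)}$ into real and imaginary parts produces terms such as $\cos(\mathbf{m}\cdot y_1)\cos(\mathbf{l}\cdot y_2)$ together with the analogous sine combinations, each of which is a product $g(w^1 \cdot y_1 + \zeta^1)\,g(w^2 \cdot y_2 + \zeta^2)$ for sinusoidal $g$ with phases $\zeta$ selecting sine versus cosine. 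By Stone--Weierstrass these are dense in $C(K_1 \times K_2)$. The crucial point is to obtain this approximation \emph{uniformly} over the entire family $G(\mathcal{U})$ with one fixed mode set of size $r$: since $G$ is continuous and $\mathcal{U}$ compact, $G(\mathcal{U})$ is compact, hence equicontinuous, and a multivariate Fej\'er (Ces\`aro) partial-sum estimate controlled by the common modulus of continuity produces one degree $r$ that works for all $s = G(u)$ simultaneously. Because the Fej\'er coefficients are bounded linear functionals of $s$, the induced coefficients $c_k(u)$ are continuous linear images of $s = G(u)$ and therefore continuous functionals of $u$ on the compact set $\mathcal{U}$.

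For step (ii), each $c_k : \mathcal{U} \to \mathbb{R}$ is a continuous functional on the compact set $\mathcal{U} \subseteq C(K)$, so the Tauber--Wiener functional-approximation lemma of \cite{chen1995universal} supplies sensors $x_1,\ldots,x_m \in K$ and constants $c_i^k, \xi_{ij}^k, \theta_i^k$ such that the branch network approximates $c_k(u)$ uniformly in $u$ to accuracy $\epsilon/(2r)$. (A common sensor set $\{x_j\}$ and common $m$ can be taken by unioning the finitely many sensor sets.) Combining with step (i) and using that the sinusoidal factors are uniformly bounded, $|g| \le 1$, so that each branch error is amplified by at most the bounded trunk product, the accumulated error over the $r$ terms is at most $\epsilon$, establishing the claim.

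I expect the main obstacle to be the \emph{uniform, continuous-in-$u$} finite-rank separable approximation in step (i): one must simultaneously (a) approximate every element of the compact image $G(\mathcal{U})$ with a single fixed set of separable sinusoidal basis functions, and (b) guarantee that the associated coefficients vary continuously with $u$ so that the branch lemma applies. Using linear Fourier/Fej\'er coefficients resolves (b) automatically, since they are bounded linear functionals, while compactness-induced equicontinuity of $G(\mathcal{U})$ resolves (a); the remaining bookkeeping---rescaling the domains, converting the complex-exponential expansion into the exact real separable product form $g(w^1\cdot y_1+\zeta^1)\,g(w^2\cdot y_2+\zeta^2)$ demanded by the SepONet architecture, and tracking constants through the triangle inequality---is routine.
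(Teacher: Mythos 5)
Your proposal is correct and shares the paper's overall skeleton---a two-stage decomposition into a trunk/basis approximation with coefficients that are continuous functionals of $u$, followed by the Tauber--Wiener functional-approximation lemma (\cref{approximate functional}) for the branch, glued by a triangle inequality with the error split between the stages---but it realizes the crucial separability step by a genuinely different mechanism. The paper first applies the Chen--Chen network approximation theorem (\cref{approximate function}) with $g=\sin$ on the \emph{joint} variable $y=(y_1,y_2)$, obtaining non-separable terms $g(\omega_k\cdot y+\zeta_k)$ with continuous linear coefficient functionals for free, and only then separates each term via the angle-addition identity $g(\omega_k\cdot y+\zeta_k)=g(\omega_k^1\cdot y_1+\zeta_k)\,g(\omega_k^2\cdot y_2+\tfrac{\pi}{2})+g(\omega_k^1\cdot y_1+\zeta_k+\tfrac{\pi}{2})\,g(\omega_k^2\cdot y_2)$, at the cost of doubling the rank to $r=2N$; uniformity over the compact image $G(\mathcal{U})$ and continuity of the $c_k$ are inherited directly from \cref{approximate function}. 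You instead build the separable basis from scratch: multivariate trigonometric polynomials are already sums of products of univariate sinusoids, Stone--Weierstrass/Fej\'er gives density, equicontinuity of the compact image gives a single degree $r$ for the whole family, and the linearity of the Fej\'er coefficients gives continuity of $c_k(u)$. Both routes are sound. The paper's is shorter because it reuses the Chen--Chen machinery wholesale and only needs one trigonometric identity; yours makes the separability more transparent (no rank-doubling step) and extends to $d$ factors without iterating the identity, but it obliges you to re-derive the uniformity-over-a-compact-family statement and, more substantively, to extend functions from the arbitrary compact sets $K_1\times K_2$ to a torus so that Fourier analysis applies---this requires a \emph{linear, continuous} extension operator (e.g.\ Dugundji) so that compactness of the family and continuity of the coefficient functionals survive the extension; you wave at this as ``routine bookkeeping,'' and while it is standard, it is the one place in your outline that genuinely needs to be written down.
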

\begin{proof}
    The proof can be found in Appendix \ref{proof of theorem1}.
\end{proof}
\begin{remark}
    The definition of the Tauber-Wiener function is given in Appendix \ref{appsec: Preliminaries and Auxiliary Results}. It is worth noting that many common-used activations, such as ReLU, GELU and Tanh, are Tauber-Wiener functions.
\end{remark}

\begin{remark}
    Here we show the approximation property of a separable operator network with two trunk nets, by repeatedly applying trigonometric angle addition formula, it is trivial to separate $y$ as $(y_{1}, y_{2}, \ldots, y_{d}) \in K_1 \times K_2 \times \ldots \times K_{d}$ and extend \eqref{approximation property for two trunk nets} to $d$ trunk nets.
\end{remark}

\begin{remark}
    In our assumptions, we restrict the activation function for the trunk nets to be sinusoidal. This choice is motivated by the natural suitability of sinusoidal functions for constructing basis functions \citep{stein2011fourier} and their empirical effectiveness in solving PDEs \citep{sitzmann2020implicit}. However, it would be interesting to explore whether Theorem \ref{universal approximation theorem for operator} still holds when $g$ is a more general activation function, such as a Tauber-Wiener function. We will leave this investigation for future work.
\end{remark}

\begin{remark}
    Here we assume a two-layer branch network and $d$ one-layer trunk networks with sinusoidal activations. For practical implementation, our theoretical results can be extended to multi-layer trunk networks by leveraging the Universal Approximation Theorem (UAT) to approximate sinusoidal functions.
\end{remark}

\begin{remark}
    Theorem \ref{universal approximation theorem for operator} suggests the existence of a separable operator network approximation for any nonlinear continuous operator. This does not imply error bounds nor tractable scaling laws with respect to any specific error metric, nor does it provide a prescription for how to define and update model parameters. Below we will provide experimental evidence that error scaling is comparable to PI-DeepONet when using physics-informed operator learning. Please note that error bounds for the supervised training of DeepONet have been previously derived by \cite{lanthaler2022error}; similar error bounds for SepONet are not provided in this work.
\end{remark}

\section{Numerical Results} \label{num_results}
This section presents comprehensive numerical studies demonstrating the expressive power and effectiveness of SepONet compared to PI-DeepONet on various time-dependent PDEs: diffusion-reaction, advection, Burgers', and (2+1)-dimensional nonlinear diffusion equations. Both models were trained by optimizing the physics loss (equation \eqref{total physics loss}) on a dataset $\mathcal{D}$ consisting of input functions, residual points, initial points, and boundary points. PDE definitions are summarized in Table \ref{tab:pde-summary}.

We set the number of residual points to \( N_r = N^d = N_c \), where \( d \) is the problem dimension and \( N \) is an integer. Here, \( N_c \) refers to the total number of training points. For SepONet, the residual points are generated by randomly sampling \( N \) points along each axis and constructing a Cartesian product grid. In contrast, for PI-DeepONet, the \( N^d \) residual points are randomly sampled from the entire \( d \)-dimensional domain. The number of initial and boundary points per axis is set to \( N_I = N_b = N = \sqrt[d]{N_c} \), and these points are also randomly sampled from the solution domain. For each PDE, the model size remains fixed as $N_c$ or $N_f$ varies. Specifically, both PI-DeepONet and SepONet have branch and trunk networks of the same size; the main difference is that SepONet uses $d$ independent trunk networks, one for each axis.

We evaluate both models by varying the number of input functions ($N_f$) and training points ($N_c$) across four key perspectives: test accuracy, GPU memory usage, training time, and extreme-scale learning capabilities. The main results are illustrated in Figure \ref{fig:Nc scaling} and Figure \ref{fig:Nf scaling}, with complete test results reported in Appendix \ref{app:complete test results}. Loss functions, training details, and problem-specific parameters are provided in Appendix \ref{app: pdes} and Appendix \ref{app:hyperparams}. We provide additional ablation studies for our test results using trunk networks with hyperbolic tangent activation functions in Appendix \ref{app:tanh}, and varied number of input sensors for the branch net in Appendix \ref{app:sensors}.

\begin{table}[t]
\centering
\caption{Summary of PDE test problems.}
\resizebox{\textwidth}{!}{%
\begin{tabular}{l l l l l }
\toprule
\textbf{Governing Law} & \textbf{Domain} & \textbf{Equation Form} & \textbf{Initial Condition} & \textbf{Boundary Condition}  \\
\midrule

Diffusion-reaction &
\( x \in [0,1] \), \( t \in [0,1] \) &
\( \displaystyle \frac{\partial s}{\partial t} = D \frac{\partial^2 s}{\partial x^2} + k s^2 + u(x) \) &
\( s(x,0) = 0 \) &
\( s(0,t) = 0 \), \( s(1,t) = 0 \) \\

\midrule

Advection &
\( x \in [0,1] \), \( t \in [0,1] \) &
\( \displaystyle \frac{\partial s}{\partial t} + u(x) \frac{\partial s}{\partial x} = 0 \) &
\( s(x,0) = \sin(\pi x) \) &
\( s(0,t) = \sin\left( \dfrac{\pi}{2} t \right) \)  \\

\midrule

Burgers' &
\( x \in [0,1] \), \( t \in [0,1] \) &
\( \displaystyle \frac{\partial s}{\partial t} + s \frac{\partial s}{\partial x} - \nu \frac{\partial^2 s}{\partial x^2} = 0 \) &
\( s(x,0) = u(x) \) &
\( s(0,t) = s(1,t) \), \( \dfrac{\partial s}{\partial x}(0,t) = \dfrac{\partial s}{\partial x}(1,t) \) \\

\midrule

2D Nonlinear diffusion &
\( \boldsymbol{x} \in \Omega = [0,1]^2 \), \( t \in [0,1] \) &
\( \displaystyle \frac{\partial s}{\partial t} = \alpha \nabla \cdot \left( s \nabla s \right) \) &
\( s(\boldsymbol{x}, 0) = u(\boldsymbol{x}) \) &
\( s(\boldsymbol{x}, t) = 0 \) on \( \partial \Omega \) \\

\bottomrule
\end{tabular}
}
\label{tab:pde-summary}
\end{table}

\subsection{Test accuracy} Both PI-DeepONet and SepONet demonstrate improved accuracy when increasing either the number of training points ($N_c$) or the number of input functions ($N_f$), while fixing the other parameter. This trend is consistent across all four equations tested.

For instance, in the case of the diffusion-reaction equation, when fixing $N_f = 100$ and varying $N_c$ from $8^2$ to $128^2$, the relative $\ell_2$ error of PI-DeepONet decreases from 1.39\% to 0.73\%, while SepONet's error reduces from 1.49\% to 0.62\%. Conversely, when fixing $N_c = 128^2$ and varying $N_f$ from 5 to 100, PI-DeepONet's error drops from 34.54\% to 0.73\%, and SepONet's reduces from 22.40\% to 0.62\%.

\subsection{GPU memory usage}
While both models show improved accuracy with increasing $N_c$ or $N_f$, their memory usage patterns differ significantly. This divergence is particularly evident in the case of the advection equation.

When fixing $N_f = 100$ and varying $N_c$, PI-DeepONet exhibits a steep increase in GPU memory consumption during training, rising from 0.967 GB at $N_c = 8^2$ to 59.806 GB at $N_c = 128^2$. In contrast, SepONet maintains a relatively constant and low memory footprint during training, ranging between 0.713 GB and 0.719 GB across the same range of $N_c$. Similarly, when fixing $N_c = 128^2$ and varying $N_f$ from 5 to 100, PI-DeepONet's memory usage escalates from 3.021 GB to 59.806 GB. SepONet, however, maintains a stable memory usage throughout this range.

\subsection{Training time}
The training time scaling exhibits a pattern similar to memory usage, as demonstrated by the advection equation example. As $N_c$ increases from $8^2$ to $128^2$ with fixed $N_f = 100$, PI-DeepONet's training time increases significantly from 0.0787 to 8.231 hours (2.361 to 246.93 ms per iteration). In contrast, SepONet maintains relatively stable training times, ranging from 0.0730 to 0.0843 hours (2.19 to 2.529 ms per iteration) over the same $N_c$ range. Similarly, when varying $N_f$ from 5 to 100 with fixed $N_c = 128^2$, PI-DeepONet's training time increases from 0.3997 to 8.231 hours (11.991 to 246.93 ms per iteration). SepONet, however, keeps training times between 0.0730 and 0.0754 hours. These results demonstrate SepONet's superior scalability in terms of training time. The ability to maintain near-constant training times across a wide range of problem sizes is a significant advantage, particularly for large-scale applications where computational efficiency is crucial.

\subsection{Extreme-scale learning}
The Burgers' and nonlinear diffusion equations highlight SepONet's capabilities in extreme-scale learning scenarios. 

For the Burgers' equation, PI-DeepONet encounters memory limitations at larger scales. As seen in Figure \ref{fig:Nc scaling}(c), PI-DeepONet can only compute up to $N_c = 64^2$, achieving a relative $\ell_2$ error of 13.72\%. In contrast, SepONet continues to improve, reaching a 7.51\% error at $N_c = 128^2$. The nonlinear diffusion equation further emphasizes this difference. In Figure \ref{fig:Nf scaling}(d), PI-DeepONet results are entirely unavailable due to out-of-memory issues. SepONet, however, efficiently handles this complex problem, achieving a relative $\ell_2$ error of 6.44\% with $N_c = 128^3$ and $N_f = 100$.
Table \ref{tab:Burgers equation increasing n_f and n_c for SepONet} demonstrates SepONet's ability to tackle even larger scales for the Burgers' equation. It achieves a relative $\ell_2$ error as low as 4.12\% with $N_c = 512^2$ and $N_f = 800$, while maintaining reasonable memory usage (10.485 GB) and training time (0.478 hours). These results underscore SepONet's capability to handle extreme-scale learning problems beyond the reach of PI-DeepONet due to computational constraints.

\begin{figure}
    \centering
    \includegraphics[width=1\textwidth]{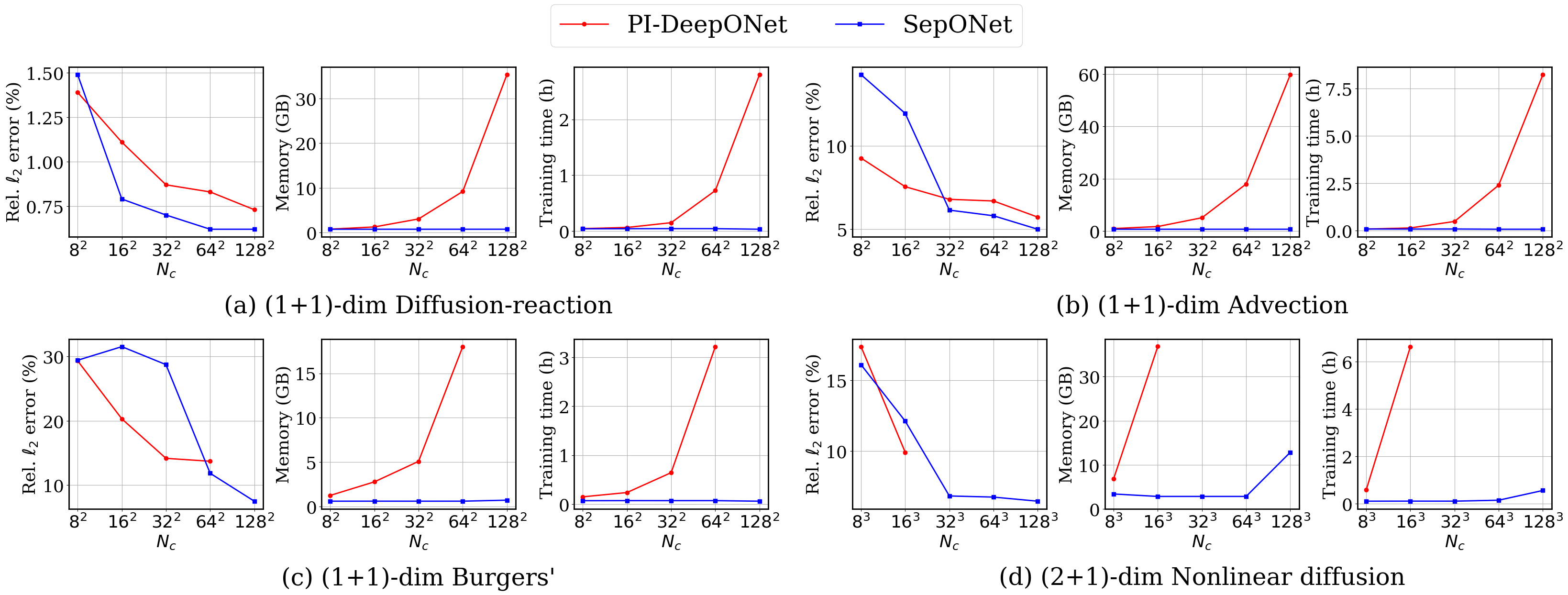}
    \caption{Performance comparison of PI-DeepONet and SepONet with varying number of training points ($N_c$) and fixed number of input functions ($N_{f} = 100$). Results show test accuracy, GPU memory usage, and training time for four PDEs. As $N_c$ increases, both models demonstrate improved accuracy, but PI-DeepONet exhibits significant increases in training time and memory usage, while SepONet maintains better computational efficiency.}
    \label{fig:Nc scaling}
\end{figure}

\begin{figure}[t]
    \centering
    \includegraphics[width=1\textwidth]{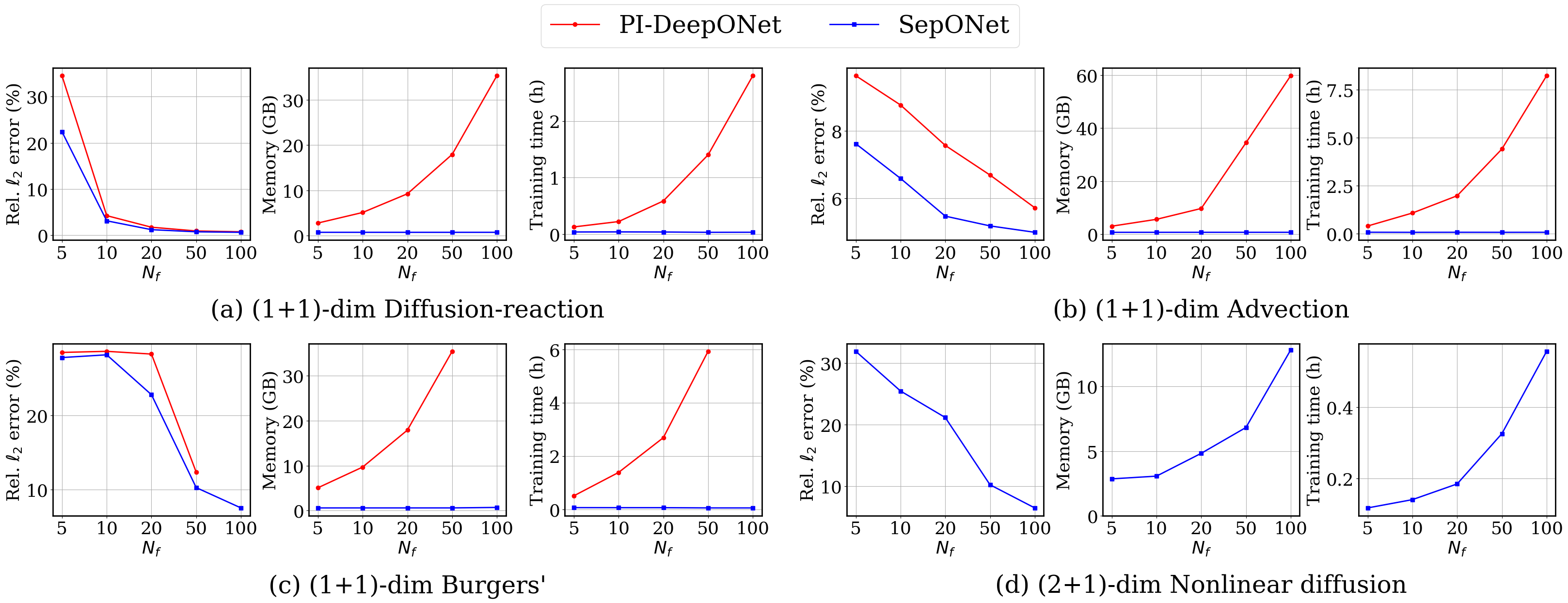}
    \caption{Performance comparison of PI-DeepONet and SepONet with increasing number of input functions ($N_f$) and fixed number of training points ($N_{c} = 128^{d}$, where $d$ is the problem dimension). Both models show improved accuracy with increasing $N_f$, but PI-DeepONet's computational resources scale poorly compared to SepONet's more efficient scaling. Note: PI-DeepONet results for the (2+1)-dimensional diffusion equation are unavailable due to memory constraints.}
    \label{fig:Nf scaling}
\end{figure}

\begin{table}[t]
\centering
\caption{Additional SepONet results for Burgers’ equation, demonstrating that larger $N_c$ and $N_f$ can be used to enhance accuracy with minimal cost increase.}
\resizebox{\textwidth}{!}{%
\begin{tabular}{c c c c c c c}
\toprule
Metrics~\textbackslash~\( N_{c}\) \& \(N_{f}\)  & \( 128^2 \) \& 400 & \( 128^2 \) \& 800 & \( 256^2 \) \& 400 & \( 256^2 \) \& 800 & \( 512^2 \) \& 400  & \( 512^2 \) \& 800\\
\midrule
Relative $\ell_{2}$ error (\%) &6.60  & 6.21 & 5.68  & 4.46  &5.38 & 4.12 \\
Memory (GB) &0.966 &1.466 &2.466 &4.466 &5.593 & 10.485\\
Training time (hours) &0.0771 & 0.0957& 0.1238& 0.1717&0.2751 & 0.478\\
\bottomrule
\end{tabular}
}
\label{tab:Burgers equation increasing n_f and n_c for SepONet}
\end{table}

\section{Discussion} \label{discussion}
The field of operator learning faces a critical dilemma. Deep Operator Networks (DeepONets) offer fast training but require extensive data generation, which can be prohibitively expensive or impractical for complex systems. Physics-informed DeepONets (PI-DeepONets) relax the data requirement but at the cost of resource-intensive training processes. This creates a challenging trade-off: balancing resource allocation either before training (data generation) or during training (computational resources).

Our proposed Separable Operator Networks (SepONet) effectively address both of these concerns. Inspired by the separation of variables technique typically used for linear PDEs, SepONet constructs its own basis functions to approximate a nonlinear operator. SepONet's expressive power is guaranteed by the universal approximation property (Theorem \ref{universal approximation theorem for operator}), ensuring it can approximate any nonlinear continuous operator with arbitrary accuracy. Our numerical results corroborate this theoretical guarantee, demonstrating SepONet's ability to handle complex, nonlinear systems efficiently.

By leveraging independent trunk networks for different variables, SepONet enables an efficient implementation via forward-mode automatic differentiation (AD). This approach achieves remarkable efficiency in both data utilization and computational resources. SepONet is trained solely by optimizing the physics loss, eliminating the need for expensive simulations to generate ground truth PDE solutions. In terms of computational resources, SepONet maintains stable GPU memory usage and training time, even with increasing training data and network size, in contrast to PI-DeepONet's dramatic resource consumption increases under similar scaling. We anticipate that SepONet’s advantages will allow it to tackle more challenging physics-informed operator learning problems, such as the Navier-Stokes equations \citep{jin2021nsfnets}, where both input and output functions are vector-valued. These are problems that PI-DeepONet may struggle to train on due to resource constraints. As an example, we have considered a (2+1)-dimensional Navier-Stokes equation, previously investigated in the context of PINN \citep{cho2024separable,wang2024respecting}. Some early, preliminary results can be found in Appendix \ref{NS_example}.

However, SepONet has certain limitations that warrant further research and development. The mesh grid structure of SepONet’s solution, while enabling efficient training through forward-mode AD, may limit its flexibility in handling PDEs with irregular geometries. Addressing this limitation could involve developing adaptations or hybrid approaches that accommodate more complex spatial domains \citep{li2023fourier,serrano2024operator,fang2024learning}, potentially expanding SepONet's applicability to a broader range of physical problems. 

Additionally, while the linear decoder allows for an efficient SepONet implementation, a very large number of basis functions may be needed for accurate linear representation in some problems \citep{seidman2022nomad}. Developing a nonlinear decoder version of SepONet will be useful to balance accuracy and efficient training. Moreover, implementing an adaptive weighting strategy \citep{wang2021understanding, wang2022improved,wang2022and} for different loss terms in the physics loss function, instead of using predefined fixed weights, could lead to improved accuracy and faster convergence. 

Finally, empirical observations suggest that training accuracy and robustness improve with an increase in input training functions. However, the neural scaling laws in physics-informed operator learning remain unexplored, presenting an intriguing theoretical challenge for future investigation.

\section*{Acknowledgments}
We would like to thank Wolfger Peelaers for the valuable discussions and insightful comments. Z. Zhang was supported by NSF Awards \#2328281 and \#1846476

\bibliography{main}
\bibliographystyle{tmlr}
\newpage
\appendix
\section{Universal Approximation Theorem for Separable Operator Networks} \label{appendix}
\label{defs, lemmas and proofs}
Here we present the universal approximation theorem for the proposed separable operator networks, originally written in Theorem \ref{universal approximation theorem for operator} and repeated below in Theorem \ref{universal approximation theorem for separable operator}. We begin by reviewing established theoretical results on approximating continuous functions and functionals. Following this review, we introduce the preliminary lemmas and proofs necessary for understanding Theorem \ref{universal approximation theorem for separable operator}. We refer our readers to \cite{chen1995universal,weierstrass1885analytische} for detailed proofs of Theorems \ref{approximate function}, \ref{approximate functional}, \ref{Weierstrass Approximation Theorem}. Main notations are listed in Table \ref{notations}.

\subsection{Preliminaries and Auxiliary Results}\label{appsec: Preliminaries and Auxiliary Results}
\begin{definition}[Tauber-Wiener (TW)]\label{def_TW} If a function $g: \mathbb{R} \rightarrow \mathbb{R}$ (continuous or discontinuous) satisfies that all the linear combinations $\sum_{i=1}^N c_i g\left(\lambda_i x+\theta_i\right)$, $\lambda_i \in \mathbb{R}$, $\theta_i \in \mathbb{R}$, $c_i \in \mathbb{R}$, $i=1,2, \ldots, N$, are dense in every $C[a, b]$, then $g$ is called a Tauber-Wiener (TW) function.
\end{definition}
\begin{remark_appendix}[Density in {$C[a, b]$}]
    A set of functions is said to be dense in $C[a, b]$ if every function in the space of continuous functions on the interval $[a, b]$ can be approximated arbitrarily closely by functions from the set. 
\end{remark_appendix}

\begin{definition}[Compact Set]
    Suppose that $X$ is a Banach space, $V \subseteq X$ is called a compact set in $X$, if for every sequence $\left\{x_n\right\}_{n=1}^{\infty}$ with all $x_n \in V$, there is a subsequence $\left\{x_{n_k}\right\}$, which converges to some element $x \in V$.
\end{definition}

\begin{theorem}[\cite{chen1995universal}]
\label{approximate function}
    Suppose that $K$ is a compact set in $\mathbb{R}^n$, $\mathcal{S}$ is a compact set in $C(K)$, $g \in(\mathrm{TW})$, then for any $\epsilon>0$, there exist a positive integer $N$, real numbers $\theta_i$, vectors $\omega_i \in \mathbb{R}^n$, $i=1, \ldots, N$, which are independent of $f \in C(K)$ and constants $c_i(f)$, $i=1, \ldots, N$ depending on $f$, such that
\begin{equation}
\left|f(x)-\sum_{i=1}^N c_i(f) g\left(\omega_i \cdot x+\theta_i\right)\right|<\epsilon
\end{equation}
holds for all $x \in K$ and $f \in \mathcal{S}$. Moreover, each $c_i(f)$ is a linear continuous functional defined on $\mathcal{S}$.
\end{theorem}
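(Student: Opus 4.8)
The plan is to split the argument into a qualitative density statement and a quantitative, uniform upgrade. First I would show that for a \emph{single} target function the linear span of \emph{ridge functions} $\{g(\omega\cdot x+\theta):\omega\in\mathbb{R}^n,\ \theta\in\mathbb{R}\}$ is dense in $C(K)$; this is where the Tauber--Wiener hypothesis on $g$ does all the work, since that hypothesis only speaks about density on intervals of the real line. Then I would promote this to the uniform statement over the compact family $\mathcal{S}$, with $N$, $\omega_i$, $\theta_i$ chosen independently of $f$ and coefficients depending \emph{linearly and continuously} on $f$, by feeding the ridge approximation into a fixed, norm-controlled finite-rank interpolation operator.

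\textbf{Ridge density for a single function.} By the Stone--Weierstrass theorem the polynomials are dense in $C(K)$, so it suffices to approximate an arbitrary monomial. By the polarization identity every monomial, and hence every polynomial, is a finite linear combination of powers of linear forms $(\omega\cdot x)^k$. For a fixed direction $\omega$, the map $x\mapsto\omega\cdot x$ sends the compact set $K$ into a compact interval $[a,b]$, and $t\mapsto t^k$ belongs to $C[a,b]$; the definition of a Tauber--Wiener function (Definition \ref{def_TW}) then furnishes $c_i,\lambda_i,\theta_i$ with $\bigl|t^k-\sum_i c_i\,g(\lambda_i t+\theta_i)\bigr|$ arbitrarily small on $[a,b]$. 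Substituting $t=\omega\cdot x$ converts this into a ridge approximation of $(\omega\cdot x)^k$ with directions $\lambda_i\omega$, and summing over the finitely many terms yields density of the ridge span in $C(K)$.

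\textbf{Uniform upgrade with linear functionals.} Since $\mathcal{S}$ is compact in $C(K)$, Arzel\`a--Ascoli makes it uniformly bounded (by $R$, say) and uniformly equicontinuous with a common modulus of continuity $\omega_{\mathcal{S}}$. I would cover $K$ by finitely many balls of radius $\delta$ with centers $x_1,\dots,x_p$, take a subordinate partition of unity $\{\rho_a\}$, and set $T_0 f=\sum_{a=1}^p f(x_a)\rho_a$; a direct estimate gives $\|f-T_0 f\|_{\infty}\le\omega_{\mathcal{S}}(\delta)<\epsilon/2$ once $\delta$ is small, and crucially $\|T_0\|\le 1$. Next, using the ridge density I would approximate each of the finitely many $\rho_a$ by a ridge combination $\tilde\rho_a=\sum_i\alpha_{ai}\,g(\omega_i\cdot x+\theta_i)$ so tightly that $R\sum_a\|\rho_a-\tilde\rho_a\|_{\infty}<\epsilon/2$, and set $Tf=\sum_a f(x_a)\tilde\rho_a$. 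The triangle inequality gives $\|f-Tf\|_{\infty}<\epsilon$ for all $f\in\mathcal{S}$, and collecting terms yields $Tf=\sum_i c_i(f)\,g(\omega_i\cdot x+\theta_i)$ with $c_i(f)=\sum_a\alpha_{ai}f(x_a)$ — a finite combination of point evaluations, hence a linear continuous functional of $f$ — while $N$, $\omega_i$, $\theta_i$ are fixed once and for all, independently of $f$, exactly as required.

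\textbf{Main obstacle.} The conceptual core, and the step I expect to be hardest to get exactly right, is the one-dimensional-to-$n$-dimensional bootstrap: the Tauber--Wiener property only guarantees density on intervals, so the polarization identity together with careful control of the interval $[a,b]=\{\omega\cdot x:x\in K\}$ is what bridges the gap. The second delicate point is achieving uniformity, linearity of the coefficients, and smallness of the error \emph{simultaneously}; the key idea is to route everything through the norm-one interpolation operator $T_0$, which sidesteps the blow-up of projection norms that a naive best-approximation or biorthogonal-projection construction would incur.
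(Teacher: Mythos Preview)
The paper does not supply its own proof of this theorem: it is quoted from Chen--Chen (1995), and the appendix explicitly directs readers to that reference for the argument. There is therefore no in-paper proof to compare your proposal against.

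That said, your proposal is a correct, self-contained proof. The ridge-density step---Stone--Weierstrass to reduce to polynomials, polarization to express every homogeneous polynomial as a finite linear combination of powers $(\omega\cdot x)^k$, then the one-variable TW hypothesis applied on the compact interval $\{\omega\cdot x:x\in K\}$---is a clean and standard route to density of the ridge span in $C(K)$. The uniform upgrade via the partition-of-unity interpolation $T_0f=\sum_a f(x_a)\rho_a$ is exactly the right device: it has operator norm at most $1$, Arzel\`a--Ascoli supplies the uniform modulus of continuity that controls $\|f-T_0f\|_\infty$ over all of $\mathcal{S}$, and approximating each fixed $\rho_a$ by ridge sums then yields coefficients $c_i(f)=\sum_a\alpha_{ai}f(x_a)$ that are finite combinations of point evaluations---bounded linear functionals on all of $C(K)$, a fortiori on $\mathcal{S}$---while $N,\omega_i,\theta_i$ remain fixed independently of $f$. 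The only cosmetic point is that each $\rho_a$ a priori gets its own set of ridge parameters; taking their union and padding coefficients with zeros, as your ``collecting terms'' implicitly does, is harmless.
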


\begin{theorem}[\cite{chen1995universal}]
\label{approximate functional}
    Suppose that $\sigma \in(\mathrm{TW}), X$ is a Banach Space, $K \subseteq X$ is a compact set, $\mathcal{U}$ is a compact set in $C(K)$, $f$ is a continuous functional defined on $\mathcal{U}$, then for any $\epsilon>0$, there are positive integers $N$, $m$ points $x_1, \ldots, x_m \in K$, and real constants $c_i$, $\theta_i$, $\xi_{i j}$, $i=1, \ldots, N$, $j=1, \ldots, m$, such that
\begin{equation}
\left|f(u)-\sum_{i=1}^N c_i \sigma\left(\sum_{j=1}^m \xi_{i j} u\left(x_j\right)+\theta_i\right)\right|<\epsilon
\end{equation}
holds for all $u \in \mathcal{U}$.
\end{theorem}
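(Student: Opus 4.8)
The plan is to follow the same three-map philosophy as the DeepONet argument but to insert a separation step tailored to the sinusoidal trunk. First I would treat $G(u)(y)$ as a function of the joint variable $y=(y_1,y_2)$ ranging over the compact product set $K_1\times K_2\subseteq\mathbb{R}^{d_1+d_2}$. Since $\mathcal S$ is compact in $C(K_1\times K_2)$ and the sinusoid $g$ is a Tauber--Wiener function, \cref{approximate function} supplies an integer $r_0$, frequencies $\omega_k\in\mathbb{R}^{d_1+d_2}$, phases $\zeta_k$, and linear continuous functionals $c_k$ on $\mathcal S$ (independent of the argument) such that $\bigl|G(u)(y)-\sum_{k=1}^{r_0}c_k(G(u))\,g(\omega_k\cdot y+\zeta_k)\bigr|<\epsilon/2$ uniformly for all $u\in\mathcal U$ and $y\in K_1\times K_2$. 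Splitting $\omega_k=(\omega_k^1,\omega_k^2)$ with $\omega_k^1\in\mathbb{R}^{d_1}$, $\omega_k^2\in\mathbb{R}^{d_2}$ and choosing $\zeta_k=\zeta_k^1+\zeta_k^2$, the argument factorises as $\omega_k\cdot y+\zeta_k=(\omega_k^1\cdot y_1+\zeta_k^1)+(\omega_k^2\cdot y_2+\zeta_k^2)$.

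The key step is to separate each joint sinusoid into a product structure via the angle-addition identity. Taking $g=\sin$ and setting $A_k=\omega_k^1\cdot y_1+\zeta_k^1$, $B_k=\omega_k^2\cdot y_2+\zeta_k^2$,
\[
g(\omega_k\cdot y+\zeta_k)=g(A_k)\,g(B_k+\tfrac{\pi}{2})+g(A_k+\tfrac{\pi}{2})\,g(B_k),
\]
since $\cos=\sin(\cdot+\pi/2)$. Each summand is now a product of a sinusoid in $y_1$ alone and a sinusoid in $y_2$ alone, exactly the $trunk_1\cdot trunk_2$ form required. Relabelling the $2r_0$ resulting products by a single index $k=1,\dots,r$ with $r=2r_0$, the expansion becomes $\sum_{k=1}^{r}\gamma_k(u)\,g(\omega_k^1\cdot y_1+\zeta_k^1)\,g(\omega_k^2\cdot y_2+\zeta_k^2)$, where each coefficient $\gamma_k(u)$ equals one of the original $c_{\lceil k/2\rceil}(G(u))$. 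Because $c_k$ is continuous on $\mathcal S$ and $G:\mathcal U\to\mathcal S$ is continuous, each $\gamma_k=c_{\lceil k/2\rceil}\circ G$ is a continuous scalar functional on the compact set $\mathcal U\subseteq C(K)$ (linearity is not needed here).

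Next I would discretise these coefficient functionals. Applying \cref{approximate functional} with the Tauber--Wiener function $\sigma$ to each $\gamma_k$ yields sensors and constants realising $\bigl|\gamma_k(u)-\sum_{i=1}^{n}c_i^k\sigma(\sum_{j=1}^m\xi_{ij}^k u(x_j)+\theta_i^k)\bigr|<\delta$ uniformly on $\mathcal U$; a common sensor count $m$, common points $x_j$, and common $n$ are obtained by taking the union of the finitely many sensor sets over $k=1,\dots,r$ and padding unused coefficients with zeros. Substituting these branch expansions for the $\gamma_k$ and applying the triangle inequality splits the total error into the $<\epsilon/2$ trunk error from the first two steps plus $\sum_{k=1}^r\bigl|\gamma_k(u)-\text{branch}_k\bigr|\,|g(\cdot)g(\cdot)|$. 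Using $|g|\le 1$ this second term is at most $r\delta$, so choosing $\delta<\epsilon/(2r)$ closes the estimate and reproduces exactly \cref{approximation property for two trunk nets}.

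The main obstacle is conceptual rather than computational: the whole argument hinges on the sinusoidal hypothesis on $g$, because only the angle-addition identity converts a single sinusoid of the concatenated coordinate into a finite sum of products of univariate-coordinate sinusoids while keeping the frequencies and phases real and, crucially, leaving the functional coefficients $\gamma_k$ unchanged. For a generic Tauber--Wiener $g$ no such exact separation exists, which is precisely why the theorem (and the accompanying remark) singles out the sinusoidal case, and why extending it would demand a genuinely different separation mechanism. The remaining technical care --- the uniform choice of sensors across the $r$ functionals and the preservation of continuity of $\gamma_k$ under composition with $G$ --- is routine once the separation is in place.
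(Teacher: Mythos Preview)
Your proposal is not a proof of the stated theorem. The statement given, labelled \cref{approximate functional}, is the Chen--Chen functional approximation theorem: any continuous functional $f$ on a compact set $\mathcal U\subseteq C(K)$ can be uniformly approximated by a single-hidden-layer network acting on finitely many sensor evaluations $u(x_1),\dots,u(x_m)$. What you have written instead is a proof of the separable operator network universal approximation theorem, \cref{universal approximation theorem for operator}, culminating in \cref{approximation property for two trunk nets}. Your third paragraph explicitly \emph{invokes} \cref{approximate functional} as a black-box tool to discretise the coefficient functionals $\gamma_k$, so as a proof of the given statement the argument is circular.

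For the record, the paper does not supply its own proof of \cref{approximate functional}; it is quoted as an auxiliary result from Chen and Chen (1995), so there is no in-paper proof to compare against. If, on the other hand, your intent was to prove \cref{universal approximation theorem for operator}, then your proposal is correct and tracks the paper's proof in \cref{proof of theorem1} essentially step for step: apply \cref{approximate function} on the compact product domain $K_1\times K_2$, split each joint sinusoid via the angle-addition identity into $2r_0$ separated products, observe that each resulting coefficient $\gamma_k=c_{\lceil k/2\rceil}\circ G$ is a continuous functional on $\mathcal U$, approximate those by \cref{approximate functional}, and pad to common $n,m$. The only cosmetic difference is that the paper controls the branch-substitution error through $L=\sum_k\sup|g(\cdot)g(\cdot)|$ whereas you use the cruder but still valid bound $|g|\le 1$, giving $r\delta$ in place of $L\delta$.
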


\begin{theorem}[Weierstrass Approximation Theorem~\cite{weierstrass1885analytische}]\label{Weierstrass Approximation Theorem}
    Suppose $f \in C[a,b],$ then for every $\epsilon >0,$ there exists a polynomial p such that for all $x$ in $[a,b]$, we have $|f(x)-p(x)| < \epsilon.$
\end{theorem}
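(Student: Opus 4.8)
The plan is to give the classical constructive proof via Bernstein polynomials, which produces an explicit approximating polynomial and avoids appeal to any heavier machinery such as the Stone--Weierstrass theorem.

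First I would reduce to the unit interval. Given $f \in C[a,b]$, define $g(t) = f(a + t(b-a))$ for $t \in [0,1]$; then $g \in C[0,1]$, and any polynomial $q$ approximating $g$ uniformly on $[0,1]$ yields the polynomial $p(x) = q\!\left(\frac{x-a}{b-a}\right)$ approximating $f$ on $[a,b]$ with the same uniform error. Hence it suffices to approximate an arbitrary $g \in C[0,1]$. Next I would introduce, for each integer $n \ge 1$, the $n$-th Bernstein polynomial
\begin{equation}
B_n(g)(t) = \sum_{k=0}^n g\!\left(\frac{k}{n}\right)\binom{n}{k} t^k (1-t)^{n-k},
\end{equation}
and prove that $B_n(g) \to g$ uniformly on $[0,1]$.

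The elementary inputs are the binomial identities. Writing $p_{n,k}(t) = \binom{n}{k} t^k(1-t)^{n-k}$, one has $\sum_{k=0}^n p_{n,k}(t) = 1$ and $\sum_{k=0}^n \left(\frac{k}{n} - t\right)^2 p_{n,k}(t) = \frac{t(1-t)}{n} \le \frac{1}{4n}$; both follow by differentiating the binomial theorem (or probabilistically as the mean and variance of a $\mathrm{Binomial}(n,t)$ variable scaled by $1/n$). Since $g$ is continuous on the compact set $[0,1]$ it is bounded, say $|g| \le M$, and uniformly continuous, so for the given $\epsilon > 0$ there is $\delta > 0$ with $|g(s) - g(t)| < \epsilon/2$ whenever $|s - t| < \delta$. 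Using the first identity to write $g(t) - B_n(g)(t) = \sum_k \left(g(t) - g(k/n)\right) p_{n,k}(t)$, I would split the sum into indices with $|k/n - t| < \delta$ and those with $|k/n - t| \ge \delta$. On the first set uniform continuity bounds the contribution by $\epsilon/2$; on the second set I bound each factor by $2M$ and control the total mass via the Chebyshev-type estimate $\sum_{|k/n - t| \ge \delta} p_{n,k}(t) \le \frac{1}{\delta^2}\sum_k \left(\frac{k}{n} - t\right)^2 p_{n,k}(t) \le \frac{1}{4n\delta^2}$. This gives $|g(t) - B_n(g)(t)| \le \frac{\epsilon}{2} + \frac{2M}{4n\delta^2}$ uniformly in $t$, and choosing $n$ large enough forces the right side below $\epsilon$. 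Taking $p = B_n(g)$, pulled back to $[a,b]$, completes the proof.

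The main obstacle is the control of the ``far'' terms where $k/n$ is distant from $t$: these could naively contribute $O(M)$, so the whole argument hinges on showing their total probability mass is small. This is precisely what the variance identity combined with Chebyshev's inequality delivers, with the crucial feature that the resulting bound $\frac{1}{4n\delta^2}$ is \emph{independent of} $t$, which upgrades pointwise convergence to the uniform convergence the theorem requires.
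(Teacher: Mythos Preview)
Your Bernstein-polynomial argument is correct and self-contained: the reduction to $[0,1]$, the two binomial identities, the uniform-continuity splitting, and the Chebyshev bound on the far terms are all valid, and the resulting estimate $\frac{\epsilon}{2} + \frac{M}{2n\delta^2}$ is indeed uniform in $t$.

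The paper, however, does not prove this statement at all. It states the Weierstrass Approximation Theorem as a classical result and refers the reader to \cite{chen1995universal,weierstrass1885analytische} for a proof; it is then invoked as a black box inside the proof of \cref{Trigonometric density} (density of trigonometric polynomials in $\tilde C[0,2\pi]$). So there is no ``paper's own proof'' to compare against --- you have supplied a full proof where the paper supplies only a citation. Your choice of the Bernstein construction is a standard and appropriate one; it has the virtue of being elementary and constructive, which fits well with the spirit of the paper since the downstream application (building trigonometric approximants via $p(\cos\theta)$, $q(\cos\theta)$, etc.) only needs existence of an approximating polynomial.
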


\begin{table}[]
\caption{Notations and Symbols}
    \label{notations}
    \centering
\resizebox{\textwidth}{!}{%
\begin{tabular}{ll}
\toprule
$X$ & some Banach space with norm $\|\cdot\|_X$ \\
$\mathbb{R}^d$ & Euclidean space of dimension $d$ \\
$K$ & some compact set in a Banach space \\
$C(K)$ & Banach space of all continuous functions defined on $K$, with norm $\|f\|_{C(K)}=\max _{x \in K}|f(x)|$ \\
$\Tilde{C}[a,b]$ & the space of functions in $C[a,b]$ satisfying $f(a) = f(b)$\\
$V$ & some compact set in $C(K)$ \\
$u(x)$ & some input function \\
$\mathcal{U}$ & the space of input functions\\
$G$ & some continuous operator \\
$G(u)(y)$ or $s(y)$& some output function that is mapped from the corresponding input function $u$ by the operator $G$\\
$\mathcal{S}$ & the space of output functions\\
(TW) & all the Tauber-Wiener functions \\
$\sigma$ and $g$ & activation function for branch net and trunk nets in Theorem \ref{universal approximation theorem for separable operator}\\
$\left\{x_1, x_2, \ldots, x_m\right\}$ & $m$ sensor points for identifying input function $u$\\
$r$  & rank of some deep operator network or separable operator network \\
$n, m$ & operator network size hyperparameters in Theorem \ref{universal approximation theorem for separable operator}\\
\bottomrule
\end{tabular}
}
    
\end{table}

\begin{corollary}\label{Trigonometric density}
Trigonometric polynomials are dense in the space of continuous and periodic functions $\Tilde{C}[0,2\pi]:=\{f \in C[0,2 \pi]~|~ f(0)=f(2 \pi)\}$.
\end{corollary}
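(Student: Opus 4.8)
The plan is to realize $\tilde{C}[0,2\pi]$ as the space of continuous functions on the unit circle and then exhibit the trigonometric polynomials as a dense subalgebra. Concretely, I would first consider the map $\Phi: x \mapsto (\cos x, \sin x)$ from the quotient circle $\mathbb{R}/2\pi\mathbb{Z}$ onto $S^1 = \{(s,t) : s^2+t^2=1\}\subseteq\mathbb{R}^2$. Since any $f \in \tilde{C}[0,2\pi]$ satisfies $f(0)=f(2\pi)$, it extends to a $2\pi$-periodic continuous function and hence descends to a well-defined continuous $F$ on $S^1$ via $F(\cos x,\sin x)=f(x)$; because $\Phi$ is a continuous bijection from a compact space it is a homeomorphism, so $f\mapsto F$ is an isometric isomorphism of $\tilde{C}[0,2\pi]$ onto $C(S^1)$. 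Under this identification a trigonometric polynomial $\sum_k (a_k\cos kx + b_k\sin kx)$ corresponds exactly to the restriction to $S^1$ of a two-variable polynomial in $(s,t)$.

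Second, I would verify that the trigonometric polynomials form a subalgebra $\mathcal{A}\subseteq C(S^1)$ meeting the hypotheses of the Stone–Weierstrass theorem: $\mathcal{A}$ contains the constants, is closed under multiplication (a product of trigonometric polynomials is again one, by the product-to-sum identity $\cos a\cos b=\tfrac{1}{2}[\cos(a-b)+\cos(a+b)]$ and its analogues), is real and hence self-adjoint, and separates the points of $S^1$ because the coordinate functions $s=\cos x$ and $t=\sin x$ already do. Stone–Weierstrass then gives density of $\mathcal{A}$ in $C(S^1)$, which transports back through the isomorphism of the first step to density of trigonometric polynomials in $\tilde{C}[0,2\pi]$.

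If one instead wishes to rely only on the one-variable statement \cref{Weierstrass Approximation Theorem} proved above, I would use the even/odd decomposition $f=f_e+f_o$. For the even part, $f_e(x)$ depends only on $\cos x$, so $f_e(x)=\Phi_e(\cos x)$ for a continuous $\Phi_e$ on $[-1,1]$; applying \cref{Weierstrass Approximation Theorem} to $\Phi_e$ and substituting $t=\cos x$ yields a cosine polynomial approximating $f_e$ uniformly, after reducing each power $\cos^n x$ to a trigonometric polynomial by repeated angle addition. The main obstacle is the odd part: the naive substitution breaks down because $f_o(x)/\sin x$ need not be continuous at $x=0,\pi$ (we know only that $f_o$ vanishes there, not the rate), so $f_o$ is not a polynomial expression in $\cos x$ alone. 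The clean resolution is to pass to two variables — either directly via the Stone–Weierstrass argument above, whose algebra closure absorbs the needed $\sin x$ factors automatically, or by first upgrading \cref{Weierstrass Approximation Theorem} to the multivariate Weierstrass theorem on the compact set $S^1\subset\mathbb{R}^2$ through the standard iterated one-dimensional approximation, and then substituting $(s,t)=(\cos x,\sin x)$, noting that every monomial $\cos^a x\,\sin^b x$ collapses to a genuine trigonometric polynomial.
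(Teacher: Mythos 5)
Your proof is correct, but it takes a genuinely different route from the paper's. You identify $\Tilde{C}[0,2\pi]$ with $C(S^1)$ and invoke the Stone--Weierstrass theorem for the subalgebra generated by $\cos x$ and $\sin x$; this is clean and conceptually transparent, but it rests on a strictly stronger tool than anything established in the paper, which only states the one-variable Weierstrass theorem (\cref{Weierstrass Approximation Theorem}). The paper instead gives the classical elementary argument: it forms the even functions $g(\theta)=\tfrac{1}{2}(f(\theta)+f(-\theta))$ and $h(\theta)=\tfrac{1}{2}(f(\theta)-f(-\theta))\sin\theta$, approximates each as a polynomial in $\cos\theta$ via the substitution $x=\cos\theta$, and thereby obtains a trigonometric-polynomial approximation of $f(\theta)\sin^2\theta$; repeating the argument for $f(\theta+\tfrac{\pi}{2})$ and shifting back yields an approximation of $f(\theta)\cos^2\theta$, and adding the two finishes via $\sin^2\theta+\cos^2\theta=1$. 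Notably, you correctly diagnose the exact obstacle this trick is designed to overcome --- that $f_o(x)/\sin x$ need not be continuous at $x=0,\pi$ --- but you resolve it by escalating to two variables (Stone--Weierstrass or a multivariate Weierstrass theorem), whereas the paper resolves it by multiplying the odd part by $\sin\theta$ to make it even, accepting the extra factor of $\sin^2\theta$, and recovering $f$ with the quarter-period shift. Your route buys brevity and generality at the cost of a heavier prerequisite; the paper's buys self-containedness from \cref{Weierstrass Approximation Theorem} alone at the cost of the somewhat unmotivated $\sin^2/\cos^2$ bookkeeping. If you take the multivariate-Weierstrass variant, do note that extending $F$ from $S^1$ to a surrounding box (e.g., radially or by Tietze) is a required step before the iterated one-dimensional approximation applies.
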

\begin{proof}
For any $\tilde{f} \in \tilde{C}[0,2\pi]$, extend it to a $2\pi$-periodic and continuous function $f$ defined on $\mathbb{R}$. It suffices to show that there exists a trigonometric polynomial that approximates $f$ within any $\epsilon > 0$. We construct the continuous even functions of $2\pi$ period $g$ and $h$ as:
\begin{equation}
g(\theta) = \frac{f(\theta) + f(-\theta)}{2} \quad \text{and} \quad h(\theta) = \frac{f(\theta) - f(-\theta)}{2}\sin(\theta).
\end{equation}
Let $\phi(x) = g(\arccos x)$ and $\psi(x) = h(\arccos x)$. Since $\phi, \psi$ are continuous functions on $[-1,1]$, by the Weierstrass Approximation Theorem \ref{Weierstrass Approximation Theorem}, for any $\epsilon > 0$, there exist polynomials $p$ and $q$ such that 
\begin{equation}
|\phi(x) - p(x)| < \frac{\epsilon}{4} \quad \text{and} \quad |\psi(x) - q(x)| < \frac{\epsilon}{4}
\end{equation}
holds for all $x \in [-1,1]$. Let $x = \cos\theta$, it follows that
\begin{equation}\label{arccos}
|g(\theta) - p(\cos \theta)| < \frac{\epsilon}{4} \quad \text{and} \quad |h(\theta) - q(\cos \theta)| < \frac{\epsilon}{4}
\end{equation}
for $\theta \in [0,\pi]$. Because $g$, $h$ and $cosine$ are even and $2\pi$-periodic, \eqref{arccos} holds for all $\theta \in \mathbb{R}$. From the definitions of $g$ and $h$, and the fact $\left| \sin \theta  \right| \leq 1$, $\left| \sin^{2} \theta  \right| \leq 1$, we have
\begin{equation}
\left|\frac{f(\theta) + f(-\theta)}{2}\sin^2 \theta - p(\cos \theta)\sin^2 \theta \right| < \frac{\epsilon}{4} \quad \text{and} \quad \left|\frac{f(\theta) - f(-\theta)}{2}\sin^{2} \theta - q(\cos \theta)\sin \theta \right| < \frac{\epsilon}{4}.
\end{equation}
Using the triangle inequality, we obtain
\begin{equation}\label{sin part}
\left|f(\theta) \sin^2 \theta - \left[p(\cos \theta) \sin^2 \theta + q(\cos \theta) \sin \theta\right]\right| < \frac{\epsilon}{2}.
\end{equation}
Applying the same analysis to
\begin{equation}
\Tilde{g}(\theta) = \frac{f(\theta + \frac{\pi}{2}) + f(-\theta+ \frac{\pi}{2})}{2} \quad \text{and} \quad \Tilde{h}(\theta) = \frac{f(\theta+ \frac{\pi}{2}) - f(-\theta+ \frac{\pi}{2})}{2}\sin(\theta),
\end{equation}

we can find polynomials $r$ and $s$ such that
\begin{equation}
\left|f\left(\theta + \frac{\pi}{2}\right) \sin^2 \theta - \left[r(\cos \theta) \sin^2 \theta + s(\cos \theta) \sin \theta\right]\right| < \frac{\epsilon}{2}
\end{equation}
holds for all $\theta$. Substituting $\theta $ with $\theta- \frac{\pi}{2}$ gives
\begin{equation}\label{cos part}
\left|f(\theta) \cos^2 \theta - \left[r(\sin \theta) \cos^2 \theta - s(\sin \theta) \cos \theta\right]\right| < \frac{\epsilon}{2}.
\end{equation}
By the triangle inequality, combining \eqref{cos part} and \eqref{sin part} gives
\begin{equation}
\left|f(\theta) - \left[r(\sin \theta) \cos^2 \theta - s(\sin \theta) \cos \theta + p(\cos \theta) \sin^2 \theta + q(\cos \theta) \sin \theta\right]\right| < \epsilon
\end{equation}
holds for all $\theta$. Thus, the trigonometric polynomial 
\begin{equation}
r(\sin \theta) \cos^2 \theta - s(\sin \theta) \cos \theta + p(\cos \theta) \sin^2 \theta + q(\cos \theta) \sin \theta
\end{equation}
is an $\epsilon$-approximation to $f$.
\end{proof}
\begin{remark_appendix}
    If $p(x)$ is a polynomial, it is easy to verify that $p(\cos \theta)$ is a trigonometric polynomial due to the fact $\cos^{n} \theta= \sum_{k=0}^n \frac{\binom{n}{k}}{2^n} \cos \left((n-2 k) \theta \right)$.
\end{remark_appendix}
Prior to proving Theorem \ref{universal approximation theorem for separable operator}, we need to establish the following lemmas.
\begin{lemma}
\label{sine is TW}
Sine is a Tauber-Wiener function.
\end{lemma}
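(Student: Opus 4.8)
The goal is to show that sine is a Tauber--Wiener function, meaning that finite linear combinations $\sum_{i=1}^N c_i \sin(\lambda_i x + \theta_i)$ are dense in $C[a,b]$ for every interval $[a,b]$. The plan is to reduce this to \cref{Trigonometric density}, which already establishes that trigonometric polynomials are dense in the periodic space $\tilde{C}[0,2\pi]$. The essential observations are that a trigonometric polynomial is precisely a finite sum $\sum_k (\alpha_k \cos k\theta + \beta_k \sin k\theta)$, and that each cosine term $\cos k\theta = \sin(k\theta + \pi/2)$ can be rewritten as a sine with a phase shift. Thus every trigonometric polynomial is already a linear combination of the form appearing in \cref{def_TW}, with $\lambda_i = k$ and $\theta_i \in \{0, \pi/2\}$. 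The remaining work is therefore purely a matter of transferring density from the special periodic domain $[0,2\pi]$ to an arbitrary interval $[a,b]$.

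The first step I would take is to handle an arbitrary target $f \in C[a,b]$ by an affine change of variables. Given $[a,b]$, choose an interval of the form $[\alpha, \alpha + 2\pi)$ that strictly contains $[a,b]$ (for instance by rescaling, though here we keep the period fixed at $2\pi$ and instead place $[a,b]$ inside one period); more cleanly, define the affine map $\theta \mapsto x = a + \tfrac{b-a}{2\pi}\theta$ carrying $[0,2\pi]$ onto $[a,b]$, so that $\hat f(\theta) := f\!\left(a + \tfrac{b-a}{2\pi}\theta\right)$ is continuous on $[0,2\pi]$. The subtlety is that $\hat f$ need not satisfy the periodicity constraint $\hat f(0) = \hat f(2\pi)$ required to invoke \cref{Trigonometric density}. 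To repair this, I would modify $\hat f$ on a small subinterval near $2\pi$ to force the endpoint values to agree, producing a function $\tilde f \in \tilde C[0,2\pi]$ with $\|\hat f - \tilde f\|_{C[0, 2\pi - \delta]}$ as small as desired on the portion corresponding to $[a,b]$; since the interior image of $[a,b]$ can be arranged to avoid the modified region, the approximation of $f$ on $[a,b]$ is unaffected.

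Next I would apply \cref{Trigonometric density} to obtain a trigonometric polynomial $P(\theta)$ with $|\tilde f(\theta) - P(\theta)| < \epsilon$ for all $\theta$, hence $|\hat f(\theta) - P(\theta)| < \epsilon$ on the relevant subinterval. Expanding $P(\theta) = \sum_k(\alpha_k \cos k\theta + \beta_k \sin k\theta)$ and substituting $\cos k\theta = \sin(k\theta + \pi/2)$ rewrites $P$ as a finite linear combination of sines $\sum_i c_i \sin(\lambda_i \theta + \theta_i)$. Undoing the affine substitution $\theta = \tfrac{2\pi}{b-a}(x-a)$ converts each $\sin(\lambda_i \theta + \theta_i)$ into $\sin(\lambda_i' x + \theta_i')$ with $\lambda_i' = \tfrac{2\pi \lambda_i}{b-a}$ and $\theta_i' = \theta_i - \tfrac{2\pi\lambda_i a}{b-a}$, which is again of the admissible form. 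This yields $\bigl|f(x) - \sum_i c_i \sin(\lambda_i' x + \theta_i')\bigr| < \epsilon$ for all $x \in [a,b]$, establishing density and hence the claim.

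The main obstacle I anticipate is precisely the periodicity mismatch: \cref{Trigonometric density} is stated only for functions satisfying $f(0) = f(2\pi)$, whereas a generic continuous $f$ on $[a,b]$ imposes no such constraint after transport. The careful point is to arrange the continuous periodic extension so that the correction happens outside the image of $[a,b]$, ensuring the endpoint-matching surgery does not degrade the approximation where it matters. Once that technical reduction is secured, the algebraic rewriting of cosines as phase-shifted sines and the affine rescaling of frequencies and phases are entirely routine.
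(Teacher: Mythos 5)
Your proposal is correct and follows essentially the same route as the paper: extend the target to a continuous periodic function on a full period, invoke \cref{Trigonometric density}, rewrite cosines as phase-shifted sines, and transfer to $[a,b]$ by an affine substitution (the paper arranges this by mapping $[a,b]$ onto $[0,\pi]\subset[0,2\pi]$, so the periodic extension is performed on $[\pi,2\pi]$, safely outside the image of $[a,b]$). One small correction: your ``more cleanly'' variant, which maps $[0,2\pi]$ \emph{onto} $[a,b]$, leaves no room for the endpoint-matching surgery you rightly identify as necessary --- you must keep the image of $[a,b]$ strictly inside one period, exactly as your first suggestion (and the paper's choice of $[0,\pi]$) does.
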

\begin{proof}
Assuming the interval to be $[0,\pi]$ first. For every continuous function $f$ on $[0, \pi]$ and any $\epsilon > 0$, we can extend $f$ to a continuous function $F$ on $[0, 2\pi]$ so that $F(x) = f(x)$ on $[0, \pi]$ and $F(2\pi) = F(0)$. By Lemma \ref{Trigonometric density}, there exists a trigonometric polynomial
\begin{equation}
p(x)=a_0 + \sum_{n=1}^{N} a_n \cos (n x)+b_n \sin (n x)
\end{equation}
such that 
\begin{equation}
\sup _{x \in[0, \pi]}|f(x)-p(x)| \leq \sup _{x \in[0, 2\pi]}|F(x)-p(x)|<\epsilon.
\end{equation}
Let $c_{0}=a_{0}$, $\lambda_{0}=0$, $\theta_{0}=\frac{\pi}{2}$, $c_{2 n-1}=b_n$, $\lambda_{2 n-1}=n$, $\theta_{2 n-1}=0$, $c_{2 n}=a_n$, $\lambda_{2 n}=n$, $\theta_{2 n}=\frac{\pi}{2}$, for $n=1, 2, \ldots, N$, $p(x)$ is redefined as
\begin{equation}
p(x)=\sum_{i=0}^{2 N} c_i \sin \left(\lambda_i x+\theta_i\right).
\end{equation}
Thus we have
\begin{equation}
\label{Weierstrass Approximation}
\left|f(x)-\sum_{i=0}^{2 N} c_i \sin \left(\lambda_i x+\theta_i\right)\right|<\epsilon
\end{equation}
for $x \in [0,\pi]$. Now consider a continuous function $g$ on $[a,b]$, define $f(x)\in C[0, \pi]: = g\left(\frac{b-a}{\pi}x + a \right)$, then by \eqref{Weierstrass Approximation}, we have
\begin{equation}
    \left|g(x)-\sum_{i=0}^{2 N} c_i \sin \left( \frac{\pi\lambda_i}{b-a}x - \frac{\pi \lambda_i a}{b-a} +\theta_i\right)\right|<\epsilon
\end{equation}
holds for all $x \in [a,b]$. Therefore, it follows that for any continuous function $g$ on $[a,b]$ and any $\epsilon>0$, we can approximate $g$ within $\epsilon$ by choosing $N$ sufficiently large and adjusting $c_i$, $\lambda_i$, $\theta_i$ accordingly. Hence, the set of all such linear combinations of $\sin(x)$ is dense in $C[a,b]$, confirming that $\sin(x)$ is a Tauber-Wiener function.
\end{proof}
\begin{remark_appendix}
    It is straightforward to conclude that all sinusoidal functions are Tauber-Wiener functions.
\end{remark_appendix}

\begin{lemma}
\label{Cartesian product is compact}
    Suppose that $V_{1} \subseteq X_{1}$, $V_{2} \subseteq X_{2}$ are two compact sets in Banach spaces $X_{1}$ and $X_{2}$, respectively, then their Cartesian product $V_{1} \times V_{2}$ is also compact.
\end{lemma}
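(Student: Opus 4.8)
The plan is to use the sequential characterization of compactness stated above: I must show that every sequence in $V_1 \times V_2$ admits a subsequence converging to a point of $V_1 \times V_2$. First I would fix a norm on the product Banach space $X_1 \times X_2$, say $\|(x,y)\|_{X_1 \times X_2} = \|x\|_{X_1} + \|y\|_{X_2}$, and observe that a sequence $(x_n, y_n)$ converges to $(x,y)$ in this norm if and only if $x_n \to x$ in $X_1$ and $y_n \to y$ in $X_2$. This coordinate-wise equivalence is the key reduction: it lets me transfer the problem entirely onto the two factors $V_1$ and $V_2$.

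The core of the argument is a nested subsequence extraction. Given an arbitrary sequence $\{(x_n, y_n)\}_{n=1}^{\infty}$ with each $(x_n, y_n) \in V_1 \times V_2$, the first coordinates $\{x_n\}$ lie in $V_1$. Since $V_1$ is compact, there is a subsequence $\{x_{n_k}\}$ converging to some $x \in V_1$. Next I would pass to the corresponding second coordinates $\{y_{n_k}\}$, which lie in the compact set $V_2$; extracting once more yields a further subsequence $\{y_{n_{k_j}}\}$ converging to some $y \in V_2$. Because any subsequence of a convergent sequence converges to the same limit, the first coordinates along this finer index still satisfy $x_{n_{k_j}} \to x$.

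Combining these, the subsequence $\{(x_{n_{k_j}}, y_{n_{k_j}})\}$ converges coordinate-wise to $(x,y)$, hence converges in the product norm, and its limit $(x,y)$ belongs to $V_1 \times V_2$. This exhibits the required convergent subsequence with limit in the set, completing the proof.

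I do not expect a genuine obstacle here. The only subtlety worth flagging is that one cannot extract a convergent subsequence in both coordinates simultaneously and independently; this is precisely why the two extractions must be performed in sequence, the second along the index already thinned by the first. Performing them in this order is exactly what guarantees that \emph{both} coordinate limits survive. Everything else, including the verification that the product norm metrizes coordinate-wise convergence, is routine.
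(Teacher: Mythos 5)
Your proof is correct and follows essentially the same route as the paper's: sequential compactness with a nested subsequence extraction, first in the $V_1$ coordinate and then a further extraction in the $V_2$ coordinate. If anything, your version is slightly more careful than the paper's, since you make the second extraction explicitly a sub-subsequence (index $n_{k_j}$) and note that the product norm metrizes coordinate-wise convergence, whereas the paper's proof leaves both points implicit.
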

\begin{proof}
For every sequence $\left\{x_{n}^{1}, x_{n}^{2}\right\}$ in $V_{1} \times V_{2}$, since $V_1$ is compact, $\left\{x_{n}^{1}\right\}$ has a subsequence $\left\{x_{n_k}^{1}\right\}$ that converges to some element $x^{1} \in V_1$. As well, since $V_2$ is compact, there exists a subsequence $\left\{x_{n_k}^{2}\right\}$ that converges to $x^{2} \in V_2$. It follows that $\left\{x_{n}^{1}, x_{n}^{2}\right\}$ converges to $\left( x^{1}, x^{2}\right) \in V_{1} \times V_{2}$, thus $V_{1} \times V_{2}$ is compact.
\end{proof}

\begin{lemma}
\label{range is compact}
    Suppose that $X$ is a Banach space, $K_{1} \subseteq X_{1}$, $K_{2} \subseteq X_{2}$ are two compact sets in $X_{1}$ and $X_{2}$, respectively. $\mathcal{U}$ is a compact set in $C(K_{1})$, then the range $G(\mathcal{U})$ of the continuous operator $G$ from $\mathcal{U}$ to $C(K_{2})$ is compact in $C(K_{2})$.
\end{lemma}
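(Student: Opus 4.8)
The plan is to prove the statement directly from the sequential definition of compactness adopted earlier in the excerpt, exploiting the fact that a continuous map sends convergent sequences to convergent sequences. Since every space involved is a metric space (with the uniform norms $\|f\|_{C(K_1)}$ and $\|f\|_{C(K_2)}$), the compactness defined in the paper is precisely sequential compactness, and continuity of $G$ is equivalent to sequential continuity. These are the only two facts the argument will require.

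First I would take an arbitrary sequence $\{s_n\}_{n=1}^{\infty}$ with each $s_n \in G(\mathcal{U})$. By the definition of the range, each term can be written as $s_n = G(u_n)$ for some $u_n \in \mathcal{U}$, which produces a companion sequence $\{u_n\}_{n=1}^{\infty}$ in the domain. Next, because $\mathcal{U}$ is compact in $C(K_1)$, this companion sequence admits a subsequence $\{u_{n_k}\}$ converging in the $C(K_1)$ norm to some limit $u \in \mathcal{U}$.

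Finally, I would push this convergence through $G$: by continuity, $u_{n_k} \to u$ implies $G(u_{n_k}) \to G(u)$ in $C(K_2)$, that is, $s_{n_k} \to G(u)$. Since $u \in \mathcal{U}$, the limit $G(u)$ belongs to $G(\mathcal{U})$, so the original sequence $\{s_n\}$ possesses a subsequence converging to an element of $G(\mathcal{U})$. As $\{s_n\}$ was arbitrary, this shows that $G(\mathcal{U})$ is sequentially compact, hence compact in $C(K_2)$.

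I do not expect any substantial obstacle: this is the classical ``continuous image of a compact set is compact,'' specialized to the sequential formulation of compactness used throughout the appendix. The only point deserving a moment of care is to confirm that the continuity assumed for the operator $G$ is (or implies) sequential continuity between the metric spaces $C(K_1)$ and $C(K_2)$, so that convergence of $\{u_{n_k}\}$ genuinely transfers to convergence of $\{G(u_{n_k})\}$; in the metric-space setting this equivalence is automatic, so the proof reduces to the three routine steps above.
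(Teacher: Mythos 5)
Your proof is correct and follows essentially the same route as the paper's: lift an arbitrary sequence in $G(\mathcal{U})$ to a companion sequence in $\mathcal{U}$, extract a convergent subsequence by compactness of $\mathcal{U}$, and push the convergence forward through $G$ by (sequential) continuity. Your added remark that continuity and sequential continuity coincide in the metric-space setting is a harmless clarification the paper leaves implicit.
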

\begin{proof}
For every sequence $\{f_n\}$ in $\mathcal{U}$, since $\mathcal{U}$ is compact, there exists a subsequence $\{f_{n_k}\}$ that converges to some function $f \in \mathcal{U}$.
Since $G$ is continuous, the convergence $f_{n_k} \to f$ in $C(K_1)$ implies
\begin{equation}
G(f_{n_k}) \to G(f) \quad \text{in} \quad C(K_2).
\end{equation}
Thus, for every sequence $\{G(f_n)\}$ in $G(\mathcal{U})$, there exists a subsequence $\{G(f_{n_k})\}$ that converges to $G(f) \in G(\mathcal{U})$. Therefore, the range $G(\mathcal{U})$ of the continuous operator $G$ is compact in $C(K_2)$.
\end{proof}

\subsection{Universal Approximation Theorem for SepONet} \label{proof of theorem1}

\begin{theorem}[Universal Approximation Theorem for Separable Operator Networks]\label{universal approximation theorem for separable operator}
   Suppose that $\sigma \in(\mathrm{TW})$, $g$ is a sinusoidal function, $X$ is a Banach Space, $K \subseteq X$, $K_1 \subseteq \mathbb{R}^{d_{1}}$ and $K_2 \subseteq \mathbb{R}^{d_{2}}$ are three compact sets in $X$, $\mathbb{R}^{d_{1}}$ and $\mathbb{R}^{d_{2}}$, respectively, $\mathcal{U}$ is a compact set in $C\left(K\right)$, $G$ is a nonlinear continuous operator, which maps $\mathcal{U}$ into a compact set $\mathcal{S} \subseteq C\left(K_1 \times K_2 \right)$, then for any $\epsilon>0$, there are positive integers $n$, $r$, $m$, constants $c_i^k$, $\zeta_{k}^{1}$, $\zeta_{k}^{2}$, $\xi_{i j}^k$, $\theta_{i}^k\in \mathbb{R}$, points $\omega_{k}^{1} \in \mathbb{R}^{d_1}$, $\omega_{k}^{2} \in \mathbb{R}^{d_2}$, $x_j \in K_1$, $i=1, \ldots, n$, $k=1, \ldots, r$, $j=1, \ldots,m$, such that
\begin{equation}
\left|G(u)(y)-\sum_{k=1}^r \sum_{i=1}^n c_i^k \sigma\left(\sum_{j=1}^m \xi_{i j}^k u\left(x_j\right)+\theta_i^k\right)g\left(w_{k}^{1} \cdot y_{1}+\zeta_{k}^{1}\right)g\left(w_{k}^{2} \cdot y_{2}+\zeta_{k}^{2}\right)\right|<\epsilon
\end{equation}
holds for all $u \in \mathcal{U}$, $y = (y_{1}, y_{2}) \in K_1 \times K_2$.
\end{theorem}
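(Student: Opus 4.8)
The plan is to adapt the Chen--Chen universal approximation argument for DeepONet and insert one extra step---an angle-addition decomposition---that produces the separable product structure in the trunk. First I would settle the topological prerequisites: by \cref{Cartesian product is compact} the product domain $K_1\times K_2$ is compact, and by \cref{range is compact} the range $\mathcal{S}=G(\mathcal{U})$ is a compact subset of $C(K_1\times K_2)$. This lets me treat $G(u)$ as a generic element of a compact family of continuous functions on the product domain, which is exactly the hypothesis needed to invoke the function-approximation result.

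Next I would apply \cref{approximate function} using the sinusoidal trunk activation $g$, which is Tauber--Wiener by \cref{sine is TW}. This yields, uniformly over $\mathcal{S}$ and over $y\in K_1\times K_2$, an approximation $G(u)(y)\approx \sum_{k=1}^{r_0} c_k(G(u))\,g(\omega_k\cdot y+\zeta_k)$ within $\epsilon/2$, where each $c_k$ is a continuous linear functional on $\mathcal{S}$ and $\omega_k\in\mathbb{R}^{d_1+d_2}$. Splitting $\omega_k=(\omega_k^1,\omega_k^2)$ and $y=(y_1,y_2)$ gives $\omega_k\cdot y=\omega_k^1\cdot y_1+\omega_k^2\cdot y_2$, so each trunk argument is a sum of a term in $y_1$ and a term in $y_2$.

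The key step is to separate the variables via the trigonometric angle-addition identity. Writing $g=\sin$ (absorbing scale and phase into $\omega$ and $\zeta$), the quantity $\sin(\omega_k^1\cdot y_1+\omega_k^2\cdot y_2+\zeta_k)$ expands into a sum of two terms, each a product of a sinusoid in $y_1$ alone and a sinusoid in $y_2$ alone. Because $\cos(x)=\sin(x+\pi/2)$ is itself sinusoidal, every factor can be rewritten in the required form $g(\omega_k^1\cdot y_1+\zeta_k^1)\,g(\omega_k^2\cdot y_2+\zeta_k^2)$ by folding the extra phase into $\zeta_k^1,\zeta_k^2$. This converts each single-trunk mode into two separable modes at the cost of doubling the count to $r=2r_0$, and (by the second remark) iterating the identity extends the construction to $d$ trunk nets.

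Finally I would replace each coefficient functional by a branch network. Since $G$ is continuous and each $c_k$ is a continuous linear functional, $u\mapsto c_k(G(u))$ is a continuous functional on the compact set $\mathcal{U}$, so \cref{approximate functional} approximates it by $\sum_{i=1}^n c_i^k\,\sigma(\sum_{j=1}^m \xi_{ij}^k u(x_j)+\theta_i^k)$; unioning the finitely many sensor sets and padding with zero weights lets all branch networks share common sensors $x_1,\dots,x_m$. A triangle-inequality estimate then closes the argument: the trunk factors are bounded sinusoids and there are only finitely many modes, so taking the branch tolerance small enough forces the total error below $\epsilon$. I expect the genuine crux to be the separation step, not the bookkeeping: achieving the \emph{exact} product form is clean only because $g$ is restricted to sinusoidal activations, and it is precisely this identity that the sinusoidal hypothesis is designed to supply---which is why the paper leaves the general Tauber--Wiener trunk case open.
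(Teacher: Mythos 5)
Your proposal is correct and follows essentially the same route as the paper's proof: establish compactness of $K_1\times K_2$ and of $G(\mathcal{U})$, apply the Chen--Chen function-approximation theorem with the sinusoidal (hence Tauber--Wiener) trunk activation, separate each mode via the angle-addition identity at the cost of doubling the rank, approximate each coefficient functional by a branch network, and close with a triangle inequality using the uniform bound on the trunk products. No substantive differences from the paper's argument.
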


\begin{proof}\label{proof_uat}
Without loss of generality, we can assume that $g$ is sine function, by Lemma \ref{sine is TW}, we have $g \in (TW)$; From the assumption that $K_1$ and $K_2$ are compact, by Lemma \ref{Cartesian product is compact}, $K_{1} \times K_{2}$ is compact; Since $G$ is a continuous operator that maps $\mathcal{U}$ into $C(K_{1} \times K_{2})$, it follows that the range $G(\mathcal{U}) = \left\{ G(u): u\in \mathcal{U}\right\}$ is compact in $C(K_{1} \times K_{2})$ due to Lemma \ref{range is compact}; Thus by Theorem \ref{approximate function}, for any $\epsilon>0$, there exists a positive integer $N$, real numbers $c_{k}(G(u))$ and $\zeta_{k}$, vectors $\omega_k \in R^{d_1 + d_2}$, $k = 1, \ldots, N$, such that
\begin{equation}
\label{eq from thm3}
\left|G(u)(y)-\sum_{k=1}^N c_k(G(u)) g\left(\omega_k \cdot y+\zeta_k\right)\right|<\frac{\epsilon}{2}
\end{equation}
holds for all $y\in K_1 \times K_2$ and $u \in C(K)$. Let $(\omega_{k}^{1},\omega_{k}^{2}) = \omega_{k}$, where $\omega_{k}^{1} \in \mathbb{R}^{d_1}$ and $\omega_{k}^{2} \in \mathbb{R}^{d_2}$. Utilizing the trigonometric angle addition formula, we have 
\begin{equation}
    g\left(\omega_k \cdot y+\zeta_k\right) = g\left(\omega_{k}^{1} \cdot y_{1} +\zeta_{k}\right)g\left(\omega_{k}^{2} \cdot y_{2} +\frac{\pi}{2} \right) + g\left( \omega_{k}^{1} \cdot y_{1} +\zeta_{k} +\frac{\pi}{2} \right)g\left(\omega_{k}^{2} \cdot y_{2} \right).
\end{equation}
Let $r=2N$, $c_{N+k}(G(u))=c_{k}(G(u))$, $\omega_{N+k}^{1} = \omega_{k}^{1}$, $\omega_{N+k}^{2} = \omega_{k}^{2}$, $\zeta_{k}^{1}=\zeta_{k}$, $\zeta_{k}^{2}= \frac{\pi}{2}$ for $k = 1, \ldots, N$, and let $\zeta_{k}^{1}=\zeta_{k} + \frac{\pi}{2}$, $\zeta_{k}^{2}=0$ for $k = N+1,\ldots,r$, equation \ref{eq from thm3} can be expressed as:
\begin{equation}
\label{eq from thm3 and trigonometric identity}
\left|G(u)(y)-\sum_{k=1}^{r} c_k(G(u)) g\left(\omega_{k}^{1} \cdot y_{1}+\zeta_{k}^{1}\right)g\left(\omega_{k}^{2} \cdot y_{2}+\zeta_{k}^{2}\right)\right|<\frac{\epsilon}{2}.
\end{equation}
Since $G$ is a continuous operator, according to the last proposition of Theorem \ref{approximate function}, we conclude that for each $k=1,\ldots,2N$, $c_{k}(G(u))$ is a continuous functional defined on $\mathcal{U}$. Repeatedly applying Theorem \ref{approximate functional}, for each $k=1,\ldots,2N$, $c_{k}(G(u))$, we can find positive integers $n_{k}$,$m_{k}$, constants $c_{i}^{k}$, $\xi_{ij}^{k}$, $\theta_{i}^{k} \in R$ and $x_{j}\in K_{1}$, $i = 1, \ldots, n_{k}$, $j= 1,\ldots,m_{k}$, such that
\begin{equation}
\label{eq from thm4}
    \left| c_{k}(G(u))-\sum_{i=1}^{n_{k}}c_{i}^{k}\sigma\left(\sum_{j=1}^{m_{k}}\xi_{ij}^{k}u(x_{j})+\theta_{i}^{k} \right)\right| < \frac{\epsilon}{2L}
\end{equation}
holds for all $k=1,\ldots,r$ and $u \in \mathcal{U}$, where
\begin{equation}
    L = \sum_{k=1}^{r}\sup_{y_{1}\in K_{2},y_{2}\in K_{3}}\left| g\left(\omega_{k}^{1} \cdot y_{1}+\zeta_{k}^{1}\right)g\left(\omega_{k}^{2} \cdot y_{2}+\zeta_{k}^{2}\right) \right|.
\end{equation}
Substituting \eqref{eq from thm4} into \eqref{eq from thm3 and trigonometric identity}, we obtain that
\begin{equation}
\label{eq for final proof}
\left|G(u)(y)-\sum_{k=1}^r \sum_{i=1}^{n_{k}} c_i^k \sigma\left(\sum_{j=1}^{m_{k}} \xi_{i j}^k u\left(x_j\right)+\theta_i^k\right)g\left(w_{k}^{1} \cdot y_{1}+\zeta_{k}^{1}\right)g\left(w_{k}^{2} \cdot y_{2}+\zeta_{k}^{2}\right)\right|<\epsilon
\end{equation}
holds for all $u \in \mathcal{U}$, $y_{1} \in K_{1}$ and $y_{2} \in K_{2}$. Let $n = \max_{k}{n_k}$, $m=\max_{k}{m_k}$. For all $n_{k}<i\leq n$, let $c_i^{k}=0$. For all $m_{k}<j\leq m$, let $\xi_{i j}^k = 0$. Then \eqref{eq for final proof} can be rewritten as:
\begin{equation}
\left|G(u)(y)-\sum_{k=1}^r \sum_{i=1}^{n} c_i^k \sigma\left(\sum_{j=1}^{m} \xi_{i j}^k u\left(x_j\right)+\theta_i^k\right)g\left(w_{k}^{1} \cdot y_{1}+\zeta_{k}^{1}\right)g\left(w_{k}^{2} \cdot y_{2}+\zeta_{k}^{2}\right)\right|<\epsilon,
\end{equation}
which holds for all $u \in \mathcal{U}$, $y_{1} \in K_{1}$ and $y_{2} \in K_{2}$. This completes the proof of Theorem \ref{universal approximation theorem for separable operator}. 

\end{proof}

\section{PDE Problem Definitions, Training details, and Complete Test Results} 

\subsection{PDE Problem Definitions}\label{app: pdes}
All PDE test problems exhibited in Section \ref{num_results} are described in the subsections below.
\subsubsection{Diffusion-Reaction Systems} 
\label{sec 5-1}
We set the diffusion coefficient $D = 0.01$ and the reaction rate $k=0.01$. The input training source terms are sampled from a mean-zero Gaussian random field (GRF) \citep{seeger2004gaussian} with a length scale 0.2. To generate the test dataset, we sample 100 different source terms from the same GRF and apply a second-order implicit finite difference method \citep{iserles2009first} to obtain the reference solutions on a uniform $128 \times 128$ grid. The specific physics loss terms in equation \eqref{physics loss} are defined as follows:
\begin{equation}
\begin{aligned}
\mathcal{L}_{residual} &= \frac{1}{N_{f} N_{r}} \sum_{i=1}^{N_f} \sum_{j=1}^{N_r} \Bigg| \frac{\partial G_{\boldsymbol{\theta}}(u^{(i)})(x_{r}^{(j)}, t_{r}^{(j)})}{\partial t_{r}^{(j)}} - D \frac{\partial^2 G_{\boldsymbol{\theta}}(u^{(i)})(x_{r}^{(j)}, t_{r}^{(j)})}{\partial (x_{r}^{(j)})^2} \\
& \qquad - k \left( G_{\boldsymbol{\theta}}(u^{(i)})(x_{r}^{(j)}, t_{r}^{(j)}) \right)^2 - u^{(i)}(x_{r}^{(j)}) \Bigg|^{2}, \\
\mathcal{L}_{initial} &= \frac{1}{N_{f} N_{I}} \sum_{i=1}^{N_f} \sum_{j=1}^{N_{I}} \left| G_{\boldsymbol{\theta}}(u^{(i)})(x_{I}^{(j)}, 0) \right|^{2}, \\
\mathcal{L}_{boundary} &= \frac{1}{N_{f} N_{b}} \sum_{i=1}^{N_f} \sum_{j=1}^{N_{b}} \left( \left| G_{\boldsymbol{\theta}}(u^{(i)})(0, t_{b}^{(j)}) \right|^{2} + \left| G_{\boldsymbol{\theta}}(u^{(i)})(1, t_{b}^{(j)}) \right|^{2} \right).
\end{aligned}
\label{physics_loss_diffusion_reaction}
\end{equation}

\subsubsection{Advection Equation}\label{sec:advection}
The input training variable coefficients are strictly positive by defining $u(x)=v(x) - \min_{x}v(x)+ 1$, where $v$ is sampled from a GRF with length scale 0.2. To create the test dataset, we generate 100 new coefficients in the same manner that are not used in training and apply the Lax–Wendroff scheme~\citep{iserles2009first} to solve the advection equation on a uniform $128 \times 128$ grid. The specific physics loss terms in equation \eqref{physics loss} are defined as follows:
\begin{equation}
\begin{aligned}
\mathcal{L}_{residual} &= \frac{1}{N_{f} N_{r}}\sum_{i=1}^{N_f} \sum_{j=1}^{N_r} \left| \frac{\partial G_{\boldsymbol{\theta}}(u^{(i)})(x_{r}^{(j)}, t_{r}^{(j)})}{\partial t_{r}^{(j)}} + u^{(i)}(x_{r}^{(j)}) \frac{\partial G_{\boldsymbol{\theta}}(u^{(i)})(x_{r}^{(j)}, t_{r}^{(j)})}{\partial x_{r}^{(j)}} \right|^{2}, \\
\mathcal{L}_{initial} &= \frac{1}{N_{f} N_{I}}\sum_{i=1}^{N_f} \sum_{j=1}^{N_{I}} \left| G_{\boldsymbol{\theta}}(u^{(i)})(x_{I}^{(j)}, 0) - \sin(\pi x_{I}^{(j)}) \right|^{2}, \\
\mathcal{L}_{boundary} &= \frac{1}{N_{f} N_{b}}\sum_{i=1}^{N_f} \sum_{j=1}^{N_{b}} \left| G_{\boldsymbol{\theta}}(u^{(i)})(0, t_{b}^{(j)}) - \sin\left( \frac{\pi}{2} t_{b}^{(j)} \right) \right|^{2}.
\end{aligned}
\label{physics_loss_advection}
\end{equation}

\subsubsection{Burgers’ Equation}\label{burgers}
The input training initial conditions are sampled from a GRF~$\sim \mathcal{N}\left(0,25^2\left(-\Delta+5^2 I\right)^{-4}\right)$ using the Chebfun package~\citep{driscoll2014chebfun}, satisfying the periodic boundary conditions. Synthetic test dataset consists of 100 unseen initial functions and their corresponding solutions, which are generated from the same GRF and are solved by spectral method on a $101 \times 101$ uniform grid using the spinOp library~\citep{Palani2024}, respectively. The corresponding physics loss terms in equation \eqref{physics loss} are defined as:
\begin{equation} 
\begin{aligned}
\mathcal{L}_{residual} &= \frac{1}{N_{f} N_{r}}\sum_{i=1}^{N_f} \sum_{j=1}^{N_r} \left| \frac{\partial G_{\boldsymbol{\theta}}(u^{(i)})(x_{r}^{(j)}, t_{r}^{(j)})}{\partial t_{r}^{(j)}} + G_{\boldsymbol{\theta}}(u^{(i)})(x_{r}^{(j)}, t_{r}^{(j)}) \frac{\partial G_{\boldsymbol{\theta}}(u^{(i)})(x_{r}^{(j)}, t_{r}^{(j)})}{\partial x_{r}^{(j)}} \right. \\
& \qquad \qquad \qquad \qquad \left. - \nu \frac{\partial^2 G_{\boldsymbol{\theta}}(u^{(i)})(x_{r}^{(j)}, t_{r}^{(j)})}{\partial (x_{r}^{(j)})^2} \right|^{2}, \\
\mathcal{L}_{initial} &= \frac{1}{N_{f} N_{I}}\sum_{i=1}^{N_f} \sum_{j=1}^{N_{I}} \left| G_{\boldsymbol{\theta}}(u^{(i)})(x_{I}^{(j)}, 0) - u^{(j)}(x_{I}^{(j)}) \right|^{2}, \\
\mathcal{L}_{boundary} &= \frac{1}{N_{f} N_{b}}\sum_{i=1}^{N_f} \sum_{j=1}^{N_{b}} \left( \left| G_{\boldsymbol{\theta}}(u^{(i)})(0, t_{b}^{(j)}) - G_{\boldsymbol{\theta}}(u^{(i)})(1, t_{b}^{(j)}) \right|^{2} + \left| \frac{\partial G_{\boldsymbol{\theta}}(u^{(i)})(x, t_{b}^{(j)})}{\left. \partial x \right|_{x=0}} - \frac{\partial G_{\boldsymbol{\theta}}(u^{(i)})(x, t_{b}^{(j)})}{\left. \partial x\right|_{x=1}} \right|^{2} \right).
\end{aligned}
\label{physics_loss_burgers}
\end{equation}

\subsubsection{2D Nonlinear Diffusion Equation}\label{2dnonlinear}
The input training initial conditions are generated as a sum of Gaussian functions, parameterized as:
\begin{equation}
u(x,y) = \sum_{i=1}^3 A_i \exp[-w_i\{(x - x_i)^2 + (y - y_i)^2\}],
\end{equation}
where $A_i \sim \mathcal{U}(0.2, 0.5)$ are amplitudes, $w_i \sim \mathcal{U}(10, 20)$ are width parameters, and $(x_i, y_i) \sim \mathcal{U}(-0.5, 0.5)^2$ are center coordinates. We also generate 100 unseen test initial conditions using this method. The nonlinear diffusion equation is then solved using explicit Adams method to obtain reference solutions on a uniform $101 \times 101$ spatial grid with 101 time points. Physics loss terms in equation \eqref{physics loss} for this problem are:
\begin{equation}
\begin{aligned}
\mathcal{L}_{residual} &= \frac{1}{N_f N_r} \sum_{i=1}^{N_f} \sum_{j=1}^{N_r} \left| \frac{\partial G_{\boldsymbol{\theta}}(u^{(i)})(\boldsymbol{x}_{r}^{(j)}, t_r^{(j)})}{\partial t_r^{(j)}} - \alpha \nabla \cdot \left(G_{\boldsymbol{\theta}}(u^{(i)})(\boldsymbol{x}_{r}^{(j)}, t_r^{(j)}) \nabla G_{\boldsymbol{\theta}}(u^{(i)})(\boldsymbol{x}_r^{(j)}, t_r^{(j)}) \right) \right|^2, \\
\mathcal{L}_{initial} &= \frac{1}{N_f N_I} \sum_{i=1}^{N_f} \sum_{j=1}^{N_I} \left| G_{\boldsymbol{\theta}}(u^{(i)})(\boldsymbol{x}_I^{(j)}, 0) - u^{(i)}(\boldsymbol{x}_I^{(j)}) \right|^2, \\
\mathcal{L}_{boundary} &= \frac{1}{N_f N_b} \sum_{i=1}^{N_f} \sum_{j=1}^{N_b} \left| G_{\boldsymbol{\theta}}(u^{(i)})(\boldsymbol{x}_b^{(j)}, t_b^{(j)}) \right|^2.
\end{aligned}
\label{physics_loss_diffusion}
\end{equation}

\subsection{Training Details and Hyperparameters}\label{app:hyperparams} Both PI-DeepONet and SepONet were trained by minimizing the physics loss (equation\eqref{total physics loss}) using gradient descent with the Adam optimizer \citep{kingma2014adam}. The initial learning rate is $1 \times 10^{-3}$ and decays by a factor of 0.9 every 1,000 iterations. Additionally, we resample input training functions and training points (including residual, initial, and boundary points) every 100 iterations. 

Across all benchmarks and on both models (SepONet and PI-DeepONet), we apply Tanh activation for the branch net and Sine activation for the trunk net. We note that no extensive hyperparameter tuning was performed for either PI-DeepONet or SepONet. The code in this study is implemented using JAX and Equinox libraries \citep{jax2018github,kidger2021equinox}, and all training was performed on a single NVIDIA A100 GPU with 80 GB of memory. Training hyperparameters are provided in Table \ref{tab:pde-hyperparameters}.

\begin{table}
\centering
\caption{Training hyperparameters for different PDE benchmarks}
\label{tab:pde-hyperparameters}
\resizebox{\textwidth}{!}{%
\begin{tabular}{ccccc}
\hline
Hyperparameters \textbackslash~PDEs & Diffusion-reaction & Advection & Burgers' & 2D Nonlinear diffusion \\
\hline
\# of sensors &128 &128 &101 &10201 (101 × 101) \\
Network depth &5 &6 &7 &7 \\
Network width &50 &100 &100 &128 \\
\# of training iterations &50k &120k &80k &80k \\
Weight coefficients ($\lambda_{I}$~/~$\lambda_{b}$) &1~/~1 &100~/~100 &20~/~1 &20~/~1 \\
\hline
\end{tabular}
}
\end{table}

\subsection{Complete Test Results} \label{app:complete test results}
We report the relative $\ell_{2}$ error, root mean squared error (RMSE), GPU memory usage and total training time as metrics to assess the performance of PI-DeepONet and SepONet. Specifically, the mean and standard deviation of the relative $\ell_{2}$ error and RMSE are calculated over all functions in the test dataset. The complete test results are shown in Table \ref{tab:ccombined-performance-varying-nc} and Table \ref{tab:combined-performance-varying-nf}.

\begin{table}[htbp]
\centering
\caption{Performance comparison of PI-DeepONet and SepONet with varying number of training points ($N_c$) and fixed number of input training functions ($N_f = 100$). The '-' symbol indicates that results are not available due to out-of-memory issues.}
\resizebox{\textwidth}{!}{%
\begin{tabular}{c c c c c c c c c}
\toprule
Equations & Metrics & Models & $8^d$ & $16^d$ & $32^d$ & $64^d$ & $128^d$ \\
\midrule
\multirow{8}{*}{\begin{tabular}{c}Diffusion-Reaction\\$d=2$\end{tabular}} & \multirow{2}{*}{Relative $\ell_{2}$ error (\%)} & PI-DeepONet & 1.39 ± 0.71 & 1.11 ± 0.59 & 0.87 ± 0.41 & 0.83 ± 0.35 & 0.73 ± 0.34 \\
& & SepONet & 1.49 ± 0.82 & 0.79 ± 0.35 & 0.70 ± 0.33 & 0.62 ± 0.28 & 0.62 ± 0.26 \\
\cmidrule{2-8}
& \multirow{2}{*}{RMSE ($\times 10^{-2}$)} & PI-DeepONet & 0.58 ± 0.29 & 0.46 ± 0.22 & 0.37 ± 0.20 & 0.36 ± 0.20 & 0.32 ± 0.18 \\
& & SepONet & 0.62 ± 0.28 & 0.35 ± 0.22 & 0.32 ± 0.23 & 0.28 ± 0.20 & 0.29 ± 0.21 \\
\cmidrule{2-8}
& \multirow{2}{*}{Memory (GB)} & PI-DeepONet & 0.729 & 1.227 & 3.023 & 9.175 & 35.371 \\
& & SepONet & \textbf{0.715} & \textbf{0.717} & \textbf{0.715} & \textbf{0.717} & \textbf{0.719} \\
\cmidrule{2-8}
& \multirow{2}{*}{Training time (hours)} & PI-DeepONet & 0.0433 & 0.0641 & 0.1497 & 0.7252 & 2.8025 \\
& & SepONet & \textbf{0.0403} & \textbf{0.0418} & \textbf{0.0430} & \textbf{0.0427} & \textbf{0.0326} \\
\midrule
\multirow{8}{*}{\begin{tabular}{c}Advection\\$d=2$\end{tabular}} & \multirow{2}{*}{Relative $\ell_{2}$ error (\%)} & PI-DeepONet & 9.27 ± 1.94 & 7.55 ± 1.86 & 6.79 ± 1.84 & 6.69 ± 1.95 & 5.72 ± 1.57 \\
& & SepONet & 14.29 ± 2.65 & 11.96 ± 2.17 & 6.14 ± 1.58 & 5.80 ± 1.57 & 4.99 ± 1.40 \\
\cmidrule{2-8}
& \multirow{2}{*}{RMSE ($\times 10^{-2}$)} & PI-DeepONet & 5.88 ± 1.34 & 4.79 ± 1.27 & 4.31 ± 1.23 & 4.24 ± 1.29 & 3.63 ± 1.05 \\
& & SepONet & 9.06 ± 1.88 & 7.58 ± 1.55 & 3.90 ± 1.09 & 3.69 ± 1.07 & 3.17 ± 0.95 \\
\cmidrule{2-8}
& \multirow{2}{*}{Memory (GB)} & PI-DeepONet & 0.967 & 1.741 & 5.103 & 17.995 & 59.806 \\
& & SepONet & \textbf{0.713} & \textbf{0.715} & \textbf{0.715} & \textbf{0.715} & \textbf{0.719} \\
\cmidrule{2-8}
& \multirow{2}{*}{Training time (hours)} & PI-DeepONet & \textbf{0.0787} & 0.1411 & 0.4836 & 2.3987 & 8.231 \\
& & SepONet & 0.0843 & \textbf{0.0815} & \textbf{0.0844} & \textbf{0.0726} & \textbf{0.0730} \\
\midrule
\multirow{8}{*}{\begin{tabular}{c}Burgers'\\$d=2$\end{tabular}} & \multirow{2}{*}{Relative $\ell_{2}$ error (\%)} & PI-DeepONet & 29.33 ± 3.85 & 20.31 ± 4.31 & 14.17 ± 5.25 & 13.72 ± 5.59 & - \\
& & SepONet & 29.42 ± 3.79 & 31.53 ± 3.44 & 28.74 ± 4.11 & 11.85 ± 4.06 & 7.51 ± 4.04 \\
\cmidrule{2-8}
& \multirow{2}{*}{RMSE ($\times 10^{-2}$)} & PI-DeepONet & 4.19 ± 2.79 & 2.82 ± 1.86 & 2.23 ± 2.10 & 2.20 ± 2.13 & - \\
& & SepONet & 4.18 ± 2.74 & 4.44 ± 2.81 & 4.11 ± 2.76 & 1.80 ± 1.60 & 1.23 ± 1.44 \\
\cmidrule{2-8}
& \multirow{2}{*}{Memory (GB)} & PI-DeepONet & 1.253 & 2.781 & 5.087 & 18.001 & - \\
& & SepONet & \textbf{0.603} & \textbf{0.605} & \textbf{0.603} & \textbf{0.605} & \textbf{0.716} \\
\cmidrule{2-8}
& \multirow{2}{*}{Training time (hours)} & PI-DeepONet & 0.1497 & 0.2375 & 0.6431 & 3.2162 & - \\
& & SepONet & \textbf{0.0706} & \textbf{0.0719} & \textbf{0.0716} & \textbf{0.0718} & \textbf{0.0605} \\
\midrule
\multirow{8}{*}{\begin{tabular}{c}Nonlinear diffusion\\$d=3$\end{tabular}} & \multirow{2}{*}{Relative $\ell_{2}$ error (\%)} & PI-DeepONet & 17.38 ± 5.56 & 9.90 ± 2.91 & - & - & - \\
& & SepONet & 16.10 ± 4.46 & 12.11 ± 3.89 & 6.81 ± 1.98 & 6.73 ± 1.96 & 6.44 ± 1.69 \\
\cmidrule{2-8}
& \multirow{2}{*}{RMSE ($\times 10^{-2}$)} & PI-DeepONet & 1.86 ± 0.62 & 1.04 ± 0.23 & - & - & - \\
& & SepONet & 1.72 ± 0.49 & 1.29 ± 0.37 & 0.72 ± 0.19 & 0.71 ± 0.17 & 0.68 ± 0.15 \\
\cmidrule{2-8}
& \multirow{2}{*}{Memory (GB)} & PI-DeepONet & 6.993 & 37.715 & - & - & - \\
& & SepONet & \textbf{3.471} & \textbf{2.897} & \textbf{2.899} & \textbf{2.897} & \textbf{13.139} \\
\cmidrule{2-8}
& \multirow{2}{*}{Training time (hours)} & PI-DeepONet & 0.5836 & 6.6399 & - & - & - \\
& & SepONet & \textbf{0.1044} & \textbf{0.1069} & \textbf{0.1056} & \textbf{0.1456} & \textbf{0.5575} \\
\bottomrule
\end{tabular}
}
\label{tab:ccombined-performance-varying-nc}
\end{table}

\begin{table}[htbp]
\centering
\caption{Performance comparison of PI-DeepONet and SepONet with varying number of input training functions ($N_f$) and fixed number of training points ($N_c = 128^d$, $d$ indicated by problem instance). The '-' symbol indicates that results are not available due to out-of-memory issues.}
\resizebox{\textwidth}{!}{%
\begin{tabular}{c c c c c c c c c}
\toprule
Equations & Metrics & Models & $5$ & $10$ & $20$ & $50$ & $100$ \\
\midrule
\multirow{8}{*}{\begin{tabular}{c}Diffusion-Reaction\\$d=2$\end{tabular}} & \multirow{2}{*}{Relative $\ell_{2}$ error (\%)} & PI-DeepONet & 34.54 ± 27.83 & 4.23 ± 2.52 & 1.72 ± 1.00 & 0.91 ± 0.46 & 0.73 ± 0.34 \\
& & SepONet & 22.40 ± 12.30 & 3.11 ± 1.89 & 1.19 ± 0.74 & 0.73 ± 0.32 & 0.62 ± 0.26 \\
\cmidrule{2-8}
& \multirow{2}{*}{RMSE ($\times 10^{-2}$)} & PI-DeepONet & 14.50 ± 9.04 & 1.75 ± 0.90 & 0.71 ± 0.37 & 0.40 ± 0.28 & 0.32 ± 0.18 \\
& & SepONet & 9.34 ± 5.37 & 1.36 ± 1.07 & 0.50 ± 0.29 & 0.34 ± 0.25 & 0.29 ± 0.21 \\
\cmidrule{2-8}
& \multirow{2}{*}{Memory (GB)} & PI-DeepONet & 2.767 & 5.105 & 9.239 & 17.951 & 35.371 \\
& & SepONet & \textbf{0.719} & \textbf{0.719} & \textbf{0.717} & \textbf{0.717} & \textbf{0.719} \\
\cmidrule{2-8}
& \multirow{2}{*}{Training time (hours)} & PI-DeepONet & 0.1268 & 0.2218 & 0.5864 & 1.4018 & 2.8025 \\
& & SepONet & \textbf{0.0375} & \textbf{0.0390} & \textbf{0.0370} & \textbf{0.0317} & \textbf{0.0326} \\
\midrule
\multirow{8}{*}{\begin{tabular}{c}Advection\\$d=2$\end{tabular}} & \multirow{2}{*}{Relative $\ell_{2}$ error (\%)} & PI-DeepONet & 9.64 ± 2.91 & 8.77 ± 2.23 & 7.57 ± 1.98 & 6.69 ± 1.93 & 5.72 ± 1.57 \\
& & SepONet & 7.62 ± 2.06 & 6.59 ± 1.71 & 5.47 ± 1.57 & 5.18 ± 1.51 & 4.99 ± 1.40 \\
\cmidrule{2-8}
& \multirow{2}{*}{RMSE ($\times 10^{-2}$)} & PI-DeepONet & 6.11 ± 1.90 & 5.55 ± 1.51 & 4.80 ± 1.33 & 4.24 ± 1.28 & 3.63 ± 1.05 \\
& & SepONet & 4.83 ± 1.38 & 4.18 ± 1.17 & 3.47 ± 1.06 & 3.29 ± 1.02 & 3.17 ± 0.95 \\
\cmidrule{2-8}
& \multirow{2}{*}{Memory (GB)} & PI-DeepONet & 3.021 & 5.611 & 9.707 & 34.511 & 59.806 \\
& & SepONet & \textbf{0.713} & \textbf{0.715} & \textbf{0.719} & \textbf{0.719} & \textbf{0.719} \\
\cmidrule{2-8}
& \multirow{2}{*}{Training time (hours)} & PI-DeepONet & 0.3997 & 1.0766 & 1.9765 & 4.411 & 8.231 \\
& & SepONet & \textbf{0.0754} & \textbf{0.0715} & \textbf{0.0736} & \textbf{0.0720} & \textbf{0.0730} \\
\midrule
\multirow{8}{*}{\begin{tabular}{c}Burgers'\\$d=2$\end{tabular}} & \multirow{2}{*}{Relative $\ell_{2}$ error (\%)} & PI-DeepONet & 28.48 ± 4.17 & 28.63 ± 4.10 & 28.26 ± 4.38 & 12.33 ± 5.14 & - \\
& & SepONet & 27.79 ± 4.40 & 28.16 ± 4.24 & 22.78 ± 6.47 & 10.25 ± 4.44 & 7.51 ± 4.04 \\
\cmidrule{2-8}
& \multirow{2}{*}{RMSE ($\times 10^{-2}$)} & PI-DeepONet & 4.09 ± 2.77 & 4.11 ± 2.78 & 4.07 ± 2.78 & 1.96 ± 1.92 & - \\
& & SepONet & 4.01 ± 2.75 & 4.05 ± 2.76 & 3.30 ± 2.55 & 1.65 ± 1.64 & 1.23 ± 1.44 \\
\cmidrule{2-8}
& \multirow{2}{*}{Memory (GB)} & PI-DeepONet & 5.085 & 9.695 & 17.913 & 35.433 & - \\
& & SepONet & \textbf{0.605} & \textbf{0.607} & \textbf{0.607} & \textbf{0.609} & \textbf{0.716} \\
\cmidrule{2-8}
& \multirow{2}{*}{Training time (hours)} & PI-DeepONet & 0.5135 & 1.3896 & 2.6904 & 5.923 & - \\
& & SepONet & \textbf{0.0725} & \textbf{0.0707} & \textbf{0.0703} & \textbf{0.0612} & \textbf{0.0605} \\
\midrule
\multirow{8}{*}{\begin{tabular}{c}Nonlinear diffusion\\$d=3$\end{tabular}} & \multirow{2}{*}{Relative $\ell_{2}$ error (\%)} & PI-DeepONet & - & - & - & - & - \\
& & SepONet & 31.94 ± 9.18 & 25.48 ± 8.95 & 21.16 ± 7.82 & 10.21 ± 3.31 & 6.44 ± 1.69 \\
\cmidrule{2-8}
& \multirow{2}{*}{RMSE ($\times 10^{-2}$)} & PI-DeepONet & - & - & - & - & - \\
& & SepONet & 3.44 ± 1.13 & 2.73 ± 0.99 & 2.27 ± 0.91 & 1.09 ± 0.32 & 0.68 ± 0.15 \\
\cmidrule{2-8}
& \multirow{2}{*}{Memory (GB)} & PI-DeepONet & - & - & - & - & - \\
& & SepONet & \textbf{2.923} & \textbf{3.139} & \textbf{4.947} & \textbf{6.995} & \textbf{13.139} \\
\cmidrule{2-8}
& \multirow{2}{*}{Training time (hours)} & PI-DeepONet & - & - & - & - & - \\
& & SepONet & \textbf{0.1175} & \textbf{0.1408} & \textbf{0.1849} & \textbf{0.3262} & \textbf{0.5575} \\
\bottomrule
\end{tabular}
}
\label{tab:combined-performance-varying-nf}
\end{table}

\subsection{Ablation Studies}
\subsubsection{Trunk Networks with Hyperbolic Tangent Activations}\label{app:tanh}
In Figure \ref{fig:Nc scaling tanh} and Figure \,\ref{fig:Nf scaling tanh}, we provide complete testing results repeating our experiments from Figure \ref{fig:Nc scaling} and Figure \ref{fig:Nf scaling} for all PDE examples, varying $N_c$ and $N_f$, except we use hyperbolic tangenet (TanH) activation functions for all hidden and output layers of the trunk networks in both PI-DeepONet and SepONet. The results are very similar, indicating that alternative activation functions may be chosen for multi-layer trunk networks to maintain the universal approximation property in accordance with Theorem \ref{universal approximation theorem for operator}.

\begin{figure}
    \centering
    \includegraphics[width=1\textwidth]{figure1_varying_Nc_tanh.png}
    \caption{Performance comparison of PI-DeepONet and SepONet with TanH trunk network activation functions, varying number of training points ($N_c$) and fixed number of input functions ($N_{f} = 100$). Results show test accuracy, GPU memory usage, and training time for four PDEs.}
    \label{fig:Nc scaling tanh}
\end{figure}

\begin{figure}
    \centering
    \includegraphics[width=1\textwidth]{figure2_varying_Nf_tanh.png}
    \caption{Performance comparison of PI-DeepONet and SepONet with TanH trunk network activation functions, increasing number of input functions ($N_f$) and fixed number of training points ($N_{c} = 128^{d}$, where $d$ is the problem dimension). Note: PI-DeepONet results for the (2+1)-dimensional diffusion equation are unavailable due to memory constraints.}
    \label{fig:Nf scaling tanh}
\end{figure}

\subsubsection{Varying the Input Function Discretization to the Branch Network}\label{app:sensors}
Given an input function PDE configuration $u$, recall that the branch network predicts coefficients $\beta_k=b_\psi (\mathcal{E}(u))$ for $k=1,...,r$. Our studies in Section \ref{num_results} use a simple encoder that measures the input function $\mathcal{E}(u)=(u(x_1),...,u(x_m))$ at points $x_1,x_2,...,x_m\in K$ in the input function domain. High-dimensional or highly oscillatory input functions may lead to unwieldy discretizations with large branch input dimension that affect training performance. Here, in Figure \ref{fig:scaling bd}, we study the sensitivity of the PDE examples from Section \ref{num_results} with respect to the number of input function sensors (branch input dimension). Note that we fix the number of input functions $N_f=20$ for all experiments, while $N_c=32^2$ for diffusion-reaction, advection, and Burgers' equations, and $N_c=16^3$ for nonlinear diffusion. We find that the error curves converge to the minimum value using only a fraction of the number of input sensors that we used in the main text in Figure \ref{fig:Nc scaling} and Figure \ref{fig:Nf scaling}. This indicates that the input functions we considered may be identified with a small number of points. Nevertheless, we find that training performance in terms of both memory consumption and training time is constant with the number of sensors. This is because training complexity is mainly data-dominated by the need to compute high-order derivatives with respect to a large number of collocation points for evaluation of the physics loss, as discussed in Section \ref{sec:tca}.

\begin{figure}
    \centering
    \includegraphics[width=1\textwidth]{figure3_varying_bd.png}
    \caption{Performance comparison of PI-DeepONet and SepONet with varied number of input function sensors (branch input dimension). Note that we fix $N_f=20$ for all experiments, $N_c=32^2$ for (a)-(c), and $N_c=16^3$ for (d).}
    \label{fig:scaling bd}
\end{figure}

 \section{Complete Solution to Separation of Variables Example \eqref{eq:sov}}\label{app:sov}
 Recall the linear PDE system treated in Section \ref{sec:sov}:
 \begin{equation}
     \mathcal{M}[t]s(y)=\mathcal{L}_1[y_1]s(y) + \cdots + \mathcal{L}_d[y_d] s(y),
 \end{equation}
 where $\mathcal{M}[t]=\frac{d}{dt}+h(t)$ is a first order differential operator of $t$, and $\mathcal{L}_1[y_1],...,\mathcal{L}_d[y_d]$ are linear second order ordinary differential operators of their respective variables $y_1,...,y_d$ only. Furthermore, assume we are provided Robin boundary conditions in each variable and separable initial condition $s(t=0,y_1,\ldots,y_d)=\prod_{n=1}^d\phi_n(y_n)$ for functions $\phi_n(y_n)$ that satisfy the boundary conditions. 

 Assuming a separable solution exists, $s(y)=T(t)Y_1(y_1)\cdots Y_d(y_d)$, the PDE can be decomposed in the following form:
 \begin{equation}
     \frac{\mathcal{M} T(t)}{T(t)} = \frac{\mathcal{L}_1 Y_1(y_1)}{Y_1(y_1)} + \cdots + \frac{\mathcal{L}_d Y_d(y_d)}{Y_d(y_d)},
 \end{equation}
 where it is apparent that each term in the sequence is a constant, since they are each only functions of a single variable. Consequently, we may solve each of the $\mathcal{L}_n$ terms independently using Sturm-Liouville theory. After we have found the associated eigenfunctions ($Y_n^{k_n}$) and eigenvalues ($\lambda_n^{k_n}$), we may manually integrate the left-hand side. Finally, we may decompose the separable initial condition into a product of sums of the orthonormal basis functions (eigenfunctions) of each variable. The resulting solution is given by
 \begin{equation}
     \begin{gathered}
     s(y) = s(t,y_1,\ldots, y_d) = \sum_{k=(k_1,\ldots,k_d)} B_k T^k(t)\prod_{n=1}^d Y_n^{k_n}(y_n)\label{eq:SL}, \\
     T^k(t)\coloneq T^{(k_1,...,k_d)}(t)=\exp\left(-\int_0^t h(\tau)d\tau+t\sum_{n=1}^d\lambda_n^{k_n}\right), \\
     B_k\coloneq B_{(k_1,\ldots,k_d)} = \prod_{n=1}^d\frac{\langle Y_n^{k_n}(y_n), \phi_n(y_n)\rangle_n}{\langle Y_n^{k_n}(y_n), Y_n^{k_n}(y_n)\rangle}_n,\quad
     \lambda_{n}^{k_n}=\frac{\mathcal{L}_n Y_n^{k_n}(y_n)}{Y_n^{k_n}(y_n)},\\ 
     \quad n=1,...,d,\quad k_n=1,2,\ldots,\infty.
     \end{gathered}
 \end{equation}
 Here, $k=(k_1,\ldots,k_d)$, where $k_n\in\{1,2,...,\infty\}, \forall n\in\{1,...,d\}$, is a lumped index that counts over all possible products of eigenfunctions $Y_n^{k_n}$ with associated eigenvalues $\lambda_n^{k_n}$. $\langle \cdot \rangle_n$ is an appropriate inner product associated with the separated Hilbert space of the $\mathcal{L}_n$-th operator. To obtain equation \eqref{eq:sov} in the main manuscript, one only need to break up the sum over all $k_n$ indices into a single ordered index.

\section{Additional Experiments}
  \subsection{(2+1)-dimensional Navier-Stokes Equation}
  \label{NS_example}
 SepONet’s memory-efficient and fast-training advantages allow it to tackle more challenging physics-informed operator learning problems, which PI-DeepONet may struggle to train on due to resource constraints. As an example, we consider a (2+1)-dimensional Navier-Stokes equation, previously investigated in the context of PINN \citep{cho2024separable,wang2024respecting}:
\begin{equation}
\begin{aligned}
&\partial_t \omega + s \cdot \nabla \omega = 0.01 \Delta \omega,  \quad \boldsymbol{x} \in [0, 2\pi]^2, \, t \in \Gamma, \\
&\nabla \cdot s = 0, \quad \boldsymbol{x} \in [0, 2\pi]^2, \, t \in \Gamma, \\
&\omega(\boldsymbol{x}, 0) = \omega_0(\boldsymbol{x}), \, s(\boldsymbol{x}, 0) = s_0(\boldsymbol{x}), \quad \boldsymbol{x} \in [0, 2\pi]^2,
\end{aligned}
\label{ns eq}
\end{equation}
where \( s = (s_x, s_y) \in \mathbb{R}^{2} \) is the velocity field, \( \boldsymbol{x} = (x, y) \) denotes 2D spatial variables, $\Gamma=[0,T]$ is the time window, and \( \omega = \nabla \times s= \partial_x s_y - \partial_y s_x \) is the vorticity. We aim to learn the solution operator that maps the initial velocity and vorticity field \( u(\boldsymbol{x}) = (s_0(\boldsymbol{x}), \omega_0(\boldsymbol{x})) \in \mathbb{R}^{3} \) to the solution \( s(\boldsymbol{x}, t) \in \mathbb{R}^{2} \) using SepONet, parameterized by \( \boldsymbol{\theta} = (\psi, \phi_1, \phi_2, \phi_3) \), which represents all the trainable parameters in the branch and trunk nets. The vector-valued velocity is approximated as:
\begin{align}
     \begin{split}
     &s_x(u)(\boldsymbol{x}, t) = \sum_{k=1}^{r} \beta_k \prod_{n=1}^3 \tau_{n,k}, \\
     &s_y(u)(\boldsymbol{x}, t) = \sum_{k=r}^{2r} \beta_k \prod_{n=1}^3 \tau_{n,k},
     \end{split}
\label{NS SepOnet form}
\end{align}
where $r$ denotes the rank, \( \beta_k = b_{\psi}(\mathcal{E}(u))_k \) is the \( k \)-th output of the branch net, and \( \tau_{1,k} = t_{\phi_1}^1(t)_k, \tau_{2,k} = t_{\phi_2}^2(x)_k, \tau_{3,k} = t_{\phi_3}^3(y)_k \) denote the \( k \)-th outputs of the three trunk nets.

\paragraph{Loss function} The physics loss for this problem are defined as:
\begin{equation}
\label{total ns loss}
\begin{aligned}
\mathcal{L}_{physics} &= \mathcal{L}_{residual} + 5000\mathcal{L}_{div} + 10000\mathcal{L}_{initial}.
\end{aligned}
\end{equation}
The specific loss terms are:
\begin{equation}
\label{ns loss terms}
\begin{aligned}
\mathcal{L}_{residual} &= \frac{1}{N_f N_r} \sum_{i=1}^{N_f} \sum_{j=1}^{N_r} \left| \frac{\partial (\nabla \times G_{\boldsymbol{\theta}}(u^{(i)})(\boldsymbol{x}_r^{(j)}, t_r^{(j)}))}{\partial t_r^{(j)}} + s^{(i)} \cdot \nabla (\nabla \times G_{\boldsymbol{\theta}}(u^{(i)})(\boldsymbol{x}_r^{(j)}, t_r^{(j)})) \right. \\
& \quad \quad \quad \quad \quad \left.-0.01 \Delta (\nabla \times G_{\boldsymbol{\theta}}(u^{(i)})(\boldsymbol{x}_r^{(j)}, t_r^{(j)})) \right|^2, \\
\mathcal{L}_{div} &= \frac{1}{N_f N_r} \sum_{i=1}^{N_f} \sum_{j=1}^{N_r} \left| \nabla \cdot G_{\boldsymbol{\theta}}(u^{(i)})(\boldsymbol{x}_r^{(j)}, t_r^{(j)}) \right|^2, \\
\mathcal{L}_{initial} &= \frac{1}{N_f N_I} \sum_{i=1}^{N_f} \sum_{j=1}^{N_I} \left( \left| \nabla \times G_{\boldsymbol{\theta}}(u^{(i)})(\boldsymbol{x}_I^{(j)}, 0) - \omega_0^{(i)}(\boldsymbol{x}_I^{(j)}) \right|^2 + \left|  G_{\boldsymbol{\theta}}(u^{(i)})(\boldsymbol{x}_I^{(j)}, 0) - s_0^{(i)}(\boldsymbol{x}_I^{(j)}) \right|^2 \right).
\end{aligned}
\end{equation}

Note that periodic boundary conditions are enforced by applying the following positional encoding to the spatial variables:
\begin{equation}
    \gamma(x)=[1, \sin (x), \sin (2 x), \sin (3 x), \sin (4 x), \sin (5 x), \cos (x), \cos (2 x), \cos (3 x), \cos (4 x), \cos (5 x)]^{\top}.
\end{equation}

\paragraph{SepONet architecture} The encoder \( \mathcal{E} \) maps the initial condition \( u(\boldsymbol{x}) \) to its point-wise evaluations at \( 128 \times 128 \times 3 \) sensors on a uniform \( 128 \times 128 \) grid over \( [0, 2\pi]^2 \). The branch network \( b_{\psi} \) is a CNN, starting with a $1 \times 1$ convolution that increases the channels from 3 to 16, followed by four residual blocks. Each residual block consists of two $3 \times 3$ convolutions with GeLU activations; the first convolution in each block uses a stride of 2 to halve the spatial dimensions while doubling the number of channels. Skip connections employ $1 \times 1$ convolutions with a stride of 2 whenever there is a change in dimension. After flattening, a fully connected layer produces a vector of dimension \( 2r \). 

The trunk networks \( t_{\phi_n} \) are 7-layer modified MLPs \citep{wang2021understanding}, each with 128 neurons per hidden layer, an input size of 11, and a rank/output size of 256/512, using TanH activations. The initial velocities are sampled from a Gaussian random field with a maximum velocity of 5.

\paragraph{Training settings} We consider learning the solution operator within two time windows: \( T = 0.1 \) and \( T = 1 \). Two separate SepONet models were trained, each for 100,000 iterations using the Adam optimizer, to minimize the physics loss \eqref{total ns loss} within their respective time windows. The number of residual points was set to \( N_r = 256 \times 256 \times 32  \) (\(  N_x \times N_y \times N_t\)), obtained by randomly sampling 256, 256, and 32 points along \( x \), \( y \), and \( t \) axes, respectively, and constructing a mesh grid via the tensor product. The number of initial points, \( N_I = 128 \times 128 \), corresponds to a uniform mesh grid. Initial velocities were sampled from a Gaussian random field with a maximum velocity of 5. Both the initial conditions and collocation points were resampled every 100 iterations. We varied the number of input functions $N_f$ to see the scaling as data is increased.

\paragraph{Evaluation} The model was evaluated on 100 unseen initial conditions, sampled from the same Gaussian random field. Reference solutions for both time windows were obtained using the JAX-CFD solver \citep{kochkov2021machine} on a uniform \( 128 \times 128 \times 10\) (\( N_x \times N_y \times N_t \)) grid.

\paragraph{Results} The results for two time windows are shown in Figure \ref{fig:NSresults}. We find that for $T=0.1$ we can achieve very low relative $\ell_2$ error of 2.5\%. For $T=1$ the error only scales to 40\%. We think that improving the error for the longer time scale represents an interesting application direction for future work. Our architecture and implementation choices were not optimized for this example.

\begin{figure}
    \includegraphics[width=0.8\textwidth]{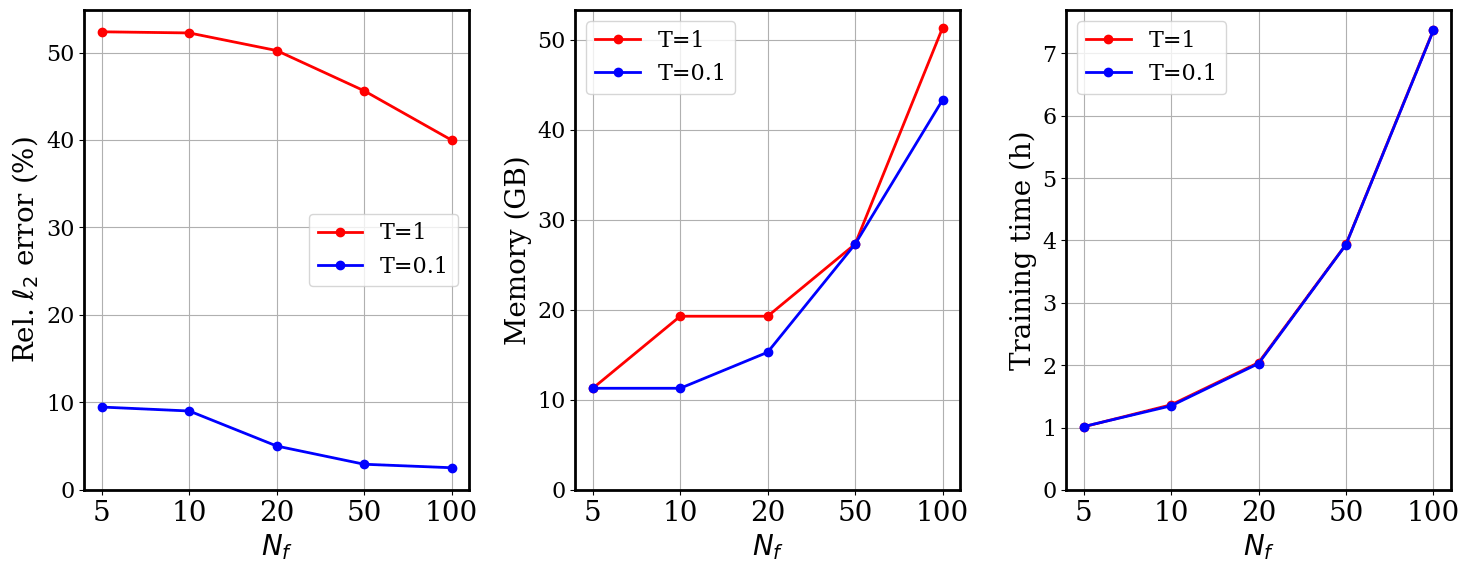}
    \centering
    \caption{Navier-Stokes equation results with SepONet. Here, we varies the number of input functions $N_f$ and kept the number of collocation points fixed. We consider two cases, $T=0.1$ and $T=1$, corresponding to the length of the time window.}
    \label{fig:NSresults}
\end{figure}

 \subsection{Linear Heat Equation Example} \label{Linear_Heat_example}
SepONet is motivated by the classical method of separation of variables, which is often employed to solve linear partial differential equations (PDEs). To illustrate the connection between these approaches, consider the linear heat equation:
\begin{equation}
\begin{aligned}
& \frac{\partial s}{\partial t}=\frac{1}{ \pi^2} \frac{\partial^{2}s}{\partial x^{2}}, \quad(x, t) \in(0,1) \times (0,1], \\
& s(x, 0)=u(x), \quad x \in(0,1), \\
& s(0, t)=s(1, t)=0, \quad t \in(0,1).
\end{aligned}
\label{heat eq}
\end{equation}
The goal is to solve this equation for various initial conditions  u(x)  using both the separation of variables technique and the SepONet method, allowing for an intuitive comparison between the two.

\subsubsection{Separation of Variables Technique}
We seek a solution in the form:
\begin{equation}
    s(x, t) = X(x)T(t)
    \label{separation of var}
\end{equation}
for functions $X$, $T$ to be determined. Substituting \eqref{separation of var} into \eqref{heat eq} yields:
\begin{equation}
    \frac{X^{''}}{X} = -\lambda~~\text{and} ~~\frac{\pi^{2}T^{'}}{T} = -\lambda
\label{separated eq}
\end{equation}
for some constant $\lambda$. To satisfy the boundary condition, $X$ must solve the following eigenvalue problem:
\begin{equation}
    \begin{aligned}
        X^{''}(x) +  \lambda X(x)&=0, \quad x \in (0,1),\\
        X(0) =X(1) &= 0,
    \end{aligned}
\label{eigen value problem}
\end{equation}
and $T$ must solve the ODE problem:
\begin{equation}
    T^{'}(t) = -\frac{\lambda}{\pi^{2}} T(t).
\label{ode problem}
\end{equation}

The eigenvalue problem \eqref{eigen value problem} has a sequence of solutions:
\begin{equation}
    \lambda_{k} = (k\pi)^{2},\quad X_{k}(x) = \sin(k\pi x), \quad \text{for}~~k=1,2,\dots
\end{equation}
For any $\lambda$, the ODE solution for $T$ is $T(t) = Ae^{-\frac{\lambda}{\pi^{2}}t}$ for some constant $A$. Thus, for each eigenfunction $X_k$ with corresponding eigenvalue $\lambda_k$, we have a solution $T_k$ such that the function
\begin{equation}
    s_{k}(x,t) = X_{k}(x)T_{k}(t)
\end{equation}
will be a solution of \eqref{separated eq}. In fact, an infinite series of the form
\begin{equation}
    s(x,t) = \sum_{k=1}^{\infty} X_{k}(x)T_{k}(t) = \sum_{k=1}^{\infty} A_{k} e^{-k^{2}t}\sin(k\pi x)
    \label{sol heat eq}
\end{equation}
will also be a solution satisfying the differential operator and boundary condition of the heat equation \eqref{heat eq} subject to appropriate convergence assumptions of this series. Now 
let $s(x,0)=u(x)$, we can find coefficients:
\begin{equation}
A_k=2 \int_0^1 \sin \left(k \pi x\right) u(x) d x
\end{equation}
such that \eqref{sol heat eq} is the exact solution of the heat equation \eqref{heat eq}.

\subsubsection{SepONet Method}
In this section, we apply the SepONet framework to solve the linear heat equation \eqref{heat eq} and compare the basis functions it learns with those derived from the classical separation of variables method. Recall that a SepONet, parameterized by $\boldsymbol{\theta}$, approximates the solution operator of \eqref{heat eq} as follows:
\begin{equation}
 G_{\boldsymbol{\theta}}(u)(x,t) = \sum_{k=1}^{r} \beta_{k}(u(x_{1}), u(x_{2}), \ldots, u(x_{128})) \tau_{k}(t) \zeta_{k}(x),
\end{equation}
where $x_1, x_2, \ldots, x_{128}$ are 128 equi-spaced sensors in $[0, 1]$, $\beta_{k}$ is the $k$-th output of the branch net, and the basis functions $\tau_{k}(t)$ and $\zeta_{k}(x)$ are the $k$-th outputs of two independent trunk nets. 

\paragraph{Training settings} The branch and trunk networks each have a width of 5 and a depth of 50. To determine the parameters $\boldsymbol{\theta}$, we trained SepONet for 80,000 iterations, minimizing the physics loss. Specifically, we set $\lambda_{I}=20$, $\lambda_{b}=1$, $N_{f}=100$, and $N_{c}=128^2$ in the physics loss. The training functions (initial conditions) $\left\{u^{(i)}\right\}_{i=1}^{N_f}$ were generated from a Gaussian random field (GRF) $\sim \mathcal{N}\left(0,25^2\left(-\Delta+5^2 I\right)^{-4}\right)$ using the Chebfun package \citep{driscoll2014chebfun}, ensuring zero Dirichlet boundary conditions. Additional training settings are detailed in Appendix B.2 of the main text.

\paragraph{Evaluation} We evaluated the model on 100 unseen initial conditions sampled from the same GRF, using the forward Euler method to obtain reference solutions on a $128 \times 128$ uniform spatio-temporal grid. 

\paragraph{Impact of the rank $r$} Since $\tau_{k}(t)$ and $\zeta_{k}(x)$ are independent of the initial condition $u$, learning an expressive and rich set of basis functions is crucial for SepONet to generalize to unseen initial conditions. To investigate the impact of the rank $r$ on the generalization error, we trained SepONet with ranks ranging from 1 to 50. The mean RMSE between SepONet's predictions and the reference solutions over 100 unseen test initial conditions was reported. For comparison, we also computed the mean RMSE of the truncated analytical solution at rank $r$ for $r$ from 1 to 15. The results are presented in Figure \ref{fig:rmse_compare}.

As $r$ increases, the truncated analytical solution quickly converges to the reference solution. The nonzero RMSE arises due to numerical errors in computing the coefficients  $A_k$ and the inherent inaccuracies of the forward Euler method used to generate the reference solution. For SepONet, we observed that when $r=1, 2$, the mean RMSE aligns closely with that of the truncated solution. However, as $r$ increases beyond that point, the error decreases more gradually, stabilizing around $r=50$. This indicates that SepONet may not necessarily learn the exact same basis functions as those from the truncated analytical solution. Instead, a higher rank $r$ allows SepONet to develop its own set of basis functions, achieving similar accuracy to the truncated solution.

\paragraph{SepONet basis functions} The learned basis functions for different ranks $r$ are visualized in Figure \ref{fig:seponet basis r = 1} to Figure \ref{fig:seponet basis r = 50}.

At $r = 1$, SepONet learns basis functions that closely resemble the first term of the truncated solution. For $r = 2$, the learned functions are quite similar to the first two terms of the truncated series. However, when $r = 5$, the basis functions diverge from the truncated solution series, although some spatial components still resemble sinusoidal functions and the temporal components remain monotonic. As r increases further, SepONet continues to improve in accuracy, though the learned basis functions increasingly differ from the truncated series, confirming SepONet’s ability to accurately approximate the solution using its own learned basis functions.

\begin{figure}[ht]
    \centering
    \includegraphics[width=0.5\textwidth]{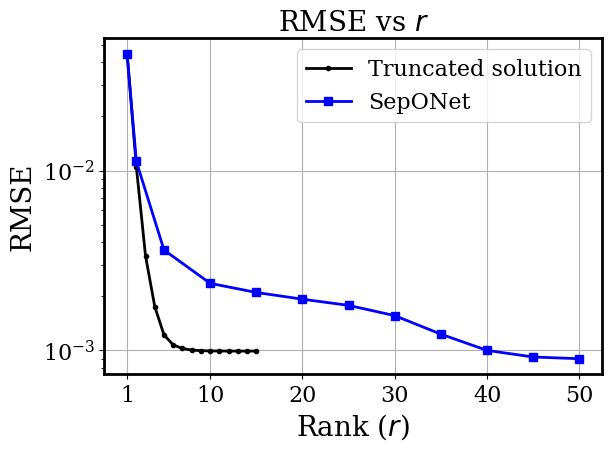}
    \caption{Comparison of RMSE between the truncated analytical solution and SepONet predictions for varying rank $r$. The truncated analytical solution quickly converges, while SepONet shows a slower decay in error, converging around $r=50$.}
    \label{fig:rmse_compare}
\end{figure}

\begin{figure}[ht]
    \centering
    \includegraphics[width=0.5\textwidth]{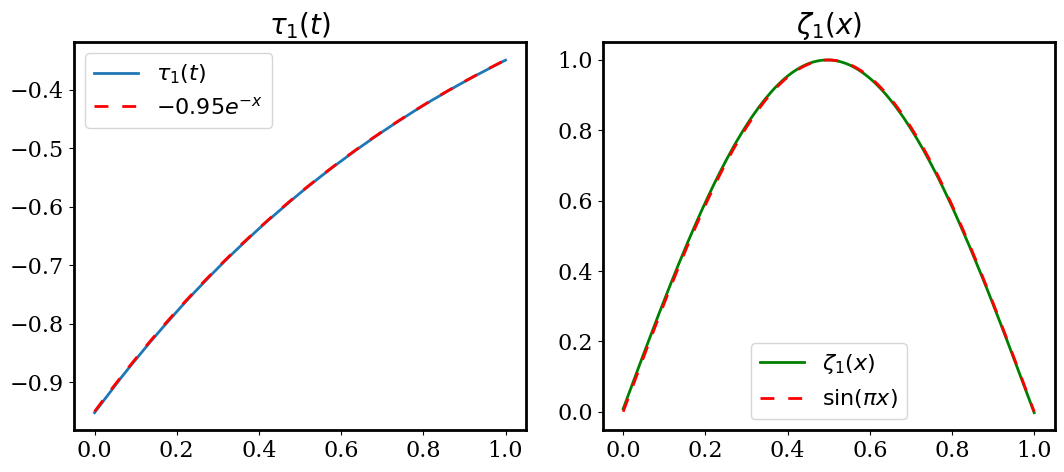}
    \caption{Learned basis functions $\tau_{k}(t)$ and $\zeta_{k}(x)$ for $r=1$. SepONet learns the same basis functions as the first term of the truncated solution.}
    \label{fig:seponet basis r = 1}
\end{figure}

\begin{figure}[ht]
    \centering
    \includegraphics[width=0.8\textwidth]{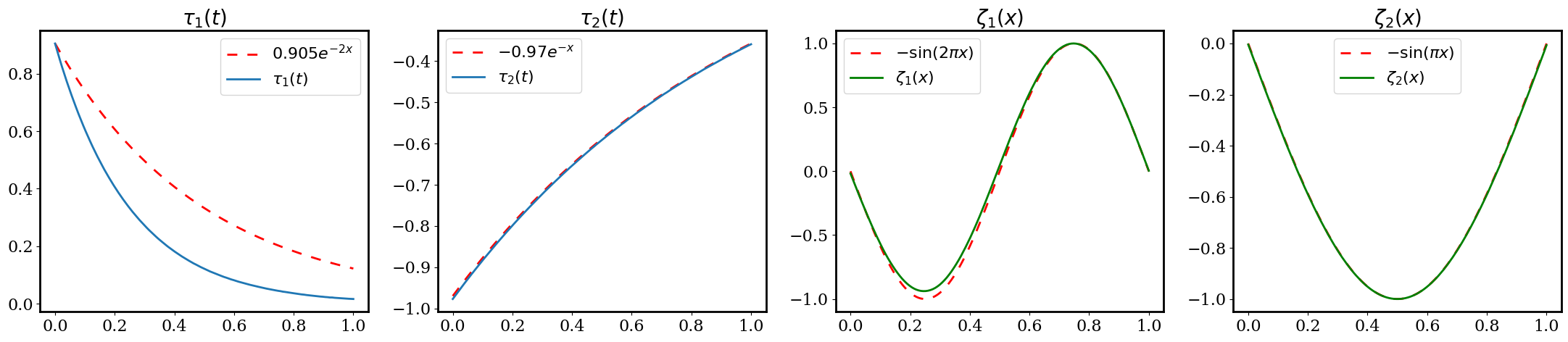}
    \caption{Learned basis functions $\tau_{k}(t)$ and $\zeta_{k}(x)$ for $r=2$. SepONet learns very similar basis functions as the first two terms of the truncated solution.}
    \label{fig:seponet basis r = 2}
\end{figure}

\begin{figure}[ht]
    \centering
    \includegraphics[width=0.8\textwidth]{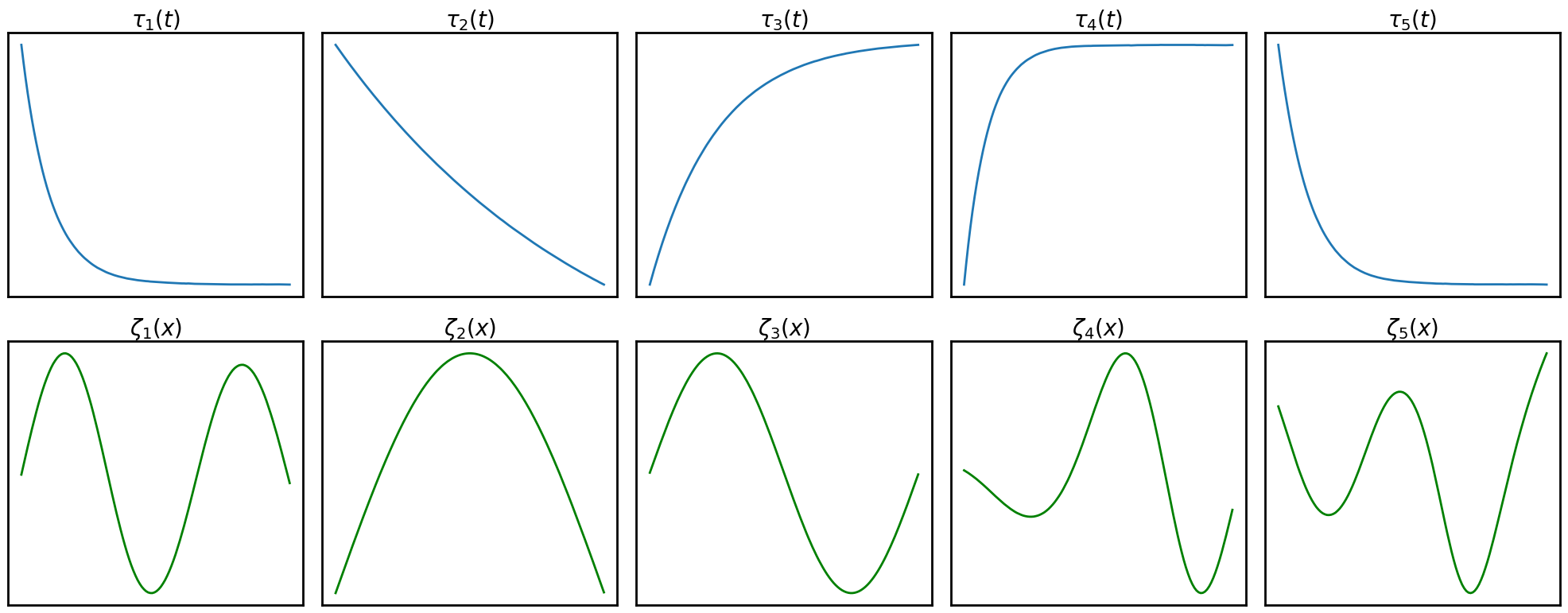}
    \caption{Learned basis functions $\tau_{k}(t)$ and $\zeta_{k}(x)$ for $r=5$. }
    \label{fig:seponet basis r = 5}
\end{figure}

\begin{figure}[ht]
    \centering
    \includegraphics[width=0.8\textwidth]{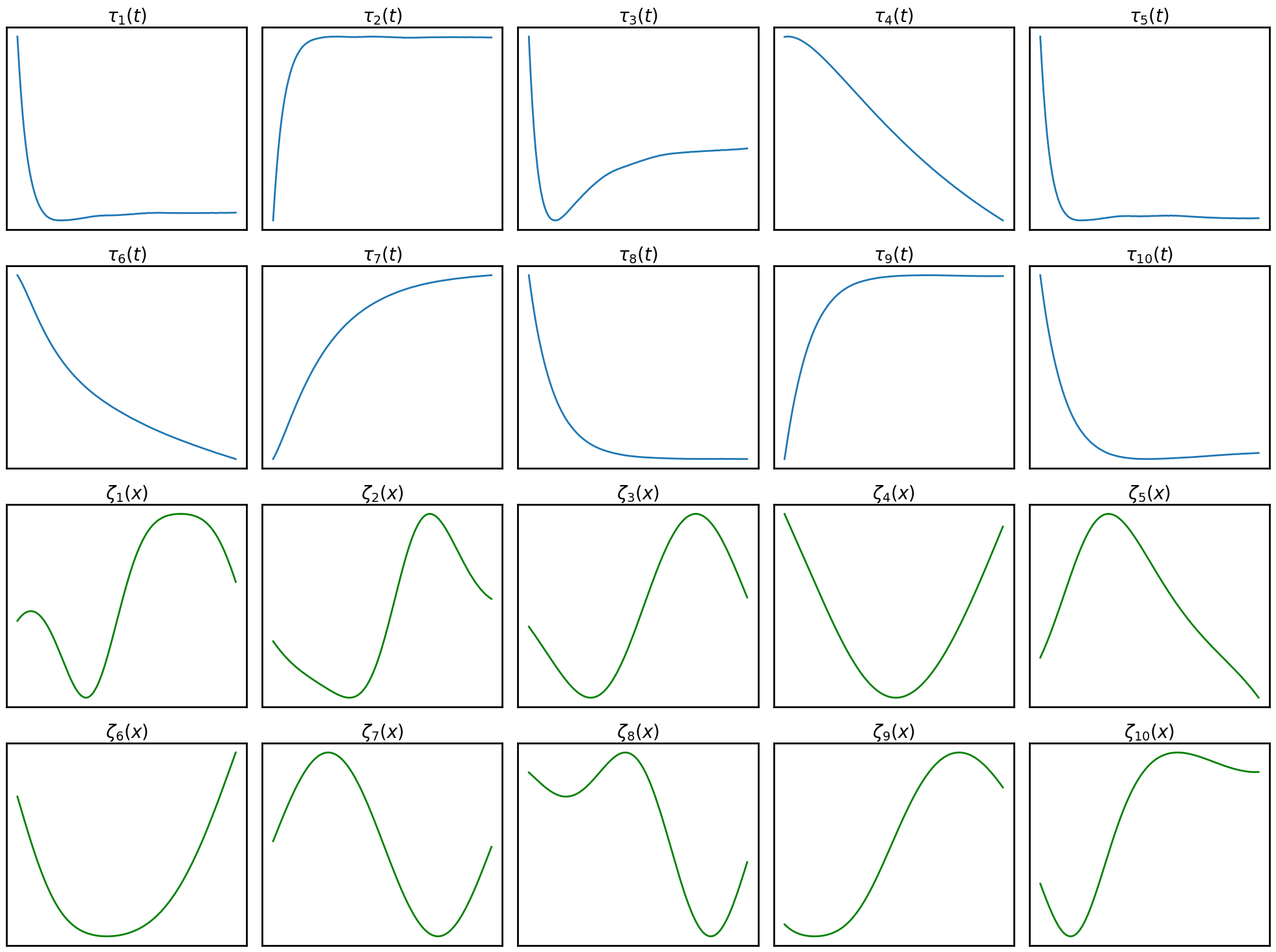}
    \caption{Learned basis functions $\tau_{k}(t)$ and $\zeta_{k}(x)$ for $r=10$. }
    \label{fig:seponet basis r = 10}
\end{figure}

\begin{figure}[ht]
    \centering
    \includegraphics[width=\textwidth]{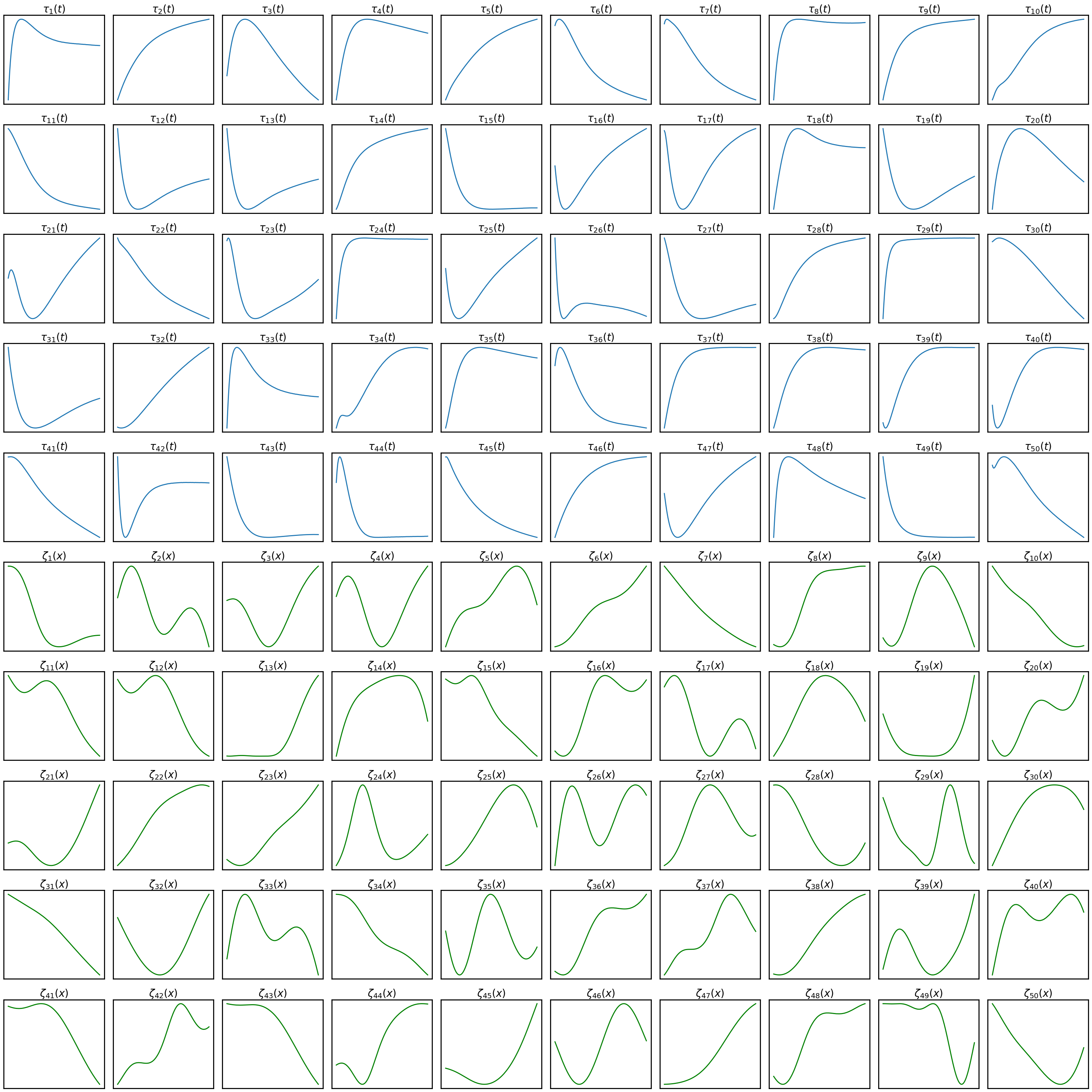}
    \caption{Learned basis functions $\tau_{k}(t)$ and $\zeta_{k}(x)$ for $r=50$.}
    \label{fig:seponet basis r = 50}
\end{figure}

\section{Visualization of SepONet Predictions} 
In this section, we showcase the performance of trained SepONets in predicting solutions for PDEs under previously unseen configurations. The SepONets were trained using $N_f = 100$ and $N_c = 128^d$, where $d$ denotes the dimensionality of the PDE problem. The prediction results are presented in Figure \ref{DR pred} to Figure \ref{diffusion pred}.

\begin{figure}
    \centering
    \includegraphics[width=\textwidth]{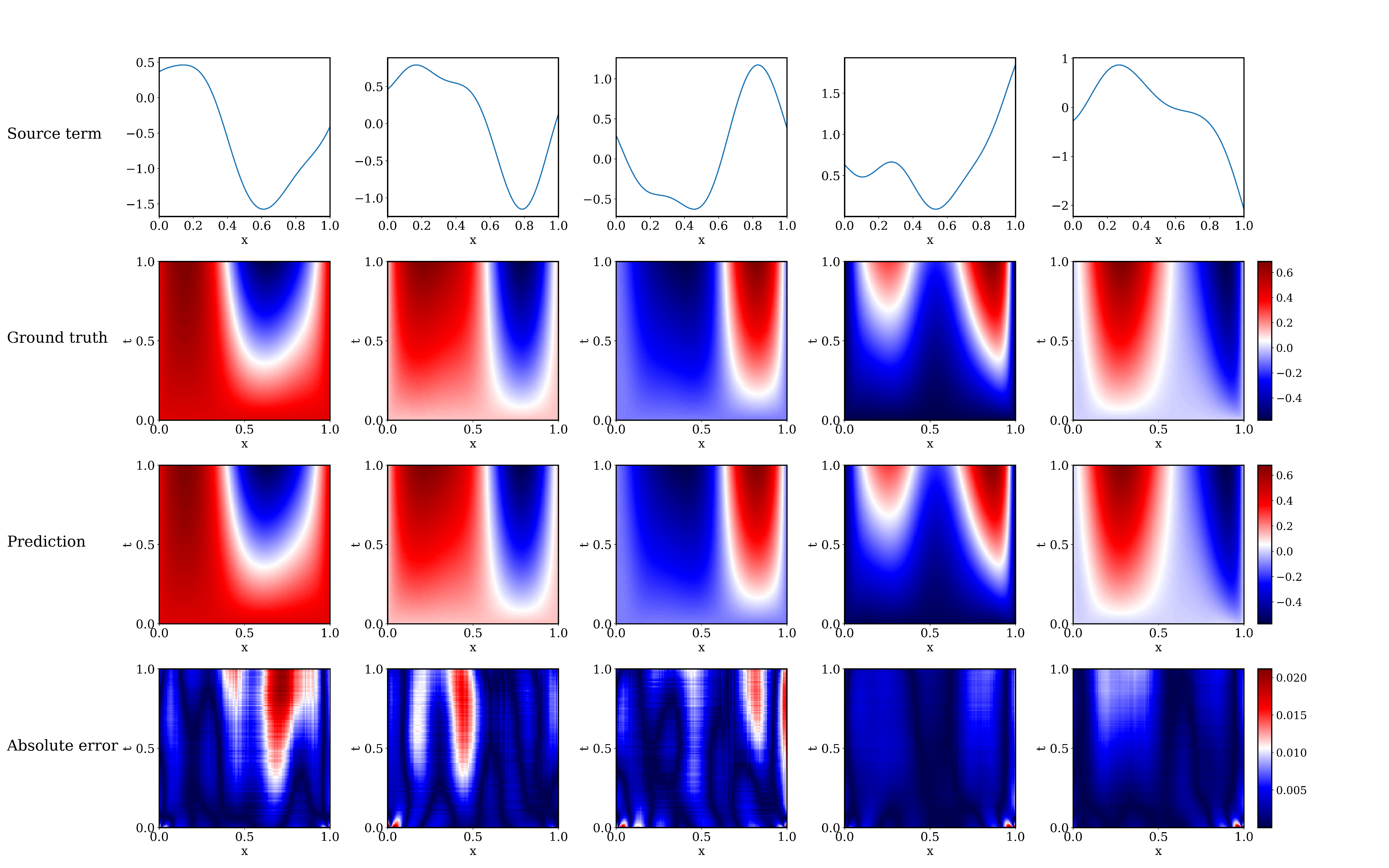}
    \caption{(1+1)-$d$ Diffusion-reaction equation.}
    \label{DR pred}
\end{figure}

\begin{figure}
    \centering
    \includegraphics[width=\textwidth]{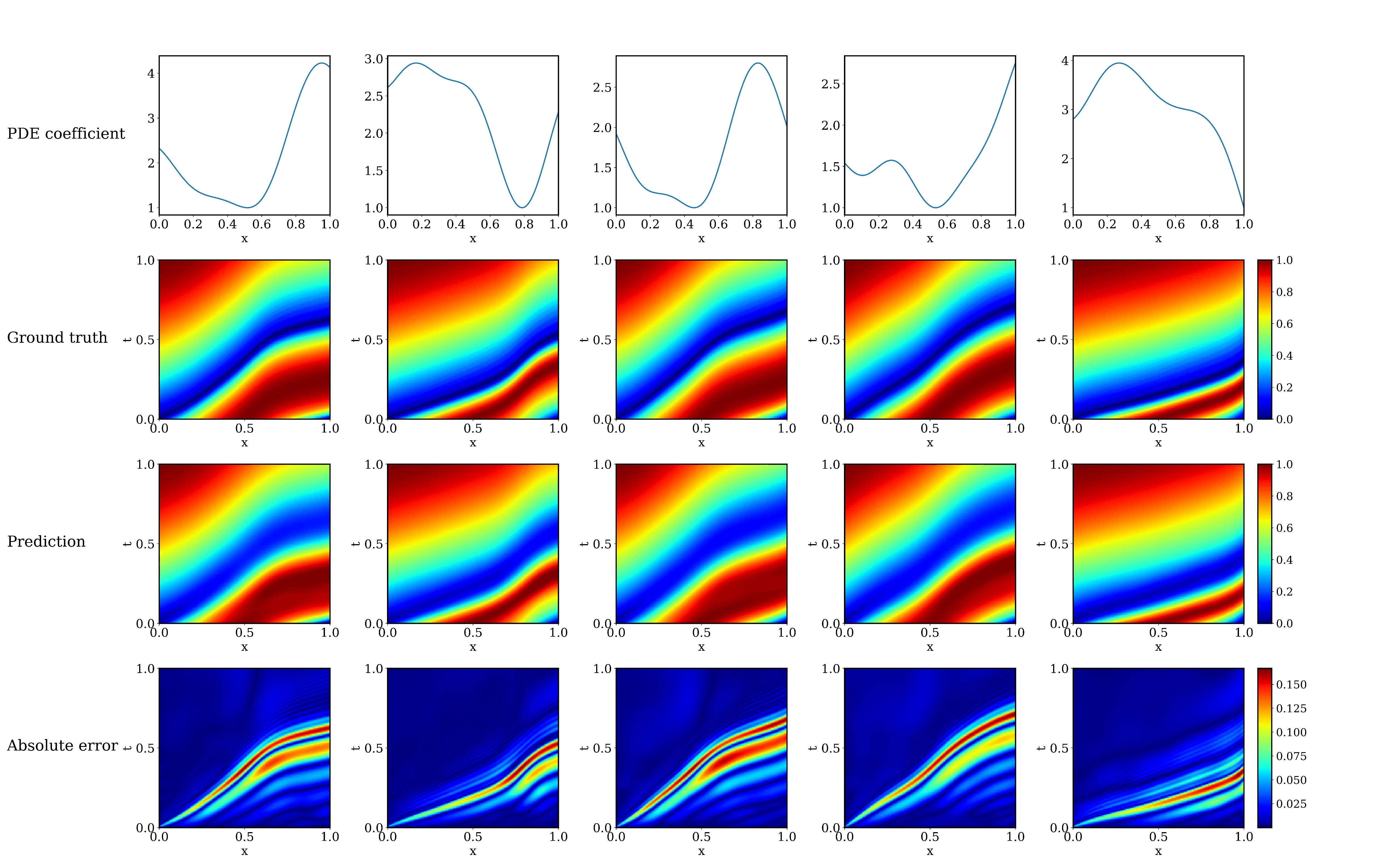}
    \caption{(1+1)-$d$ Advection equation.}
    \label{adv pred}
\end{figure}

\begin{figure}
    \centering
    \includegraphics[width=\textwidth]{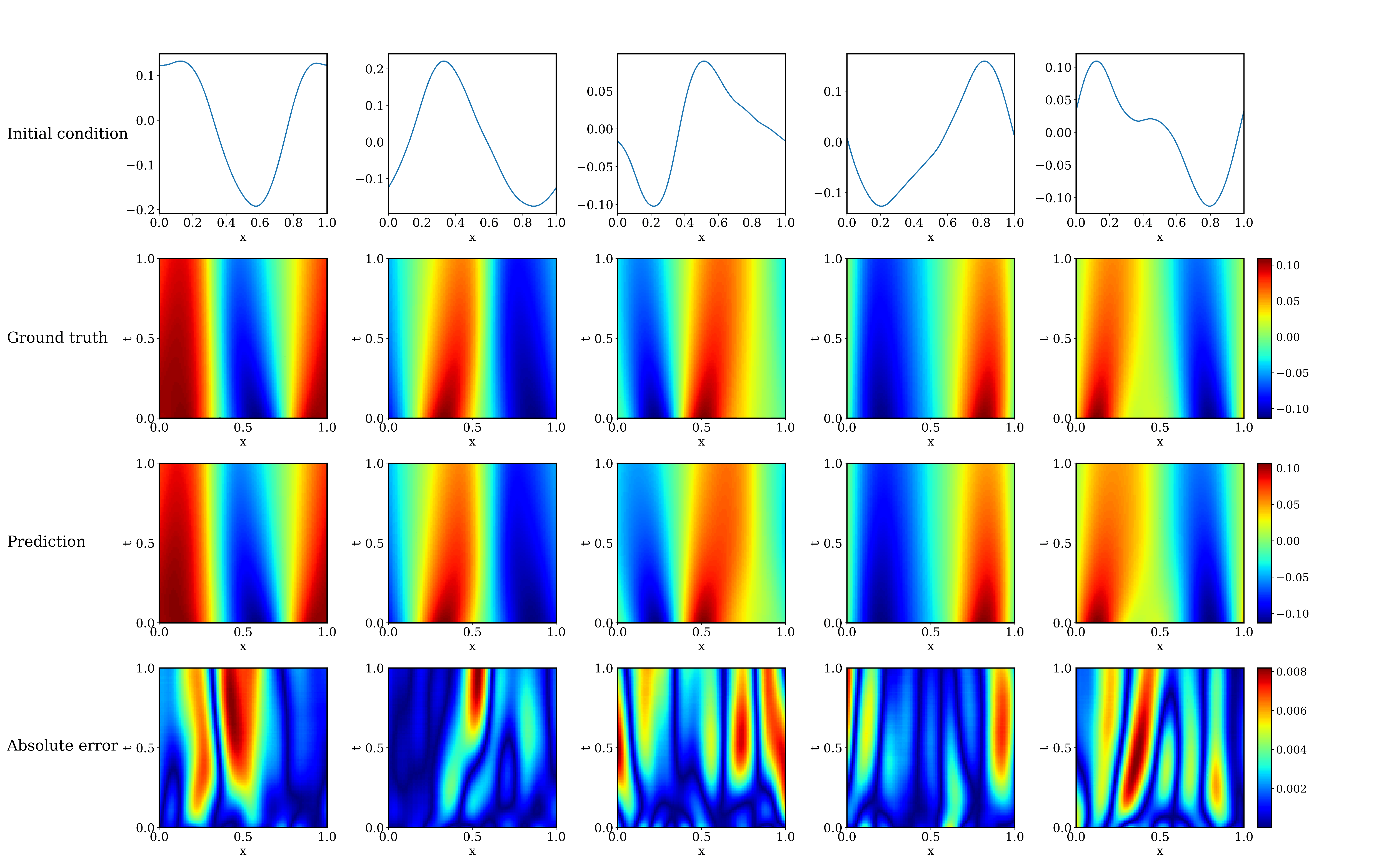}
    \caption{(1+1)-$d$ Burgers' equation.}
    \label{burgers pred}
\end{figure}

\begin{figure}
    \centering
    \includegraphics[width=\textwidth]{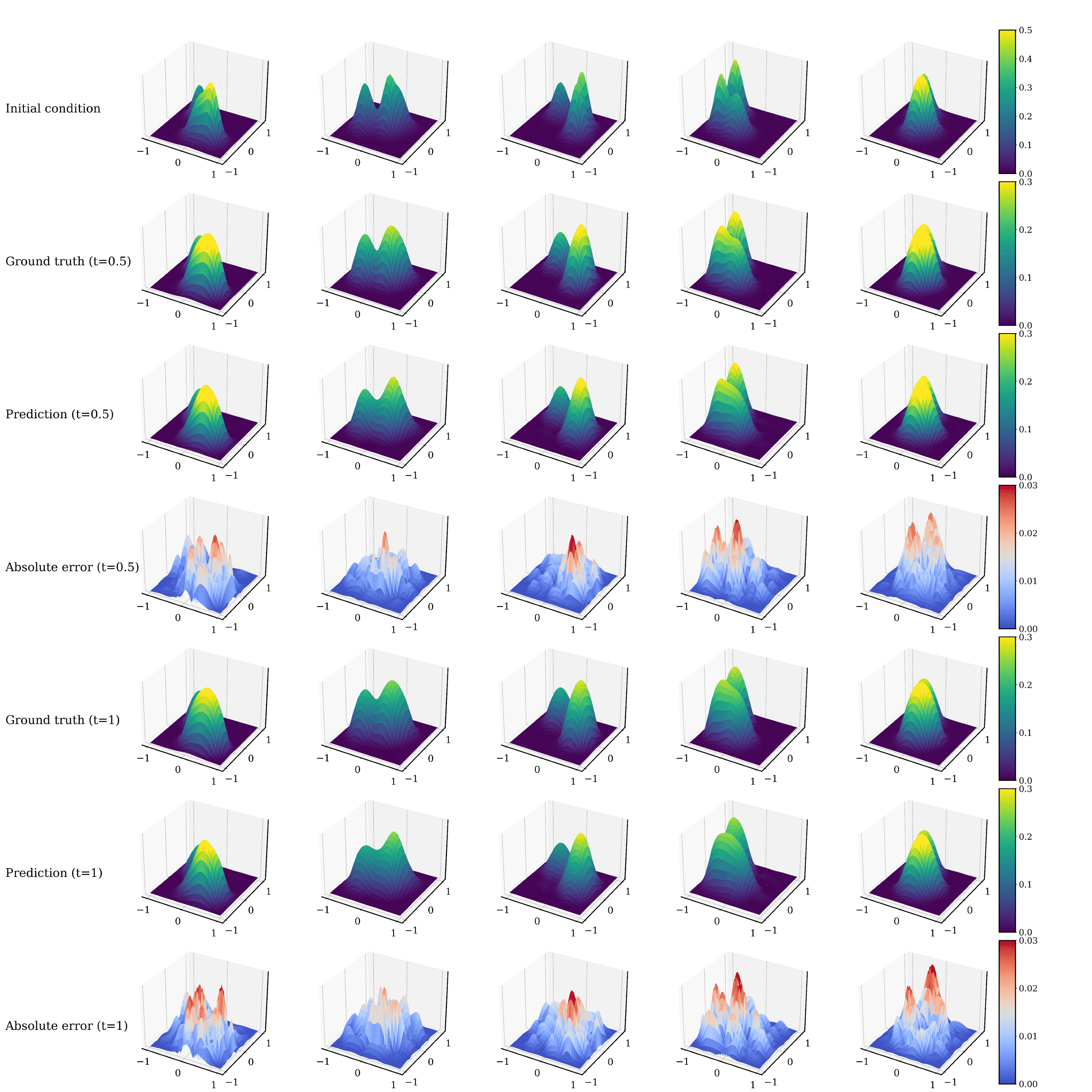}
    \caption{(2+1)-$d$ Nonlinear diffusion equation. Two snapshots at $t=0.5$ and $t=1$ are presented.}
    \label{diffusion pred}
\end{figure}

\end{document}